\PassOptionsToPackage{table}{xcolor} %
\documentclass{article} %
\usepackage{iclr2027_conference,times}

\usepackage[utf8]{inputenc} %
\usepackage[T1]{fontenc}    %
\usepackage{hyperref}       %
\usepackage{url}            %
\usepackage{booktabs}       %
\usepackage{amsfonts}       %
\usepackage{nicefrac}       %
\usepackage{microtype}      %
\usepackage{xcolor}         %

\title{Bridging Stochastic Flow Maps and Boltzmann Generators with Normalizing Flows}

\usepackage{enumitem}
\usepackage{amsmath}
\usepackage{amssymb}
\usepackage{graphicx}
\usepackage{comment}
\usepackage[algo2e,ruled,vlined,linesnumbered]{algorithm2e}
\usepackage{multirow}
\usepackage{array}
\definecolor{tableblue}{HTML}{DDEBF7}
\usepackage{titletoc}
\usepackage{wrapfig}
\usepackage{tikz}
\usetikzlibrary{positioning, arrows.meta, fit, backgrounds, calc}
\definecolor{parA}{HTML}{1F77B4} %
\definecolor{parB}{HTML}{E8820C} %
\definecolor{parC}{HTML}{2CA02C} %

\usepackage{amsmath,amsfonts,bm,amssymb,nicefrac,algorithm}
\usepackage{amsthm}
\usepackage{cancel}
\usepackage[normalem]{ulem}

\usepackage{booktabs}          %
\usepackage{graphicx}          %
\usepackage{subcaption}        %
\usepackage{multirow}        %
\usepackage{stmaryrd}        %
\usepackage{yhmath}        %
\usepackage{hyperref}
\usepackage{aliascnt}
\usepackage[capitalize,noabbrev]{cleveref}

\crefname{algorithm}{Alg.}{Algs.}
\Crefname{algorithm}{Alg.}{Algs.}

\crefname{equation}{eq.}{eqs.}
\Crefname{equation}{Eq.}{Eqs.}

\newcommand{\R}{\mathbb{R}}

\newtheoremstyle{mythmstyle} 
{\topsep}    %
{\topsep}    %
{\itshape}   %
{0pt}        %
{\bfseries}  %
{}           %
{ }          %
{}           %
\theoremstyle{mythmstyle}
\newtheorem{theorem}{Theorem}

\newaliascnt{lemma}{theorem}
\newtheorem{lemma}[lemma]{Lemma}
\aliascntresetthe{lemma}
\crefname{lemma}{lemma}{lemmas}
\Crefname{lemma}{Lemma}{Lemmas}

\newaliascnt{corollary}{theorem}
\newtheorem{corollary}[corollary]{Corollary}
\aliascntresetthe{corollary}
\crefname{corollary}{corollary}{corollaries}
\Crefname{corollary}{Corollary}{Corollaries}

\newaliascnt{proposition}{theorem}
\newtheorem{proposition}[proposition]{Proposition}
\aliascntresetthe{proposition}
\crefname{proposition}{proposition}{propositions}
\Crefname{proposition}{Proposition}{Propositions}

\newaliascnt{remark}{theorem}
\newtheorem{remark}[remark]{Remark}
\aliascntresetthe{remark}
\crefname{remark}{remark}{remarks}
\Crefname{remark}{Remark}{Remarks}

\crefname{assumption}{assumption}{assumptions}
\Crefname{assumption}{Assumption}{Assumptions}

\newtheorem*{proposition*}{Proposition}
\newtheorem*{theorem*}{Theorem}
\newtheorem*{lemma*}{Lemma}

\newcommand{\TV}{\mathop{\mathrm{TV}}\nolimits}

\usepackage{physics}

\newcommand{\rmd}{\mathrm{d}}
\newcommand{\Idd}{\mathrm{I}_d}

\DeclareMathOperator{\diam}{diam}
\newcommand{\err}[1]{\color{gray}{\scriptscriptstyle \pm #1}}
\newcommand{\best}[1]{\textbf{#1}}
\newcommand{\second}[1]{\underline{#1}}
\newcolumntype{H}{>{\columncolor{gray!12}}c}

\newcommand{\ourmethodfull}{Normalizing Flow Flow Maps{}}
\newcommand{\ourmethod}{NF$^2$M{}}
\newcommand{\energyw}{\ensuremath{\mathcal{E}\text{-}\mathcal{W}_2}}
\newcommand{\torusw}{\ensuremath{\mathbb{T}\text{-}\mathcal{W}_2}}

\crefname{figure}{Fig.}{Figs.}
\Crefname{figure}{Fig.}{Figs.}
\crefname{table}{Tab.}{Tabs.}
\Crefname{table}{Tab.}{Tabs.}
\crefname{section}{Sec.}{Secs.}
\Crefname{section}{Sec.}{Secs.}
\crefname{equation}{eq.}{eqs.}
\Crefname{equation}{Eq.}{Eqs.}
\crefname{proposition}{Prop.}{Props.}
\Crefname{proposition}{Prop.}{Props.}
\crefname{theorem}{Thm.}{Thms.}
\Crefname{theorem}{Thm.}{Thms.}
\crefname{corollary}{Cor.}{Cors.}
\Crefname{corollary}{Cor.}{Cors.}
\crefname{appendix}{App.}{Apps.}
\Crefname{appendix}{App.}{Apps.}
\AddToHook{cmd/appendix/before}{%
    \crefalias{section}{appendix}%
    \crefalias{subsection}{appendix}%
}

\definecolor{nfmred}{HTML}{DA3D33}
\definecolor{nfmblue}{HTML}{2E78B8}
\definecolor{nfmpurple}{HTML}{7E3F9F}

\author{%
  \textbf{Louis Grenioux}$^{1}$\thanks{Correspondence to \texttt{lgrenioux@flatironinstitute.org}},~ \textbf{RuiKang OuYang}$^{2}$~\&~\textbf{Luhuan Wu}$^{1,3}$\\
  $^1$ Center for Computational Mathematics, Flatiron Institute, New York, NY, USA \\
  $^2$ Department of Engineering, University of Cambridge, Cambridge, UK  \\
  $^3$ Department of Applied Mathematics and Statistics, Johns Hopkins University, Baltimore, MD, USA
}

\iclrfinalcopy
\begin{document}

\maketitle

\begin{abstract}

Generating independent, equilibrium samples of molecular systems at scale remains a central obstacle in computational statistical mechanics. Boltzmann Generators address this by pairing a generative model with importance sampling to obtain consistent samples from the target distribution. We introduce \ourmethodfull{} (\ourmethod{}), which combines the strengths of recent stochastic flow maps with the tractability of classic normalizing flows to build a Boltzmann Generator. Unlike most methods, which correct the generative model only at the end, \ourmethod{} reweighs each denoising transition as generation proceeds, avoiding wasted compute on trajectories that are ultimately discarded. At each denoising step, a conditional normalizing flow proposes clean configurations given the current noisy state (a simpler task than sampling directly from the target) and its exact likelihood enables correcting each proposal toward the true denoising transition of the target Boltzmann distribution. This is in contrast to most existing methods, whose likelihoods are approximate or expensive to evaluate, undermining the statistical reliability of the correction. We establish consistency of the corrected transitions and bound how approximation errors propagate through the sampling chain. We evaluate \ourmethod{} on peptide systems, demonstrating improved sampling efficiency and sample quality.

\end{abstract}

\section{Introduction}

Sampling the Boltzmann distribution of molecular systems is central to statistical physics, underlying protein folding, ligand binding, conformational change, and molecular design \citep{liu2001monte,frenkel2001understanding,noe2009constructing,lindorff2011fast,buch2011high}. This remains hard because molecular energy landscapes are high-dimensional and rugged, with metastable states separated by large barriers. Classical approaches like MD and MCMC are asymptotically exact \citep{leimkuhler2015molecular,frenkel2001understanding,wirnsberger2020targeted}, but their local-update nature makes them mix slowly on such landscapes, producing correlated samples and requiring prohibitively long runs to capture transitions between metastable states.

Boltzmann Generators (BGs) take a different strategy \citep{noe2019boltzmann,muller2019neural}: a learned generative model produces plausible configurations independently and in parallel, which are then statistically corrected toward the true equilibrium distribution, e.g. via importance sampling (IS). This hinges on two requirements: proposal-target overlap and a tractable correction procedure—that existing generative backbones trade off differently. Normalizing flows (NFs) \citep{rezende2015variational,dinh2017density,noe2019boltzmann,papamakarios2021normalizing} dominate this role because their tractable likelihoods enable %
IS correction, but their invertibility constraints limit expressiveness and overlap. Bridging the NF to target gap via temperature-annealed interpolation \citep{tan2025scalable} instead suffers from mass teleportation on multimodal targets \citep{woodard2009sufficient,mate2023learning} and does not reliably outperform direct importance sampling \citep{tan2025scalable}. Continuous normalizing flows built from ODE trajectories \citep{chen2018neural,lipman2023flow} offer more expressive architectures, but their likelihoods require integrating the vector-field divergence, making correction costly and discretization-sensitive. Distilling these into few-step flow maps \citep{boffi2025buildconsistencymodellearning} sacrifices tractable likelihoods altogether. Subsequent work restores likelihoods for such few-step maps in two ways : FALCON \citep{rehman2025falconfewstepaccuratelikelihoods} computes them from the full Jacobian of the map, at the cost of memory overhead and sensitivity to the map's numerical invertibility, while F2D2 \citep{ai2026jointdistillationfastlikelihood} and SCALLOP \citep{ouyang2026fewstep} instead distill a likelihood-prediction head, gaining speed but risking error propagation from the learned likelihood into the correction step.

Diffusion models offer a different stochastic transport path, gradually perturbing clean samples via a tractable noising process and generating through learned, typically Gaussian, denoising transitions \citep{ho2020denoising,Song2021score}. These paths handle multimodal targets better than temperature annealing and have shown promise for BGs \citep{grenioux2026diffusionbasedannealedboltzmanngenerators}: although the endpoint density is intractable, the path density can be evaluated, enabling correction in an extended space \citep{Zhang2024efficient,zhang2025efficient}. The Gaussian approximation is accurate only for fine discretizations, however, creating a tradeoff between statistical performance and inference cost \citep{grenioux2026diffusionbasedannealedboltzmanngenerators}. Distributional diffusion and stochastic flow maps address this by modeling the full conditional distribution of clean samples given a noisy state, enabling expressive transitions over larger stepsizes \citep{debortoli2025distributionaldiffusionmodelsscoring,potaptchik2026metaflowmapsenable,holderrieth2026diamondmapsefficientreward} but the resulting models are typically implicit, lacking the likelihoods BG correction needs.

We close this gap with \emph{Normalizing Flow Flow Maps} (\ourmethod{}), a BG that generates samples along a diffusion transport path in a few steps using denoising kernels estimated via importance sampling. Like stochastic flow maps, \ourmethod{} models the conditional distribution $q^\theta(x_0 | x_t)$ of clean data given a noisy state, but it uses a conditional NF rather than an implicit model, yielding a tractable density.

This tractability lets us build an importance-sampling estimator of the true denoising kernel, exact as the number of particles grows: at each step, we propose candidates from $q^\theta(x_0 | x_t)$ and reweight them against the tractable conditional target $p(x_0 | x_t)$ (which is the product of the Boltzmann density and the diffusion noising kernel) yielding a Rao-Blackwellized mixture estimator that we sample to denoise the current state. Iterating this gradually transports noise toward the Boltzmann target, unlike NF- and CNF-based BGs that bridge the gap in one step. By stochastic localization \citep{grenioux2024stochastic}, $p(x_0 | x_t)$ 
loses its multimodality and concentrates as $t \to 0$ which makes it easier to sample from and simplifies the conditional NF's approximation target.

Our main contributions are:
\begin{itemize}[leftmargin=*]
    \item We propose \emph{Normalizing Flow Flow Maps} (\ourmethod{}), a %
    Boltzmann Generator that parameterizes a stochastic flow map with a normalizing flow, combining tractable densities with few-step generation.
    \item We design novel conditional normalizing flow architectures for molecule generation, adapting state-of-the-art image and video normalizing flows to this setting \citep{zhai2025normalizingflowscapablegenerative,gu2026starflow}.
    \item We propose two inference strategies, \ourmethod{}-IS and \ourmethod{}-MIS, that yield asymptotically exact samples from the Boltzmann distribution, and support them with theoretical guarantees.
    \item We evaluate \ourmethod{} against NF and flow-map baselines across molecular systems of varying sizes, demonstrating competitive or superior performance.
\end{itemize}

\section{Background}\label{sec:background}

\paragraph{Problem setup.} We represent an $N$-atom molecule configuration with 3D coordinate $x \in \mathbb{R}^{N \times 3}$ and 
we seek equilibrium samples from a Boltzmann distribution %
\begin{equation}\label{eq:boltzmann-target}
    p(x) = \frac{1}{\mathcal{Z}}\exp\{-\mathcal{E}(x)\},
    \qquad
    \mathcal{Z} = \int_{\mathbb{R}^d}\exp\{-\mathcal{E}(x)\}\,\rmd x,
\end{equation}
where $\mathcal{E}:\mathbb{R}^{N \times 3} \to\mathbb{R}$ is an energy function and $\mathcal{Z}$ is generally intractable. We assume that $\mathcal{E}(x)$ can be evaluated for any $x$. Following prior work \citep{noe2019boltzmann}, we also assume access to a small, biased dataset $\mathcal{D}=\{x^i\}_{i=1}^D$, such as configurations obtained from a short, unmixed MD trajectory. We denote by $q$ the biased data distribution underlying $\mathcal{D}$.

\paragraph{Boltzmann Generators.}
Boltzmann Generators (BGs) \citep{noe2019boltzmann,muller2019neural} use a learned model $q^\theta$ to propose configurations and then correct its mismatch with the Boltzmann target by \emph{self-normalized importance sampling (SNIS)}. Given $M$ independent proposals $x^m\sim q^\theta$, the unnormalized weights and estimator of an observable $\mathcal{O}$ are
\begin{equation}
    \widetilde w^m = \frac{\exp\{-\mathcal{E}(x^m)\}}{q^\theta(x^m)},
    \qquad
    \widehat{\mathbb{E}}_p[\mathcal{O}]
    = \frac{\sum_{m=1}^M\widetilde w^m\mathcal{O}(x^m)}
    {\sum_{m=1}^M\widetilde w^m}.
\end{equation}
The estimator is generally biased at finite $M$ but is consistent as $M\to\infty$, provided the proposal's support covers the target's. Its variance depends on the overlap between $q^\theta$ and $p$, and computing the importance weights requires evaluating the proposal density $q^\theta(x)$.

\paragraph{Normalizing flows.}
Normalizing flows (NFs) \citep{rezende2015variational,dinh2017density} are a natural choice for parameterizing $q^\theta$ because they support both sampling and exact density evaluation. A NF learns a diffeomorphism $\mathrm{T}_\theta:\mathbb{R}^d\to\mathbb{R}^d$ from data to a base density $\rho$, typically $\mathcal{N}(0,I)$, so that $\mathrm{T}_\theta^{-1}$ maps base samples to data space. Its density follows from the change-of-variables formula,
\begin{equation}
    \log q^\theta(x)
    = \log\rho(\mathrm{T}_\theta(x))
    + \log\abs{\det\frac{\partial\mathrm{T}_\theta(x)}{\partial x}}.
\end{equation}
In practice, $\mathrm{T}_\theta$ is a composition of invertible layers whose inverses and Jacobian determinants are tractable. These structural requirements enable exact likelihoods but constrain the neural-network architectures that can be used, thereby limiting the expressivity of the learned distribution.

\paragraph{Diffusion paths and denoising posteriors.}
Diffusion models \citep{ho2020denoising,Song2021score} transform a tractable reference $p_T$ into a target $p_0$ through learned denoising transitions. To train them, consider a prescribed continuous-time forward process that progressively noises $X_0\sim p_0$, yielding states $X_t\sim p_t$ for $t\in[0,T]$. 
We use a linear Gaussian noising kernel, yielding the marginal 
\noindent
\begin{minipage}{0.48\linewidth}
\begin{equation}\label{eq:noising-conditional}
    p_{t|0}(x_t\mid x_0)
    = \mathcal{N}(x_t;\alpha_t x_0,\sigma_t^2 I),
\end{equation}
\end{minipage}\hfill
\begin{minipage}{0.50\linewidth}
\begin{equation}\label{eq:noised-marginal}
    p_t(x_t)
    = \int p_0(x_0)p_{t|0}(x_t\mid x_0)\,\rmd x_0.
\end{equation}
\end{minipage}
We use the variance-exploding (VE) schedule of \citet{karras2022elucidating}, for which $\alpha_t=1$ and $\sigma_t=t$. At sufficiently large $T$, the signal of $p_0$ in $p_T$ is small relative to the added noise, so the terminal marginal is approximated by the tractable reference distribution $p_T \approx \mathcal{N}(0,\sigma_T^2I)$.

Generation %
starts from $X_T\sim p_T$ and successively samples less noisy states. For $0\leq s<t\leq T$, this requires the denoising transition $p_{s|t}(x_s | x_t)$.
Given $x_t$, one  first infers %
from the \emph{denoising posterior}
\begin{equation}\label{eq:denoising-posterior-background}
    p_{0|t}(x_0\mid x_t)
    = %
    p_0(x_0)p_{t|0}(x_t\mid x_0)/ p_t(x_t). 
\end{equation}
Marginalizing over these possible clean states gives the exact decomposition
\begin{equation}\label{eq:ddpm_kernel}
    p_{s|t}(x_s\mid x_t)
    = \int p_{s|t,0}(x_s\mid x_t,x_0)\,
    p_{0|t}(x_0\mid x_t)\,\rmd x_0,
\end{equation}
where $p_{s|t,0}$ is an endpoint conditioned Gaussian  given the clean state $x_0$ and the noisy state $x_t$,
\begin{equation}\label{eq:gaussian-bridge}
    p_{s|t,0}(x_s\mid x_t,x_0)
    = \mathcal{N}\!\left(
        x_s;a_{s,t}x_0+b_{s,t}x_t,\Sigma_{s,t}
    \right),
\end{equation}
with the parameters determined by the forward process: 
\begin{equation*}
      (\alpha_{t|s},\sigma_{t|s}^2)
      = \left({\alpha_t}/{\alpha_s},\,
      \sigma_t^2-\alpha_{t|s}^2\sigma_s^2\right), \,      (a_{s,t},b_{s,t},\Sigma_{s,t})
      = \left({\alpha_s\sigma_{t|s}^2}/{\sigma_t^2},\,
      {\alpha_{t|s}\sigma_s^2}/{\sigma_t^2},\,
      {\sigma_{t|s}^2\sigma_s^2}/{\sigma_t^2}I\right).
  \end{equation*}
Diffusion models approximate it  by a point mass at a learned conditional mean $\mu^\theta_{0|t}(x_t)\approx\mathbb{E}[X_0 | X_t=x_t]$, trained on %
paired data $(x_0,x_t)$. %
The reverse transition is approximated as Gaussian 
\begin{equation}\label{eq:ddpm_kernel_gaussian}
    p_{s|t}(x_s\mid x_t)
    \approx
    \mathcal{N}\!\left(
        x_s;a_{s,t}\mu^\theta_{0|t}(x_t)+b_{s,t}x_t,\Sigma_{s,t}
    \right).
\end{equation}
With an accurate conditional mean prediction, this approximation recovers the reverse diffusion dynamics in the continuous-time limit as the step size tends to zero \citep{Song2021score}. With finite steps, however, this approximation can be poor for coarse step sizes.

\section{Normalizing Flow Flow Maps (\ourmethod{})}

\begin{figure}
    \centering
    \includegraphics[width=\linewidth]{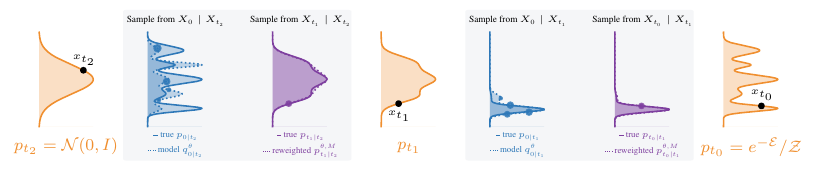}
    \caption{Conceptual illustration of the \ourmethod{} mechanism. Given a sample $x_t$ drawn from the marginal distribution at noise level $t$, we first draw $M$ samples {\color{nfmblue}$\{x_{0|t}^m\}_{m=1}^M$} from the learned posterior {\color{nfmblue}$q_{0|t}^\theta(\cdot|x_t)$}. We then reweight them via \Cref{eq:denoising_snis} to construct the posterior {\color{nfmpurple}$p_{s|t}^{\theta,M}$}, which proposes a sample {\color{nfmpurple}$x_s$} at the next noise level $s$. We repeat this procedure until reaching the target distribution.}
    \label{fig:fig1}
\end{figure}

While existing BG designs compose approximate transitions and correct accumulated error only at the end by reweighting the full composition, \ourmethod{} instead reweighs each denoising transition directly along the diffusion path of \Cref{sec:background}, with target $p_0=p$. Since each transition is individually asymptotically unbiased, so is their composition, with no need for a final correction. Concretely, \ourmethod{} learns a conditional normalizing flow $q^\theta_{0|t}(x_0|x_t)$ approximating the denoising posterior at every noise level: the flow proposes clean configurations from the current noisy state, importance sampling corrects them toward the true posterior, and the result determines the transition via \Cref{eq:ddpm_kernel}.

\Cref{subsec:method:inference} describes the basic sampler, \ourmethod{}-IS; \Cref{subsec:method:training} covers its training objective and flow architecture; \Cref{subsec:method:mis} improves inference through a trajectory-sharing mechanism and a custom MCMC scheme; and \Cref{subsec:method:theory} provides theoretical intuition and guarantees.

\subsection{Boltzmann sampling along the diffusion path}\label{subsec:method:inference}

An ideal sampler along the diffusion path would follow the exact denoising transitions $p_{s|t}$ in \Cref{eq:ddpm_kernel}, starting from $p_T$ and ending at the Boltzmann distribution $p_0=p$. The  intractable component in \Cref{eq:ddpm_kernel} is the Boltzmann denoising posterior $p_{0|t}$, which satisfies by \Cref{eq:denoising-posterior-background}
\begin{align}\label{eq:target_denoising_posterior}
    p_{0|t}(x_0\mid x_t)
    &\propto \exp\{-\mathcal{E}(x_0)\}\,p_{t|0}(x_t\mid x_0).
\end{align}
While we cannot sample from it, the right hand size of \Cref{eq:target_denoising_posterior} can be evaluated pointwise. 
For now, suppose we have a posterior sampler $q^\theta_{0|t}$ trained on the biased dataset $q$, we would like to treat it as a proposal and employ SNIS to sample from the target posterior distribution $p_{0|t}$.  Although $q^\theta_{0|t}$ generally deviates from the target posterior $p_{0|t}$, we instead use it as a proposal in a SNIS scheme, which corrects for this mismatch before constructing the next denoising transition.

\paragraph{SNIS-corrected transition kernel.}
Fix $0\leq s<t\leq T$ and a noisy state $x_t$. We draw $M$ independent clean states $x^m_{0|t}\sim q^\theta_{0|t}(\cdot| x_t)$ and assign them self-normalized importance weights
\begin{equation}\label{eq:denoising_snis_weights}
    w_t^m = \frac{\widetilde w_t^m}{\sum_{j=1}^M\widetilde w_t^j},
    \qquad
    \widetilde w_t^m
    = \frac{\exp\{-\mathcal{E}(x^m_{0|t})\}\,p_{t|0}(x_t\mid x^m_{0|t})}
    {q^\theta_{0|t}(x^m_{0|t}\mid x_t)}.
\end{equation}
As $M\to\infty$, the weighted empirical measure $\sum_{m=1}^M w_t^m\delta_{x^m_{0|t}}$ approximates the denoising posterior $p_{0|t}(\cdot | x_t)$. Substituting it for $p_{0|t}$ inside the integral of \Cref{eq:ddpm_kernel} and marginalizing analytically over the Gaussian bridge $p_{s|t,0}$ turns the true transition into a finite mixture, the SNIS-corrected transition 
\begin{equation}\label{eq:denoising_snis}
    p_{s|t}(x_s\mid x_t)
    \approx
    \widehat{p}^{M,\theta}_{s|t}(x_s\mid x_t)
    := \sum_{m=1}^M w_t^m\,p_{s|t,0}(x_s\mid x_t,x^m_{0|t}),
\end{equation}

in place of the true kernel's infinite mixture over $p_{0|t}$. This is a Rao-Blackwellized estimator: we retain the full conditional law $p_{s|t,0}$ and mix over it analytically, rather than drawing and reweighting a single noisy sample per proposal. The exact and approximate transitions share the same endpoint-conditioned Gaussian $p_{s|t,0}$ and differ only through $p_{0|t}$ versus its finite-mixture approximation. Since $x_0$ enters the Gaussian mean with coefficient $a_{s,t}$, errors in the denoising posterior are attenuated by $|a_{s,t}|$ when propagated to the next state, as formalized in \Cref{subsec:method:theory}.

\begin{wrapfigure}[14]{r}{0.56\linewidth}
\centering
\begin{minipage}{\linewidth}
\vspace{-1.25em}
\begin{algorithm}[H]
\SetAlgoNlRelativeSize{-1}
\LinesNotNumbered
\DontPrintSemicolon
\small
\SetAlgoLined

\KwIn{Grid $\{t_k\}_{k=0}^K$, particle count $M$, flow $q^\theta_{0|t}$, energy $\mathcal{E}$.}
\KwOut{$x_0$ approximately distributed as $p(\cdot)\propto\exp\{-\mathcal{E}(\cdot)\}$.}

Sample $x_K\sim\mathcal{N}(0,\sigma_T^2 I)$\;

\For{$k=K,\ldots,1$}{
    Draw $x^m_{0|k}\overset{\mathrm{iid}}{\sim} q^\theta_{0|t_k}(\cdot| x_k)$, $m=1,\ldots,M$\;

    Compute weights $\widetilde{w}_k^m = \dfrac{\exp\{-\mathcal{E}(x^m_{0|k})\}\,p_{t_k|0}(x_k| x^m_{0|k})}{q^\theta_{0|t_k}(x^m_{0|k}| x_k)}$\;

    Sample $x_{k-1}\sim \sum_{m=1}^M \dfrac{\widetilde{w}_k^m}{\sum_{j=1}^M\widetilde{w}_k^j}\, p_{t_{k-1}|t_k,0}(\cdot| x_k,x^m_{0|k})$\;
}
\Return{$x_0$}\;

\caption{\ourmethod{}-IS for a single trajectory}
\label{alg:chained_denoising}
\end{algorithm}
\end{minipage}
\end{wrapfigure}

Chaining the SNIS-corrected transition in \Cref{eq:denoising_snis} along a discretized diffusion path yields the full sampler, \ourmethod{}-IS: given a time grid $0=t_0<t_1<\cdots<t_K=T$, we initialize $x_K\sim\mathcal{N}(0,\sigma_T^2I)$ and apply $\widehat{p}^{M,\theta}_{s|t}$ successively down to $x_0$, as summarized in \Cref{alg:chained_denoising}. The approximation \Cref{eq:denoising_snis} converges to the corresponding true denoising kernel as $M\to\infty$. The practical error also includes the terminal approximation $p_T\approx\mathcal{N}(0,\sigma_T^2I)$ and the finite-$M$ errors accumulated along the chain. We study their propagation in \Cref{subsec:method:theory} and \Cref{app:sec:proofs}.

\subsection{Training}\label{subsec:method:training}

We now discuss how to train the conditional flow $q^\theta_{0|t}$. Although equilibrium samples from the Boltzmann target $p$ are unavailable, we have access to a biased dataset $\mathcal{D}$ drawn from some distribution $q:=q_0$. Applying the same forward noising process to $q_0$ induces a noisy marginal $q_t$ and denoising posterior $q_{0|t}$, and we train $q^\theta_{0|t}$ to approximate $q_{0|t}$ from noisy–clean pairs generated from $\mathcal{D}$. The bias inherited from $\mathcal{D}$ is corrected only later, at inference time.

\paragraph{Training objective.}
For a positive time-weighting function $\lambda(t)$, we minimize the forward KL from $q_{0|t}$ to $q^\theta_{0|t}$, averaged over the noisy marginal $q_t$,
\begin{align}
    \label{eq:nf2m_kl}
    \mathcal{L}_{\mathrm{KL}}(\theta)
    &= \int_0^T\lambda(t)\,
    \mathbb{E}_{X_t\sim q_t}
    \left[
        \mathbb{D}_{\mathrm{KL}}\left(
            q_{0|t}(\cdot\mid X_t)
            \,\|\,
            q^\theta_{0|t}(\cdot\mid X_t)
        \right)
    \right]\rmd t.
\end{align}
We note that the reverse KL $\mathbb{D}_{\mathrm{KL}}(q^\theta_{0|t}\|p_{0|t})$ would also be tractable up to a constant via \Cref{eq:target_denoising_posterior}. We leave this alternative to future work and focus here on the forward-KL formulation.

Since $q_t(x_t)q_{0|t}(x_0| x_t) = q_0(x_0)p_{t|0}(x_t| x_0)$ by Bayes rule, expanding the KL and dropping terms independent of $\theta$ reduces \Cref{eq:nf2m_kl} to the tractable conditional maximum-likelihood objective
\begin{align}\label{eq:nf2m_loss}
    \mathcal{L}_{\text{\ourmethod{}}}(\theta)
    = -\int_0^T\lambda(t)\,
    \mathbb{E}_{X_0\sim q_0}
    \mathbb{E}_{X_t\sim p_{t|0}(\cdot\mid X_0)}
    \left[\log q^\theta_{0|t}(X_0\mid X_t)\right]\rmd t.
\end{align}
Training thus requires only samples from biased data $\mathcal{D}$, noise levels $t$, and the forward noising kernel. In principle, \ourmethod{} applies to any conditional family $q_{0|t}^\theta$ that supports sampling and exact likelihood evaluation. For example, an isotropic Gaussian with learned mean and fixed variance $v_tI$ gives
\begin{align}
    q^\theta_{0|t}(x_0\mid x_t)
    &= \mathcal{N}(x_0;\mu^\theta_{0|t}(x_t),v_tI),
\end{align}
then \Cref{eq:nf2m_loss} becomes, up to terms independent of $\theta$,
\begin{align}
    \int_0^T\frac{\lambda(t)}{2v_t}\,
    \mathbb{E}_{X_0\sim q_0}
    \mathbb{E}_{X_t\sim p_{t|0}(\cdot\mid X_0)}
    \left[\norm{X_0-\mu^\theta_{0|t}(X_t)}^2\right]\rmd t.
\end{align}
This is the usual $x_0$-prediction denoising objective in diffusion models \citep{ho2020denoising, karras2022elucidating}, where $\mu_{0| t}^\theta$ is optimized to predict the conditional mean $\mathbb{E}[X_0 | X_t = x_t]$ induced by 
$q_0.$

In this work, we replace the Gaussian conditional with a flexible conditional NF architecture. This choice enables representing multimodal denoising posteriors while retaining the likelihood required for training in \Cref{eq:nf2m_loss} and for inference in \Cref{eq:denoising_snis_weights}.

\paragraph{Conditional Normalizing Flow architecture.} We develop conditional molecular variants of TarFlow \citep{zhai2025normalizingflowscapablegenerative} and StarFlow \citep{gu2026starflow}. Both are autoregressive flows \citep{huang2018neuralautoregressiveflows} that use causal Transformers for autoregressive affine coupling, giving a triangular Jacobian that enables parallel density evaluation with sequential sampling. TarFlow stacks several medium conditioned blocks while StarFlow uses one large conditional block followed by small unconditional ones.
Following \citet{tan2025scalable}, we represent an $N$-atom configuration as $N$ tokens in $\mathbb{R}^3$, conditioning both backbones on noise level $t$ (via DiT-style adaptive layer norm \citep{peebles2023scalable}) and on $x_t$ itself, either through cross-attention (features of $x_0$ as queries, $x_t$ as keys/values) or a prefix mechanism (prepending $x_t$ features to the autoregressive sequence). Details in \Cref{app:molecular-tarflow}.

\paragraph{Molecular symmetries.} Following \citet{tan2025scalable}, we enforce translation and rotation invariance by centering and randomly rotating configurations rather than using equivariance. We lift them to $\mathbb{R}^{N\times3}$ with a Gaussian centroid $c_0\sim\mathcal{N}(0,N^{-1}I_3)$, tractable under a standard Gaussian base. To reduce centroid-induced importance-weight variance, a closed-form adjustment to $\log q^\theta_{0|t}(x_0|x_t)$ removes the rotationally irrelevant angular component of the centroid's offset from its conditional mean given $x_t$. This recovers \citet{tan2025scalable}'s unconditional correction at high noise (\Cref{app:com-adjustment-conditional}).

\subsection{Improving the inference}\label{subsec:method:mis}

\paragraph{Multiple Importance Sampling.} \ourmethod{}-IS shares candidates only at initialization. Can this sharing be extended further along the chain? When $p$ is a Gaussian $\mathcal{N}(\mu, \gamma^2 \Idd)$, one can show that 
\begin{align}\label{eq:posterior_kl_gaussian}
    \mathbb{D}_{\mathrm{KL}}(p_{0|t}(\cdot \mid x) ~\|~ p_{0|t}(\cdot \mid y)) = \frac{\alpha_t^2\gamma^2 \norm{x-y}^2}{2\sigma_t^2(\alpha_t^2\gamma^2+\sigma_t^2)} ,\forall x, y \in \mathbb{R}^d,t \in [0, T].
\end{align}
Posteriors conditioned on nearby states are themselves close, which suggests a simple idea: when $x$ and $y$ are close, candidates drawn from $q^\theta_{0|t}(\cdot | x)$ should also be informative proposals for $p_{0|t}(\cdot | y)$.

This is exactly the idea behind \emph{\ourmethod{}-MIS}: each trajectory reuses candidates generated for \emph{other} trajectories, rather than drawing its own (\Cref{fig:mis_cartoon}). Concretely, instead of keeping the $M$ candidates from each of $N$ proposals separate, we pool them into $N$ groups of $C$ shared proposal components, so that each trajectory draws on $CM$ candidates rather than $M$, for the same total of $NM$ flow samples as \ourmethod{}-IS. This comes at the cost of evaluating additional cross-likelihoods for the importance weights, an overhead that remains comparatively inexpensive for our autoregressive flow architectures.

Such sharing is especially valuable at high noise levels, where noisy states are less informative about $x_0$ and the posterior across trajectories approach the common Boltzmann target. 
follows literature on multiple importance sampling \citep{veach1995optimally, owen2000safe, he2014optimalmixtureweightsmultiple, bugallo2017adaptive, elvira2019generalized}. See \Cref{app:mis} for algorithmic details. 

\paragraph{Posterior-adapted MCMC rejuvenation.} We can refine the IS or MIS sample with a few steps of MCMC targeting the posterior $p_{0|t}$ directly. Its score decomposes as $\nabla_{x_0} \log p_{0|t}(x_0|x_t) = \nabla \log p(x_0) - A x_0 + b$, with $A = \alpha_t^2/\sigma_t^2$ and $b = (\alpha_t/\sigma_t^2) x_t$, giving the Langevin diffusion
\begin{equation}
  \rmd Y_s = \left(-A Y_s + b + \nabla \log p(Y_s)\right) \rmd s + \sqrt{2}\, \rmd W_s.
\end{equation}
{
Holding $\nabla \log p$ fixed and integrating the affine drift exactly yields the Gaussian kernel
}
\begin{equation}
  Y_{s+h} \mid Y_s \;\sim\; \mathcal{N}\left(e^{-Ah} Y_s + \tfrac{1-e^{-Ah}}{A}\left(b + \nabla \log p(Y_s)\right),\; \tfrac{1-e^{-2Ah}}{A} I\right),
\end{equation}
or, using the noise-to-signal ratio $\tau = (1-e^{-2Ah})/(2A) \in (0, \tau_{\max}]$, with $\tau_{\max}=1/(2A)$,
\begin{equation}
  Y_{s+h} \mid Y_s \;\sim\; \mathcal{N}\left(c(\tau) Y_s + \varphi(\tau)\left(b + \nabla \log p(X_s)\right),\; 2\tau I\right).
\end{equation}
A Metropolis--Hastings accept/reject step makes the proposal kernel invariant under $p_{0|t}$, so rejuvenated particles retain their original importance weights. See \Cref{app:maexl} for the full construction.
\subsection{Theoretical guarantees}\label{subsec:method:theory}

Why approximate the denoising posterior at intermediate noise levels instead of applying SNIS 
directly to the final Boltzmann target? An intermediate denoising step need not inherit the full 
error of the clean-state approximation. The following proposition quantifies how the 
endpoint-conditioned Gaussian transition controls this error; its proof is given in \Cref{app:sec:proofs}.

\begin{proposition}\label{prop:wasserstein_bridge_main}
Let $\hat{p}_{0|t}$ be an approximation of $p_{0|t}$ and define the corresponding denoising kernel
\begin{align*}
    \hat{p}_{s|t}(x_s \mid x_t) = \int p_{s|t,0}(x_s \mid x_t, x_0)\,\hat{p}_{0|t}(x_0 \mid x_t)\,\rmd x_0.
\end{align*}
Let $\ell \geq 1$, $x_t \in \R^d$, and assume that both $p_{0|t}(\cdot| x_t)$ and $\hat{p}_{0|t}(\cdot| x_t)$ have finite $\ell$-th moment. Then
\begin{align*}
    W_\ell\left(p_{s|t}(\cdot\mid x_t),\,\hat{p}_{s|t}(\cdot\mid x_t)\right) \leq a_{s,t}\,W_\ell\left(p_{0|t}(\cdot\mid x_t),\,\hat{p}_{0|t}(\cdot\mid x_t)\right).
\end{align*}
\end{proposition}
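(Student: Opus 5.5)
The plan is a synchronous coupling: push an optimal coupling of the two denoising posteriors through a shared Gaussian bridge. Fix $x_t$ and let $\pi$ be an optimal coupling for $W_\ell\bigl(p_{0|t}(\cdot\mid x_t),\hat p_{0|t}(\cdot\mid x_t)\bigr)$. It exists by standard optimal-transport theory, since both measures have finite $\ell$-th moment. Draw $(X_0,\hat X_0)\sim\pi$ and, independently, $Z\sim\mathcal{N}(0,I)$, and set
\begin{align*}
    X_s = a_{s,t}X_0 + b_{s,t}x_t + \Sigma_{s,t}^{1/2}Z,\qquad
    \hat X_s = a_{s,t}\hat X_0 + b_{s,t}x_t + \Sigma_{s,t}^{1/2}Z.
\end{align*}

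The first step is to check the marginals. Conditionally on $X_0=x_0$, the law of $X_s$ is $p_{s|t,0}(\cdot\mid x_t,x_0)$ by \Cref{eq:gaussian-bridge}. Integrating against the first marginal of $\pi$, which is $p_{0|t}(\cdot\mid x_t)$, gives exactly the mixture in \Cref{eq:ddpm_kernel}, so $X_s\sim p_{s|t}(\cdot\mid x_t)$. The same argument, with $\hat p_{0|t}$ and the definition of $\hat p_{s|t}$ in the statement, gives $\hat X_s\sim\hat p_{s|t}(\cdot\mid x_t)$. Hence $(X_s,\hat X_s)$ is an admissible coupling of the two denoising kernels.

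The second step is to bound its cost. The terms $b_{s,t}x_t$ and the shared noise $\Sigma_{s,t}^{1/2}Z$ cancel, leaving $X_s-\hat X_s=a_{s,t}(X_0-\hat X_0)$. Therefore
\begin{align*}
    W_\ell^\ell\bigl(p_{s|t}(\cdot\mid x_t),\hat p_{s|t}(\cdot\mid x_t)\bigr)\le \mathbb{E}\|X_s-\hat X_s\|^\ell = |a_{s,t}|^\ell\,\mathbb{E}_\pi\|X_0-\hat X_0\|^\ell = |a_{s,t}|^\ell\, W_\ell^\ell\bigl(p_{0|t}(\cdot\mid x_t),\hat p_{0|t}(\cdot\mid x_t)\bigr).
\end{align*}
Taking $\ell$-th roots, and noting that $a_{s,t}=\alpha_s\sigma_{t|s}^2/\sigma_t^2\ge0$ (with $\alpha_s>0$, e.g.\ $\alpha_s=1$ in the VE case), gives the claim.

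There is no substantial obstacle. The only care needed is to justify existence of the optimal coupling, which requires the finite moment assumption. If one prefers to avoid it, an $\varepsilon$-optimal coupling followed by $\varepsilon\to0$ works equally well. The finite moments also ensure that the Gaussian mixtures have finite $\ell$-th moment, so both Wasserstein distances are well defined.
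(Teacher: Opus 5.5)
Your proof is correct and is essentially the paper's argument: take an optimal coupling of the two denoising posteriors, push both through the Gaussian bridge with the same noise, and observe that the difference is exactly $a_{s,t}(X_0-\hat X_0)$. Your explicit check of the marginals and your remark that $a_{s,t}\ge 0$ spell out details the paper leaves implicit.
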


\paragraph{Error attenuation and posterior concentration.} \Cref{prop:wasserstein_bridge_main} shows that errors in approximating $p_{0|t}$ are damped by $a_{s,t}$ when propagated to $p_{s|t}$. Under the VE schedule, $a_{s,t}=1-(s/t)^2$ grows with the relative step size $(t-s)/t$, so damping is strongest where the schedule takes relatively small steps (\Cref{fig:a_s_t}), which is why a poorly approximated posterior in \Cref{fig:fig1} can still yield a good SNIS approximation $q^{\theta,M}_{s|t}$. Near $t\approx T$, where $p_{0|t}(\cdot| x_t)\approx p_0$ is broad and hard to learn, the \citet{karras2022elucidating} discretization takes the smallest relative steps and so makes $a_{s,t}$ smallest, attenuating their propagation. Conversely, at low noise $p_{0|t}(\cdot | x_t)$ concentrates around $x_t$ and loses its multimodality, which should make the flow easier to approximate but a sharply peaked target remains hard to represent with affine couplings. MAExL rejuvenation covers this regime, being most accurate at low noise (\Cref{app:maexl}), so the larger $a_{s,t}$ acts on an error that is already small.

\paragraph{Per-step error.} \Cref{prop:wasserstein_bridge_main} bounds a single step for an abstract $\hat p_{0|t}$. To obtain a guarantee for the full sampler, we substitute the SNIS estimator for $\hat p_{0|t}$ and track how per-step errors accumulate along the chain (proofs in \Cref{app:sec:proofs}).

\begin{proposition}[Per-step SNIS error, informal version of \Cref{thm:per_step_error}] \label{prop:per_step_error}
Let $0<s<t$ and let $q^{\theta,M}_{s|t}$ use proposal $q_{0|t}$. Assume $q_{0|t}$ and its square-weighted tilt have sub-Gaussian tails of rate $\kappa_t(x_t)$. For any bounded $\psi$, the bias and MSE of $q^{\theta,M}_{s|t}(\psi | x_t)$ relative to $p_{s|t}(\psi\mid x_t)$ satisfy,
\begin{align*}
    \mathrm{bias} = \mathcal{O}\left(\frac{\rho_t(x_t)\gamma_{t,s}\kappa_t(x_t)\sqrt d}{M}\right), \qquad
    \mathrm{MSE} =  \mathcal{O}\left(\frac{\rho_t(x_t)\gamma_{t,s}^2\kappa_t(x_t)^2d}{M}\right),
\end{align*}
up to $\mathcal{O}(M^{-2})$ terms and logarithmic factors in $M$, where $\rho_t(x_t)=1+\chi^2(p_{0|t}(\cdot | x_t)\|q_{0|t}(\cdot | x_t))$ measures proposal quality and $\gamma_{t,s}$ is the schedule-dependent bridge sensitivity \Cref{eq:bridge_sensitivity}.
\end{proposition}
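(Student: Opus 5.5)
The plan is to reduce the statement to the classical non-asymptotic bias and MSE bounds for self-normalized importance sampling (in the style of Agapiou et al.). The key observation is that $q^{\theta,M}_{s|t}(\psi\mid x_t)$ is itself an SNIS estimator of an expectation under $p_{0|t}(\cdot\mid x_t)$. Define the bridge-smoothed test function
\begin{align*}
    \Psi_{s,t}(x_0) := \int \psi(x_s)\, p_{s|t,0}(x_s\mid x_t,x_0)\,\rmd x_s .
\end{align*}
By \Cref{eq:ddpm_kernel}, $p_{s|t}(\psi\mid x_t)=\mathbb{E}_{p_{0|t}}[\Psi_{s,t}]$. By \Cref{eq:denoising_snis}, $q^{\theta,M}_{s|t}(\psi\mid x_t)=\sum_m w_t^m \Psi_{s,t}(x^m_{0|t})$, which is exactly the SNIS estimate of $\mathbb{E}_{p_{0|t}}[\Psi_{s,t}]$ with proposal $q_{0|t}$. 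Both the bias and the MSE are invariant under subtracting a constant from $\Psi_{s,t}$. We therefore work with the centred function $\bar\Psi = \Psi_{s,t}-\Psi_{s,t}(\bar x)$, where $\bar x$ is a reference point such as the proposal mean.

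The first step bounds the variation of $\Psi_{s,t}$ in $x_0$. Changing $x_0$ shifts the mean of the Gaussian bridge by $a_{s,t}\Delta x_0$ and leaves its covariance $\Sigma_{s,t}$ unchanged. Hence $|\Psi_{s,t}(x_0)-\Psi_{s,t}(x_0')|\le \|\psi\|_\infty\,\mathrm{TV}$ between these two Gaussians, and this is at most $\|\psi\|_\infty\, a_{s,t}\|x_0-x_0'\|/\Sigma_{s,t}^{1/2}$ up to constants, by Pinsker or a direct Gaussian TV bound. This Lipschitz constant, $\min\{2, a_{s,t}/\sqrt{\Sigma_{s,t}}\}$ times $\|\psi\|_\infty$, is what I would identify with the bridge sensitivity $\gamma_{t,s}$ of \Cref{eq:bridge_sensitivity}. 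It gives $|\bar\Psi(x_0)|\le \gamma_{t,s}\|x_0-\bar x\|$, so $\bar\Psi$ is unbounded but grows only linearly.

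The second step applies the SNIS error decomposition. Write $\hat I = \frac{M^{-1}\sum \tilde w^m\bar\Psi(x^m)}{M^{-1}\sum\tilde w^m}$ with $\tilde w$ normalized so that $\mathbb{E}_q[\tilde w]=1$. Then expand via $\hat I - I = \frac{A-IB}{B}$ with $A,B$ the empirical means, and use the identity $\frac1B = 2 - B + \frac{(1-B)^2}{B}$. This is the standard route to an $O(1/M)$ bias involving $\mathrm{Cov}_q(\tilde w,\tilde w\bar\Psi)$ and $\mathrm{Var}_q(\tilde w)=\rho_t-1$. The leading terms involve $\mathbb{E}_q[\tilde w^2\bar\Psi^2]$ and $\mathbb{E}_q[\tilde w^2|\bar\Psi|]$. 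These are expectations of $\|x_0-\bar x\|^2$ and $\|x_0-\bar x\|$ under the square-weighted tilt $\tilde w^2 q/\rho_t$, multiplied by $\rho_t$. Sub-Gaussianity of that tilt with rate $\kappa_t$ bounds them by $\rho_t\kappa_t^2 d$ and $\rho_t\kappa_t\sqrt d$ respectively. Combined with the factors $\gamma_{t,s}^2$ and $\gamma_{t,s}$, this produces the stated leading orders.

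The main obstacle is the remainder term $\frac{(1-B)^2}{B}(A-IB)$, where $B$ may be small and $\bar\Psi$ is unbounded. I would handle it by truncation. On the event $\{\max_m\|x^m_{0|t}-\bar x\|\le R\}$ with $R\asymp\kappa_t\sqrt{d\log M}$, the estimator is bounded by $\gamma_{t,s}R$ because it is a convex combination. On that event the remainder is controlled by $\gamma_{t,s}R\,\mathbb{E}[(1-B)^2]\cdot$ (the ratio $|A-IB|/B$ is itself bounded by $2\gamma_{t,s}R$), giving $O(\rho_t\gamma_{t,s}R/M)$ or $O(M^{-2})$ after a Cauchy–Schwarz split. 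This is the source of the logarithmic factors. The complement event has probability $Me^{-R^2/(2\kappa_t^2)}$ by sub-Gaussianity of $q_{0|t}$ and a union bound. Choosing $R$ to make this $O(M^{-2})$, and noting that $|\hat I-I|\le 2\gamma_{t,s}\max_m\|x^m-\bar x\|$ always, the contribution is negligible after one more Cauchy–Schwarz. The MSE follows the same way from $\mathbb{E}[(A-IB)^2]/1$ on the good event, plus truncation.
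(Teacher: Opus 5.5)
Your route differs from the paper's in where it truncates. The ingredients are shared: the bridge-smoothed test function $G_{s|t,x_t}\psi$, the Gaussian TV bound giving $\gamma_{t,s}=a_{s,t}/\sigma_{s,t}$, a ball of radius $\kappa_t(\sqrt d+\sqrt{\log M})$, and a union bound.

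The paper truncates the \emph{test function}. It writes $f=f_B+h_B$, with $f_B=f$ on $B$ and constant off $B$. After centering, $f_B$ is bounded everywhere by $[f]_B/2$, so the sup-norm SNIS theorem of Agapiou et al.\ applies as a black box. This yields the $\rho_t[f]_B/M$ and $\rho_t[f]_B^2/M$ terms with no moment computation. Since $h_B$ vanishes on $B$, its SNIS estimate is exactly $0$ on the event that all proposals lie in $B$, so the weights disappear. Only $p_{0|t}(h_B)$ remains, and it is controlled by $\sqrt{\eta_{B^c}\,q_{0|t}(B^c)}\le\sqrt{\rho_t}\,e^{-u^2}$; this is where the square-weighted tilt enters.

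You instead truncate the \emph{event} and re-derive the ratio expansion. This gives an explicit leading bias $-\rho_t(\mathbb{E}_{\pi_2}[\bar\Psi]-I)/M$, with $\pi_2\propto\tilde w^2q_{0|t}$. The price is that you must control importance weights on the bad event.

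That is the step that does not go through as written.
\begin{itemize}
\item The bad-event part of your remainder is $\mathbb{E}[(1-B)^2(\hat I-I)\mathbf{1}_{E^c}]$. ``One more Cauchy--Schwarz'' there produces $\sqrt{\mathbb{E}[(1-B)^4]}$, which needs $\mathbb{E}_q[\tilde w^4]<\infty$. Only $\rho_t=\mathbb{E}_q[\tilde w^2]$ and the tails are assumed.
\item The claim $|\hat I-I|\le 2\gamma_{t,s}\max_m\|x^m-\bar x\|$ is not a pathwise bound, since $I$ does not depend on the sample.
\end{itemize}

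The repair uses a fact you dismissed: $\bar\Psi$ is bounded, because $|\Psi_{s,t}|\le\|\psi\|_\infty$. Hence $|\hat I-I|\le 4\|\psi\|_\infty$, and it suffices to bound
$\mathbb{E}[(1-B)^2\mathbf{1}_{E^c}]\le \mathbb{P}(E^c)+\mathbb{E}[B^2\mathbf{1}_{E^c}]$.
Expand $B^2$ over index pairs and use $\mathbf{1}_{E^c}\le\sum_j\mathbf{1}\{x^j\notin B_R\}$. Then:
\begin{itemize}
\item the diagonal terms are $\mathbb{E}_q[\tilde w^2\mathbf{1}_{B_R^c}]\le\rho_t e^{-u^2}$, by the tilt tail;
\item the mixed terms are $p_{0|t}(B_R^c)\le\sqrt{\rho_t}\,e^{-u^2}$;
\item the remaining terms are $\rho_t\,q_{0|t}(B_R^c)$ or $q_{0|t}(B_R^c)$.
\end{itemize}
Summing gives $O((\rho_t+M)e^{-u^2})=O(\rho_tM^{-3}+M^{-2})$ at $u=\sqrt{3\log M}$.

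You also need to bound $|I|$ through $p_{0|t}(B_R^c)\le\sqrt{\rho_t}\,e^{-u^2}$, not through a global Cauchy--Schwarz. The latter would inflate the leading bias to order $\rho_t^{3/2}$, contradicting the stated linear dependence on $\rho_t$.

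With these repairs your route gives the stated orders. The paper's truncation of the test function is exactly what makes both issues vanish.
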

As in \Cref{prop:wasserstein_bridge_main}, the bounds reflect proposal fit ($\rho_t$) and sensitivity to residual $x_0$-error ($\gamma_{t,s}\kappa_t$). SNIS adds the number of importance-sampling particles $M$, which controls the decay of bias and MSE.
\paragraph{Error propagation.} \Cref{prop:per_step_error} gives a per-step control statement about a single transition. The sampler, however, chains $K$ such transitions to go from $t_K$ down to  $\tau$, so what ultimately matters is how these per-step errors combine along the full path. We answer this next.

\begin{proposition}[Path error, informal version of \Cref{cor:chained_no_mcmc}]\label{prop:path_error_informal}
Let $T=t_K>\cdots>t_0= \tau > 0$ and let  $q^M_{t_0}$ denote the estimator obtained by chaining the $M$-particle SNIS kernels of \Cref{prop:per_step_error} along the schedule, started from the exact law at $t_K$. 
For any bounded test function $\psi$, let
$\Delta_M := q_{t_0}^M(\psi)-p_{t_0}(\psi)$. Then
\begin{align*}
    \abs{\mathbb E[\Delta_M]}
    \leq \sum_{k=1}^K \mathbb E_{p_{t_k}}[B_k(X_{t_k})],
    \qquad
    \sqrt{\mathbb E[\Delta_M^2]}
    \leq \sum_{k=1}^K
    \sqrt{\mathbb E_{p_{t_k}}[V_k(X_{t_k})]},
\end{align*}
where $B_k,V_k=O(1/M)$ are exactly the per-step bias and MSE rates of \Cref{prop:per_step_error} at step $k$.
\end{proposition}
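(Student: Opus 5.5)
The plan is a telescoping decomposition over the chain, in the spirit of classical particle-filter error analyses. Write $\widehat Q_k$ for the random SNIS kernel $q^{\theta,M}_{t_{k-1}|t_k}$ at step $k$, built from fresh particles independent of the past given $x_{t_k}$, and $P_k := p_{t_{k-1}|t_k}$ for the exact kernel. Let $P_{j\to 0}$ denote the composition of the exact kernels from $t_j$ down to $t_0$, so that $P_{j\to0}\psi(x)=\mathbb E[\psi(X_{t_0})\mid X_{t_j}=x]$ under the exact reverse process; this is a bounded function with $\|P_{j\to0}\psi\|_\infty\le\|\psi\|_\infty$. The law $q^M_{t_0}$ is obtained by starting from $p_{t_K}$ and applying $\widehat Q_K,\dots,\widehat Q_1$. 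The key identity is the telescoping sum
\begin{equation*}
\Delta_M=\sum_{k=1}^K\Big(p_{t_K}\widehat Q_K\cdots\widehat Q_{k}P_{(k-1)\to0}\psi-p_{t_K}\widehat Q_K\cdots\widehat Q_{k+1}P_{k\to0}\psi\Big),
\end{equation*}
where $P_{0\to0}=\mathrm{id}$ and $P_{k\to0}=P_kP_{(k-1)\to0}$. Each summand isolates a single replacement of $P_k$ by $\widehat Q_k$, with the test function $\psi_{k-1}:=P_{(k-1)\to0}\psi$ bounded by $\|\psi\|_\infty$. This makes the per-step result, \Cref{prop:per_step_error}, directly applicable to each summand.

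For the bias bound, I condition on the state $X_{t_k}$ reached by the approximate chain. The $k$-th summand is then $\mathbb E[(\widehat Q_k\psi_{k-1}-P_k\psi_{k-1})(X_{t_k})]$, and the expectation over the fresh particles gives the per-step bias $B_k(X_{t_k})$. The subtle point is that $X_{t_k}$ is distributed according to the \emph{approximate} marginal, not $p_{t_k}$. To obtain the stated form with $\mathbb E_{p_{t_k}}$, I will order the telescoping the other way: exact kernels first from $t_K$ down to $t_k$, then the approximate kernels below. The $k$-th term is then
\begin{equation*}
p_{t_k}\big(\widehat Q_k-P_k\big)\widehat Q_{k-1}\cdots\widehat Q_1\psi.
\end{equation*}
Here the outer state is exactly $p_{t_k}$-distributed, which is possible since the chain starts from the exact law and $p_{t_K}P_K\cdots P_{k+1}=p_{t_k}$. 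The inner function $g_{k-1}:=\mathbb E[\widehat Q_{k-1}\cdots\widehat Q_1\psi]$, averaged over the later independent particles, is deterministic and bounded by $\|\psi\|_\infty$. Because the particles at step $k$ are independent of those at later steps, the expectation factorizes. The per-step bias bound of \Cref{prop:per_step_error}, applied to the bounded function $g_{k-1}$ with constant proportional to $\|\psi\|_\infty$, yields $|\mathbb E[\text{term}_k]|\le\mathbb E_{p_{t_k}}[B_k(X_{t_k})]$. The triangle inequality then gives the first claim.

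For the MSE, I apply Minkowski's inequality in $L^2$ to the same decomposition, so it suffices to bound $\sqrt{\mathbb E[\text{term}_k^2]}$. The term is $\mathbb E_{X\sim p_{t_k}}\big[(\widehat Q_k-P_k)\widehat G_{k-1}(X)\big]$, where $\widehat G_{k-1}$ is the random function $\widehat Q_{k-1}\cdots\widehat Q_1\psi$. The step I expect to be the main obstacle is that $\widehat G_{k-1}$ is random and the per-step MSE is stated for a fixed $\psi$. I will handle this by conditioning on the particles of steps $1,\dots,k-1$: given those, $\widehat G_{k-1}$ is a fixed function bounded by $\|\psi\|_\infty$, so \Cref{prop:per_step_error} applies. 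Provided the MSE bound is uniform over test functions in the sup-norm ball, Jensen's inequality gives $\mathbb E[\text{term}_k^2]\le\mathbb E_{p_{t_k}}[V_k(X_{t_k})]$.

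I also need to check two interpretive points. One is that "estimator $q^M_{t_0}(\psi)$" means the law averaged over the trajectory but with the particle randomness kept, so that $\Delta_M$ is random. The other is that the per-step constants depend on $\psi$ only through $\|\psi\|_\infty$, which follows from the formal \Cref{thm:per_step_error}.
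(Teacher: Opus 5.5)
Your proposal is correct and follows the paper's own proof (\Cref{prop:telescoping} specialized in \Cref{cor:chained_no_mcmc}). You use the same decomposition $\sum_k p_{t_k}(\widehat Q_k-P_k)\widehat Q_{k-1}\cdots\widehat Q_1\psi$ with exact kernels above step $k$, the bias bound via independence of the lower-index kernels, and the MSE bound via Minkowski plus Jensen after conditioning on steps $1,\dots,k-1$. The only cosmetic difference is in the bias step: you average the inner function to $g_{k-1}$ before applying the per-step bound, whereas the paper conditions on it; the two are equivalent by Fubini and linearity of $\widehat Q_k$.
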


The bias thus accumulates only \emph{additively}, and the root-MSE only through a \emph{triangle inequality}, 
across the $K$ denoising steps.

\section{Related Works}

\paragraph{BGs with tractable correction weights.}

Sequential Boltzmann Generators (SBG) use sequential Monte Carlo (SMC) to sample along a geometric annealing path between a trained TarFlow proposal and the Boltzmann target. However, their ablation results show that SMC  does not consistently improve over  SNIS on the same trained proposal across systems and metrics
\citep{tan2025scalable}. \ourmethod{} also deploys a TarFlow sequentially, but both %
inference %
and %
learning %
differ fundamentally: rather than annealing a fixed proposal toward the target, \ourmethod{} performs progressive denoising transitions, and the flow itself is trained as a conditional model rather than an unconditional one.

The Autoregressive Boltzmann Generator (ArBG) models coordinate-bin probabilities with uniform dequantization \citep{rehman2026autoregressiveboltzmanngenerators}, yielding an exactly evaluable but bin-wise coarse density. Its coordinate-wise factorization enables twisted SMC using temporarily capped peptide energies to resample partial structures.
Stochastic normalizing flows interleave invertible maps with MCMC or Langevin transitions, weighting entire sampling paths \citep{wu2020stochasticnormalizingflows}.
\ourmethod{} instead corrects denoising proposals over complete molecular configurations at each noise level, avoiding both the binning artifacts of ArBG and the need to evaluate energies of artificially capped partial structures.

\paragraph{BGs with approximate correction.}

Free-form Flows jointly train a network and its pseudo-inverse by maximum likelihood \citep{draxler2024freeformflowsmakearchitecture}. Continuous normalizing flows keep maximum-likelihood training but replace the invertible network with a neural ODE \citep{kohler2020equivariant}, whose likelihood requires integrating the costly probability-flow ODE through the network's divergence, with many steps needed for both accuracy and invertibility. Later work cuts training cost via flow matching but leaves this inference cost untouched \citep{klein2023equivariant, klein2024transferable}. FALCON instead distills into a few-step flow map, replacing the full ODE integration with a handful of Jacobian evaluations, at the price of only approximate invertibility \citep{rehman2025falconfewstepaccuratelikelihoods}. F2D2 drops exact Jacobians entirely, jointly distilling the flow map and its density evolution via Hutchinson trace estimates \citep{ai2026jointdistillationfastlikelihood, Hutchinson01011990}, and SCALLOP replaces this with a Hutchinson-free conditional divergence-matching loss \citep{ouyang2026fewstep}. Across these distillation-based methods, the learned density's approximation error enters the importance weights directly and persists regardless of proposal count. \ourmethod{} instead evaluates its conditional proposal densities exactly.

\paragraph{Stochastic flow maps and intermediate correction.}
Distributional diffusion models learn the conditional distribution of clean states given noisy states, enabling larger denoising steps \citep{xiao2022tacklinggenerativelearningtrilemma, debortoli2025distributionaldiffusionmodelsscoring}. Meta Flow Maps and Diamond Maps use clean-state candidates for intermediate correction toward reward-tilted distributions \citep{potaptchik2026metaflowmapsenable,holderrieth2026diamondmapsefficientreward}. Their implicit conditional generators do not directly provide the proposal densities needed for Boltzmann importance weights. \ourmethod{} supplies them through a conditional normalizing flow.

\section{Experiments}
\label{sec:experiments}

\paragraph{Benchmarks.}
We evaluate on the five all-atom peptide systems of \citet{tan2025scalable} reusing their MD training trajectories, splits, force fields, and evaluation protocol throughout
so our numbers are directly comparable to theirs. We report the energy Wasserstein distance (\energyw, sensitive to fine local structure) and the dihedral torus Wasserstein distance (\torusw, sensitive to metastable-state occupancy), each against $10^4$ held-out reference configurations over three seeds (\Cref{app:datasets}).

\paragraph{Baselines.}
We compare \ourmethod{} (variants \ourmethod{}-IS and \ourmethod{}-MIS) against SNIS, the exact-Jacobian flow maps FALCON and FALCON-A \citep{rehman2025falconfewstepaccuratelikelihoods}, and the distilled-density flow maps F2D2 \citep{ai2026jointdistillationfastlikelihood} and SCALLOP \citep{ouyang2026fewstep}. The flow-maps are distilled from the same per-system velocity teacher, which uses a DiT-style backbone \citep{peebles2023scalable}. Our SNIS baseline applies direct self-normalized importance sampling to an unconditional molecular flow proposal trained following \citet{tan2025scalable}, without their annealed SMC procedure. %
We additionally compare against the NF-based SMC sampler from \citet{tan2025scalable} in \Cref{tab:results_sbg}: it does not consistently outperform the cheaper direct-SNIS baseline, which we retain in the main comparison. %
For each system, their total wall-clock training budget, including the teacher and flow-map pretraining, matches the training time of the selected \ourmethod{} checkpoint (\Cref{sec:hyperparameters}).

\paragraph{Budgets.}
Proposal budgets otherwise follow each method's practical operating point: \ourmethod{} draws its final $10^4$ particles directly, while SNIS and the four flow-map baselines first draw $10^6$ proposals and then resample $10^4$ samples for evaluation (except FALCON/FALCON-A on chignolin, where exact per-step Jacobian costs cap the pool at $10^5$). For \ourmethod{}, we run 512 MAExL steps followed by 128 post-hoc MALA steps targeting the Boltzmann distribution. For all other methods, we apply enough post-hoc MALA steps to the final $10^4$ samples to match \ourmethod{}'s total target-energy evaluations; all metrics use $10^4$ final samples. \Cref{tab:hparam_langevin_cost} compares the cost of this refinement to the total cost of each method, and \Cref{tab:posthoc_mala_steps} reports the number of post-hoc MALA steps used for each method.

\subsection{Main results}
\Cref{tab:wasserstein} reports the energy and dihedral Wasserstein distances, while \Cref{tab:cost} reports training and inference times. The systems are ordered by increasing size in both tables, from ALA-2 to chignolin.

\begin{table*}[t]
\centering
\caption{Energy and torus Wasserstein distances on ALA-$N$ and chignolin systems, from 3 random seeds, computed on the test split vs.\ $10^4$ reference samples. Best \textbf{bold}, second \underline{underlined}.}
\label{tab:wasserstein}
\scriptsize
\renewcommand{\arraystretch}{1.25}
\setlength{\tabcolsep}{3pt}
\resizebox{\linewidth}{!}{%
\begin{tabular}{c l c H H c c c c}
\toprule
 & & SNIS & \ourmethod{}-IS & \ourmethod{}-MIS & FALCON & FALCON-A & F2D2 & SCALLOP \\
\midrule
\multirow{2}{*}{\textbf{ALA-2}} & $\mathcal{E}$-$\mathcal{W}_2$ & $0.768\err{0.031}$ & $\best{0.132}\err{0.013}$ & $\second{0.242}\err{0.080}$ & $0.759\err{0.047}$ & $0.595\err{0.329}$ & $0.890\err{0.030}$ & $0.874\err{0.033}$ \\
 & $\mathbb{T}$-$\mathcal{W}_2$ & $\best{0.242}\err{0.012}$ & $1.050\err{0.011}$ & $1.239\err{0.006}$ & $0.519\err{0.253}$ & $0.862\err{1.089}$ & $0.752\err{0.071}$ & $\second{0.374}\err{0.043}$ \\
\midrule
\multirow{2}{*}{\textbf{ALA-3}} & $\mathcal{E}$-$\mathcal{W}_2$ & $0.268\err{0.067}$ & $\second{0.164}\err{0.011}$ & $\best{0.157}\err{0.009}$ & $0.181\err{0.049}$ & $0.241\err{0.144}$ & $0.553\err{0.133}$ & $0.703\err{0.186}$ \\
 & $\mathbb{T}$-$\mathcal{W}_2$ & $\best{0.362}\err{0.019}$ & $0.490\err{0.016}$ & $0.465\err{0.007}$ & $\second{0.463}\err{0.170}$ & $1.100\err{0.706}$ & $1.049\err{0.476}$ & $0.847\err{0.196}$ \\
\midrule
\multirow{2}{*}{\textbf{ALA-4}} & $\mathcal{E}$-$\mathcal{W}_2$ & $1.515\err{0.351}$ & $\best{0.988}\err{0.027}$ & $\second{1.107}\err{0.013}$ & $1.763\err{0.192}$ & $1.985\err{0.737}$ & $2.748\err{2.096}$ & $2.981\err{0.970}$ \\
 & $\mathbb{T}$-$\mathcal{W}_2$ & $\best{0.777}\err{0.131}$ & $\second{0.824}\err{0.041}$ & $0.882\err{0.051}$ & $0.893\err{0.140}$ & $1.462\err{1.092}$ & $2.430\err{0.115}$ & $2.368\err{0.189}$ \\
\midrule
\multirow{2}{*}{\textbf{ALA-6}} & $\mathcal{E}$-$\mathcal{W}_2$ & $0.923\err{1.017}$ & $1.018\err{0.063}$ & $1.301\err{0.092}$ & $\best{0.610}\err{0.410}$ & $\second{0.680}\err{0.493}$ & $2.161\err{1.723}$ & $3.767\err{1.724}$ \\
 & $\mathbb{T}$-$\mathcal{W}_2$ & $1.767\err{0.675}$ & $\best{0.990}\err{0.026}$ & $\second{1.027}\err{0.021}$ & $1.716\err{0.332}$ & $1.557\err{0.025}$ & $2.927\err{0.288}$ & $3.126\err{0.462}$ \\
\midrule
\multirow{2}{*}{\textbf{Chignolin}} & $\mathcal{E}$-$\mathcal{W}_2$ & $7.748\err{5.290}$ & $\best{4.331}\err{0.262}$ & $6.112\err{0.129}$ & $4.807\err{1.066}$ & $5.028\err{3.624}$ & $4.793\err{2.335}$ & $\second{4.770}\err{1.076}$ \\
 & $\mathbb{T}$-$\mathcal{W}_2$ & $3.859\err{0.167}$ & $\second{2.597}\err{0.054}$ & $\best{2.581}\err{0.018}$ & $2.989\err{0.085}$ & $3.448\err{0.619}$ & $3.930\err{0.450}$ & $3.568\err{0.200}$ \\
\bottomrule
\end{tabular}%
}
\end{table*}

\paragraph{Distributional accuracy across syste, scales.} 
\Cref{tab:wasserstein} shows that \ourmethod{}-IS/MIS lead every system except ALA-6 on $\mathcal{E}$-$\mathcal{W}_2$. %
On the smallest systems ALA-2/ALA-3, SNIS 
achieves the lowest \torusw, reflecting the closest match to the reference dihedral-angle distribution. However, SNIS's \torusw\space rises on the larger systems. In contrast,
\ourmethod{}'s progressive correction 
maintains stronger performance, achieving the lowest $\mathbb{T}$-$\mathcal{W}_2$ on the largest systems, ALA-6 and chignolin, %
with FALCON and FALCON-A in between.
FALCON achieves the lowest $\mathcal{E}$-$\mathcal{W}_2$ on ALA-6, where \ourmethod{} does not lead on that metric. F2D2 and SCALLOP have substantially higher $\mathcal{E}$-$\mathcal{W}_2$ on ALA-4 and ALA-6.

We also ablate \ourmethod{}'s in-loop MAExL: it improves energy accuracy and stability with little change in torsion distances, while increasing from 512 to 1024 steps brings little further benefit for IS (\Cref{tab:maexl_ablation}). Separately, removing post-hoc MALA from all methods changes \ourmethod{}'s results relatively little but substantially worsens energy accuracy for several baselines (\Cref{tab:post_hoc_mala}).

\begin{wraptable}[17]{r}{0.5\linewidth}
\vspace{-1.2em}
\centering
\caption{Inference time at the matched target-energy budget (sampling + post-hoc MALA ladder) and total training wall-clock across all required stages (mean $\pm$ std, 3 seeds). Chignolin FALCON/FALCON-A ran at a reduced $10^5$-sample budget (vs. $10^6$ elsewhere; $10^4$ for \ourmethod{}).}
\label{tab:cost}
\tiny
\renewcommand{\arraystretch}{1.3}
\setlength{\tabcolsep}{1pt}
\resizebox{\linewidth}{!}{%
\begin{tabular}{c l c H H c c c c}
\toprule
& & SNIS & \ourmethod{}-IS & \ourmethod{}-MIS & FALCON & FALCON-A & F2D2 & SCALLOP \\
\midrule
\multirow{2}{*}{\textbf{ALA-2}} & Inf. (m) & $2.0\err{0.0}$ & $2.3\err{0.2}$ & $2.1\err{0.2}$ & $24.2\err{0.0}$ & $24.1\err{0.1}$ & $0.8\err{0.0}$ & $0.8\err{0.0}$ \\
& Train (h) & $3.4$ & $4.3$ & $4.3$ & $4.8$ & $5.0$ & $6.3$ & $5.6$ \\
\midrule
\multirow{2}{*}{\textbf{ALA-3}} & Inf. (m) & $2.1\err{0.0}$ & $2.7\err{0.0}$ & $2.5\err{0.2}$ & $\gtrsim 129$ & $\gtrsim 129$ & $2.0\err{0.0}$ & $2.0\err{0.0}$ \\
& Train (h) & $4.7$ & $6.0$ & $6.0$ & $7.1$ & $7.2$ & $6.5$ & $5.8$ \\
\midrule
\multirow{2}{*}{\textbf{ALA-4}} & Inf. (m) & $2.8\err{0.1}$ & $3.1\err{0.2}$ & $2.7\err{0.2}$ & $\gtrsim 202$ & $\gtrsim 201$ & $2.4\err{0.0}$ & $2.4\err{0.0}$ \\
& Train (h) & $5.8$ & $5.3$ & $5.3$ & $7.1$ & $7.1$ & $5.6$ & $5.1$ \\
\midrule
\multirow{2}{*}{\textbf{ALA-6}} & Inf. (m) & $6.2\err{0.0}$ & $7.5\err{0.2}$ & $6.8\err{0.2}$ & $\gtrsim 991$ & $\gtrsim 992$ & $6.7\err{0.0}$ & $6.7\err{0.1}$ \\
& Train (h) & $11.7$ & $12.8$ & $10.2$ & $15.1$ & $15.1$ & $10.3$ & $9.8$ \\
\midrule
\multirow{2}{*}{\textbf{Chig.}} & Inf. (m) & $40.7\err{0.0}$ & $47.2\err{0.3}$ & $37.7\err{0.0}$ & $\gtrsim 2820$ & $\gtrsim 2818$ & $73.0\err{0.1}$ & $73.1\err{0.0}$ \\
& Train (h) & $13.3$ & $21.7$ & $21.7$ & $24.4$ & $24.0$ & $22.2$ & $23.4$ \\
\bottomrule
\end{tabular}%
}
\end{wraptable}

\paragraph{The accuracy-efficiency trade-off of flow-map methods.} 
\Cref{tab:cost} shows that \ourmethod{}-IS/MIS have comparable training and inference time to SNIS, although they require more training time on chignolin.  FALCON, F2D2, and SCALLOP use different approximations of proposal densities,  resulting in distinct accuracy-efficiency tradeoffs.  
F2D2 and SCALLOP predict density changes with a learned divergence head, with inference times comparable to \ourmethod{} and SNIS on the ALA systems while roughly 1.6--2x slower on chignolin. %
FALCONs instead compute full flow-map Jacobians; without guaranteed invertibility, their determinants need not yield exact proposal densities.%
Their inference is roughly $11\times$ slower than SNIS on ALA-2 and $48$--$129\times$ slower on ALA-3 through ALA-6. On chignolin, a full $10^6$-proposal evaluation was projected at $\sim 500$ GPU-hours per seed (\Cref{subsec:hparam_inference}), forcing a reduced pool of $10^5$—\, still $\sim 60\times$ slower than SNIS or \ourmethod{}-IS. 
We additionally show that under SNIS, distilled flow maps do not consistently outperform their velocity teacher (\Cref{tab:flow_map_vs_velocity}).%
By contrast, \ourmethod{} computes its conditional proposal density exactly from the flow's tractable Jacobian.

\paragraph{\ourmethod{}: IS versus MIS.} \ourmethod{}-IS and \ourmethod{}-MIS perform similarly,
with MIS using a smaller candidate budget. 
MIS reduces inference time most clearly on chignolin, with smaller differences on ALA systems.
At matched candidate budgets, IS and MIS achieve comparable inner ESS, although both exhibit substantial weight concentration at low noise, especially on chignolin (\Cref{fig:inner_ess_seqis_seqmis}).%

Overall, \ourmethod{} delivers strong performance at comparable training cost. Its inference time is broadly comparable to SNIS, F2D2, and SCALLOP and much shorter than FALCON and FALCON-A. Its performance advantage is most pronounced on the larger systems.

\section{Conclusion}

We presented \ourmethod{}, a Boltzmann Generator that parameterizes a stochastic flow map with a conditional normalizing flow, correcting each denoising transition via tractable SNIS. This yields consistent samples with theoretical error control and strong empirical performance across peptide systems, with no approximate likelihood. Currently, \ourmethod{} is trained per system; a natural next step is transferability across peptides as in \citet{tan2025scalable}, learning a single conditional flow that generalizes across sequences and lengths. 
One other promising directions would be to do training with the reverse KL in \Cref{subsec:method:training} to further refining the model.

\subsection*{Acknowledgement}

RKOY acknowledges the UK Engineering and Physical Sciences Research Council (EPSRC) grant EP/L016516/1 for the University of Cambridge Centre for Doctoral Training, the Cambridge Centre for Analysis. The computations reported in this paper were performed using resources made available by the Flatiron Institute, a division of the Simons Foundation.

\newpage

\subsection*{AI use statement}

We used generative AI tools to improve the writing quality of this manuscript
(grammar, phrasing, and clarity), but not to generate full sections of the
paper from scratch. All technical content, claims, and structure originate
from the authors. We also used generative AI tools to assist in writing code
for our experiments. All AI-assisted text and code were reviewed by the
authors, and we take full responsibility for the final content of this work.

\subsection*{Reproducibility statement}

We provide substantial implementation and experimental details in the
appendix, including the conditional flow architectures (\Cref{app:molecular-tarflow}), the
molecular symmetry correction (\Cref{app:com-adjustment-conditional}), the multiple importance sampling
construction (\Cref{app:mis}, the MCMC rejuvenation scheme (\Cref{app:maexl}), the
proofs of our theoretical guarantees (\Cref{app:sec:proofs}), and the full experimental
setup and evaluation protocol (\Cref{app:datasets,sec:hyperparameters}).

\bibliographystyle{iclr2027_conference}
\bibliography{refs}

@inproceedings{lipman2023flow,
	title        = {Flow Matching for Generative Modeling},
	author       = {Y. Lipman and R. T. Q. Chen and H. Ben-Hamu and M. Nickel and M. Le},
	year         = 2023,
	booktitle    = {International Conference on Learning Representations}
}

@article{albergo2025interpolant,
	title        = {Stochastic Interpolants: A Unifying Framework for Flows and Diffusions},
	author       = {M. Albergo and N. M. Boffi and E. Vanden-Eijnden},
	year         = 2025,
	journal      = {Journal of Machine Learning Research},
	volume       = 26,
	number       = 209,
	pages        = {1--80}
}

@article{Hutchinson01011990,
	title        = {A stochastic estimator of the trace of the influence matrix for laplacian smoothing splines},
	author       = {M.F. Hutchinson},
	year         = 1990,
	journal      = {Communications in Statistics - Simulation and Computation},
	publisher    = {Taylor \& Francis},
	volume       = 19,
	number       = 2,
	pages        = {433--450},
	doi          = {10.1080/03610919008812866},
	url          = {https://doi.org/10.1080/03610919008812866},
	eprint       = {https://doi.org/10.1080/03610919008812866}
}

@inproceedings{tan2025scalable,
	title        = {Scalable Equilibrium Sampling with Sequential Boltzmann Generators},
	author       = {Tan, Charlie B. and Bose, Joey and Lin, Chen and Klein, Leon and Bronstein, Michael M. and Tong, Alexander},
	year         = 2025,
	month        = {13--19 Jul},
	booktitle    = {Proceedings of the 42nd International Conference on Machine Learning},
	publisher    = {PMLR},
	series       = {Proceedings of Machine Learning Research},
	volume       = 267,
	pages        = {58467--58498},
	url          = {https://proceedings.mlr.press/v267/tan25a.html},
	editor       = {Singh, Aarti and Fazel, Maryam and Hsu, Daniel and Lacoste-Julien, Simon and Berkenkamp, Felix and Maharaj, Tegan and Wagstaff, Kiri and Zhu, Jerry}
}

@inproceedings{zhai2025normalizingflowscapablegenerative,
	title        = {Normalizing Flows are Capable Generative Models},
	author       = {Zhai, Shuangfei and Zhang, Ruixiang and Nakkiran, Preetum and Berthelot, David and Gu, Jiatao and Zheng, Huangjie and Chen, Tianrong and Bautista, Miguel \'{A}ngel and Jaitly, Navdeep and Susskind, Joshua M.},
	year         = 2025,
	month        = {13--19 Jul},
	booktitle    = {Proceedings of the 42nd International Conference on Machine Learning},
	publisher    = {PMLR},
	series       = {Proceedings of Machine Learning Research},
	volume       = 267,
	pages        = {74348--74369},
	url          = {https://proceedings.mlr.press/v267/zhai25d.html},
	editor       = {Singh, Aarti and Fazel, Maryam and Hsu, Daniel and Lacoste-Julien, Simon and Berkenkamp, Felix and Maharaj, Tegan and Wagstaff, Kiri and Zhu, Jerry}
}

@inproceedings{draxler2024freeformflowsmakearchitecture,
	title        = {Free-form Flows: Make Any Architecture a Normalizing Flow},
	author       = {Draxler, Felix and Sorrenson, Peter and Zimmermann, Lea and Rousselot, Armand and K\"{o}the, Ullrich},
	year         = 2024,
	month        = {02--04 May},
	booktitle    = {Proceedings of The 27th International Conference on Artificial Intelligence and Statistics},
	publisher    = {PMLR},
	series       = {Proceedings of Machine Learning Research},
	volume       = 238,
	pages        = {2197--2205},
	url          = {https://proceedings.mlr.press/v238/draxler24a.html},
	editor       = {Dasgupta, Sanjoy and Mandt, Stephan and Li, Yingzhen}
}

@inproceedings{rezende2015variational,
	title        = {Variational inference with normalizing flows},
	author       = {Rezende, Danilo and Mohamed, Shakir},
	year         = 2015,
	booktitle    = {International conference on machine learning},
	pages        = {1530--1538},
	organization = {PMLR}
}

@inproceedings{potaptchik2026metaflowmapsenable,
	title        = {Meta Flow Maps enable scalable reward alignment},
	author       = {Peter Potaptchik and Adhi Saravanan and Abbas Mammadov and Alvaro Prat and Michael Samuel Albergo and Yee Whye Teh},
	year         = 2026,
	booktitle    = {Forty-third International Conference on Machine Learning},
	url          = {https://openreview.net/forum?id=K5gV8Yptne}
}

@inproceedings{holderrieth2026diamondmapsefficientreward,
	title        = {Diamond Maps: Efficient Reward Alignment via Stochastic Flow Maps},
	author       = {Peter Holderrieth and Douglas Chen and Luca Eyring and Ishin Shah and Giri Anantharaman and Yutong He and Zeynep Akata and Tommi Jaakkola and Nicholas Matthew Boffi and Max Simchowitz},
	year         = 2026,
	booktitle    = {Forty-third International Conference on Machine Learning},
	url          = {https://openreview.net/forum?id=tAtpSwjsCB}
}

@inproceedings{debortoli2025distributionaldiffusionmodelsscoring,
	title        = {Distributional Diffusion Models with Scoring Rules},
	author       = {De Bortoli, Valentin and Galashov, Alexandre and Guntupalli, J Swaroop and Zhou, Guangyao and Murphy, Kevin Patrick and Gretton, Arthur and Doucet, Arnaud},
	year         = 2025,
	month        = {13--19 Jul},
	booktitle    = {Proceedings of the 42nd International Conference on Machine Learning},
	publisher    = {PMLR},
	series       = {Proceedings of Machine Learning Research},
	volume       = 267,
	pages        = {12632--12676},
	url          = {https://proceedings.mlr.press/v267/de-bortoli25b.html},
	editor       = {Singh, Aarti and Fazel, Maryam and Hsu, Daniel and Lacoste-Julien, Simon and Berkenkamp, Felix and Maharaj, Tegan and Wagstaff, Kiri and Zhu, Jerry}
}

@inproceedings{xiao2022tacklinggenerativelearningtrilemma,
	title        = {Tackling the Generative Learning Trilemma with Denoising Diffusion {GAN}s},
	author       = {Zhisheng Xiao and Karsten Kreis and Arash Vahdat},
	year         = 2022,
	booktitle    = {International Conference on Learning Representations},
	url          = {https://openreview.net/forum?id=JprM0p-q0Co}
}

@article{noe2019boltzmann,
	title        = {Boltzmann generators: Sampling equilibrium states of many-body systems with deep learning},
	author       = {No{\'e}, Frank and Olsson, Simon and K{\"o}hler, Jonas and Wu, Hao},
	year         = 2019,
	journal      = {Science},
	publisher    = {American Association for the Advancement of Science},
	volume       = 365,
	number       = 6457,
	pages        = {eaaw1147}
}

@inproceedings{dinh2017density,
	title        = {Density estimation using Real {NVP}},
	author       = {Laurent Dinh and Jascha Sohl-Dickstein and Samy Bengio},
	year         = 2017,
	booktitle    = {International Conference on Learning Representations},
	url          = {https://openreview.net/forum?id=HkpbnH9lx}
}

@misc{debortoli2024targetscorematching,
	title        = {Target Score Matching},
	author       = {Valentin De Bortoli and Michael Hutchinson and Peter Wirnsberger and Arnaud Doucet},
	year         = 2024,
	url          = {https://arxiv.org/abs/2402.08667},
	eprint       = {2402.08667},
	archiveprefix = {arXiv},
	primaryclass = {cs.LG}
}

@misc{kahouli2025controlvariatescorematching,
	title        = {Control Variate Score Matching for Diffusion Models},
	author       = {Khaled Kahouli and Romuald Elie and Klaus-Robert Müller and Quentin Berthet and Oliver T. Unke and Arnaud Doucet},
	year         = 2025,
	url          = {https://arxiv.org/abs/2512.20003},
	eprint       = {2512.20003},
	archiveprefix = {arXiv},
	primaryclass = {cs.LG}
}

@misc{young2026diffusionpathsamplerssequential,
	title        = {Diffusion Path Samplers via Sequential Monte Carlo},
	author       = {James Matthew Young and Paula Cordero-Encinar and Sebastian Reich and Andrew Duncan and O. Deniz Akyildiz},
	year         = 2026,
	url          = {https://arxiv.org/abs/2601.21951},
	eprint       = {2601.21951},
	archiveprefix = {arXiv},
	primaryclass = {stat.ML}
}

@inproceedings{blessing2026bridgematchingsamplerscalable,
	title        = {Bridge Matching Sampler: Scalable Sampling via Generalized Fixed-Point Diffusion Matching},
	author       = {Denis Blessing and Lorenz Richter and Julius Berner and Egor Malitskiy and Gerhard Neumann},
	year         = 2026,
	booktitle    = {Forty-third International Conference on Machine Learning},
	url          = {https://openreview.net/forum?id=gUbNUNuJbO}
}

@article{roberts1998optimal,
	title        = {Optimal scaling of discrete approximations to Langevin diffusions},
	author       = {Roberts, Gareth O. and Rosenthal, Jeffrey S.},
	year         = 1998,
	journal      = {Journal of the Royal Statistical Society: Series B (Statistical Methodology)},
	volume       = 60,
	number       = 1,
	pages        = {255--268},
	doi          = {https://doi.org/10.1111/1467-9868.00123},
	url          = {https://rss.onlinelibrary.wiley.com/doi/abs/10.1111/1467-9868.00123},
	eprint       = {https://rss.onlinelibrary.wiley.com/doi/pdf/10.1111/1467-9868.00123}
}

@article{eastman2017openmm,
	title        = {OpenMM 7: Rapid development of high performance algorithms for molecular dynamics},
	author       = {Eastman, Peter and Swails, Jason and Chodera, John D and McGibbon, Robert T and Zhao, Yutong and Beauchamp, Kyle A and Wang, Lee-Ping and Simmonett, Andrew C and Harrigan, Matthew P and Stern, Chaya D and others},
	year         = 2017,
	journal      = {PLoS computational biology},
	volume       = 13,
	number       = 7,
	pages        = {e1005659}
}

@inproceedings{klein2023timewarp,
	title        = {Timewarp: Transferable Acceleration of Molecular Dynamics by Learning Time-Coarsened Dynamics},
	author       = {Leon Klein and Andrew Y. K. Foong and Tor Erlend Fjelde and Bruno Kacper Mlodozeniec and Marc Brockschmidt and Sebastian Nowozin and Frank Noe and Ryota Tomioka},
	year         = 2023,
	booktitle    = {Thirty-seventh Conference on Neural Information Processing Systems},
	url          = {https://openreview.net/forum?id=EjMLpTgvKH}
}

@article{dibak2021temperature,
	title        = {Temperature steerable flows and {Boltzmann} generators},
	author       = {Dibak, Manuel and Klein, Leon and Kr\"amer, Andreas and No\'e, Frank},
	year         = 2022,
	month        = {Oct},
	journal      = {Phys. Rev. Res.},
	volume       = 4,
	pages        = {L042005},
	doi          = {10.1103/PhysRevResearch.4.L042005},
	issue        = 4,
	numpages     = 6
}

@article{honda200410,
	title        = {10 residue folded peptide designed by segment statistics},
	author       = {Honda, Shinya and Yamasaki, Kazuhiko and Sawada, Yoshito and Morii, Hisayuki},
	year         = 2004,
	journal      = {Structure},
	publisher    = {Elsevier},
	volume       = 12,
	number       = 8,
	pages        = {1507--1518}
}

@article{flamary2021pot,
	title        = {POT: Python Optimal Transport},
	author       = {R{\'e}mi Flamary and Nicolas Courty and Alexandre Gramfort and Mokhtar Z. Alaya and Aur{\'e}lie Boisbunon and Stanislas Chambon and Laetitia Chapel and Adrien Corenflos and Kilian Fatras and Nemo Fournier and L{\'e}o Gautheron and Nathalie T.H. Gayraud and Hicham Janati and Alain Rakotomamonjy and Ievgen Redko and Antoine Rolet and Antony Schutz and Vivien Seguy and Danica J. Sutherland and Romain Tavenard and Alexander Tong and Titouan Vayer},
	year         = 2021,
	journal      = {Journal of Machine Learning Research},
	volume       = 22,
	number       = 78,
	pages        = {1--8}
}

@inproceedings{kohler2020equivariant,
	title        = {Equivariant Flows: Exact Likelihood Generative Learning for Symmetric Densities},
	author       = {K{\"o}hler, Jonas and Klein, Leon and Noe, Frank},
	year         = 2020,
	month        = {13--18 Jul},
	booktitle    = {Proceedings of the 37th International Conference on Machine Learning},
	publisher    = {PMLR},
	series       = {Proceedings of Machine Learning Research},
	volume       = 119,
	pages        = {5361--5370},
	url          = {https://proceedings.mlr.press/v119/kohler20a.html},
	editor       = {III, Hal Daumé and Singh, Aarti}
}

@inproceedings{klein2023equivariant,
	title        = {Equivariant flow matching},
	author       = {Klein, Leon and Kr\"{a}mer, Andreas and Noe, Frank},
	year         = 2023,
	booktitle    = {Advances in Neural Information Processing Systems},
	publisher    = {Curran Associates, Inc.},
	volume       = 36,
	pages        = {59886--59910},
	url          = {https://proceedings.neurips.cc/paper_files/paper/2023/file/bc827452450356f9f558f4e4568d553b-Paper-Conference.pdf},
	editor       = {A. Oh and T. Naumann and A. Globerson and K. Saenko and M. Hardt and S. Levine}
}

@book{liu2001monte,
	title        = {Monte Carlo Strategies in Scientific Computing},
	author       = {Liu, Jun S.},
	year         = 2001,
	publisher    = {Springer},
	address      = {New York, NY}
}

@book{frenkel2001understanding,
	title        = {Understanding Molecular Simulation: From Algorithms to Applications},
	author       = {Frenkel, Daan and Smit, Berend},
	year         = 2001,
	publisher    = {Academic Press},
	address      = {San Diego, CA},
	edition      = 2
}

@article{noe2009constructing,
	title        = {Constructing the equilibrium ensemble of folding pathways from short off-equilibrium simulations},
	author       = {No{\'e}, Frank and Sch{\"u}tte, Christof and Vanden-Eijnden, Eric and Reich, Lothar and Weikl, Thomas R.},
	year         = 2009,
	journal      = {Proceedings of the National Academy of Sciences},
	volume       = 106,
	number       = 45,
	pages        = {19011--19016}
}

@article{lindorff2011fast,
	title        = {How fast-folding proteins fold},
	author       = {Lindorff-Larsen, Kresten and Piana, Stefano and Dror, Ron O. and Shaw, David E.},
	year         = 2011,
	journal      = {Science},
	volume       = 334,
	number       = 6055,
	pages        = {517--520}
}

@article{buch2011high,
	title        = {High-Throughput All-Atom Molecular Dynamics Simulations Using Distributed Computing},
	author       = {Buch, Ivet and Giorgino, Toni and De Fabritiis, Gianni},
	year         = 2011,
	journal      = {Journal of Chemical Information and Modeling},
	volume       = 51,
	number       = 8,
	pages        = {1824--1830}
}

@book{leimkuhler2015molecular,
	title        = {Molecular Dynamics: With Deterministic and Stochastic Numerical Methods},
	author       = {Leimkuhler, Benedict and Matthews, Charles},
	year         = 2015,
	publisher    = {Springer},
	address      = {Cham}
}

@article{wirnsberger2020targeted,
	title        = {Targeted free energy perturbation},
	author       = {Wirnsberger, Peter and Adler, Jonas and K{\"o}hler, Jonas and No{\'e}, Frank},
	year         = 2020,
	journal      = {Proceedings of the National Academy of Sciences},
	volume       = 117,
	number       = 18,
	pages        = {9902--9908}
}

@article{muller2019neural,
	title        = {Neural Importance Sampling},
	author       = {M{\"u}ller, Thomas and McWilliams, Brian and Rousselle, Fabrice and Gross, Markus and Nov{\'a}k, Jan},
	year         = 2019,
	journal      = {ACM Transactions on Graphics},
	volume       = 38,
	number       = 5,
	pages        = {145:1--145:19}
}

@article{papamakarios2021normalizing,
	title        = {Normalizing Flows for Probabilistic Modeling and Inference},
	author       = {Papamakarios, George and Nalisnick, Eric and Rezende, Danilo Jimenez and Mohamed, Shakir and Lakshminarayanan, Balaji},
	year         = 2021,
	journal      = {Journal of Machine Learning Research},
	volume       = 22,
	number       = 57,
	pages        = {1--64}
}

@inproceedings{rehman2025falconfewstepaccuratelikelihoods,
	title        = {{FALCON}: Few-step Accurate Likelihoods for Continuous Flows},
	author       = {Danyal Rehman and Tara Akhound-Sadegh and Artem Gazizov and Yoshua Bengio and Alexander Tong},
	year         = 2026,
	booktitle    = {The Fourteenth International Conference on Learning Representations},
	url          = {https://openreview.net/forum?id=FbssShlI4N}
}

@inproceedings{rehman2026autoregressiveboltzmanngenerators,
	title        = {Autoregressive Boltzmann Generators},
	author       = {Danyal Rehman and Charlie B. Tan and Yoshua Bengio and Joey Bose and Alexander Tong},
	year         = 2026,
	booktitle    = {Forty-third International Conference on Machine Learning},
	url          = {https://openreview.net/forum?id=75AYDsndHP}
}

@article{grenioux2026diffusionbasedannealedboltzmanngenerators,
	title        = {Diffusion-based Annealed Boltzmann Generators : benefits, pitfalls and hopes},
	author       = {Louis Grenioux and Maxence Noble},
	year         = 2026,
	journal      = {Transactions on Machine Learning Research},
	issn         = {2835-8856},
	url          = {https://openreview.net/forum?id=la4FDaeIbw},
	note         = {}
}

@inproceedings{chen2018neural,
	title        = {Neural Ordinary Differential Equations},
	author       = {Chen, Ricky T. Q. and Rubanova, Yulia and Bettencourt, Jesse and Duvenaud, David},
	year         = 2018,
	booktitle    = {Advances in Neural Information Processing Systems},
	volume       = 31
}

@article{Ho2020denoising,
	title        = {Denoising diffusion probabilistic models},
	author       = {Ho, Jonathan and Jain, Ajay and Abbeel, Pieter},
	year         = 2020,
	journal      = {Advances in neural information processing systems},
	volume       = 33,
	pages        = {6840--6851}
}

@inproceedings{Song2021score,
	title        = {Score-Based Generative Modeling through Stochastic Differential Equations},
	author       = {Song, Yang and Sohl-Dickstein, Jascha and Kingma, Diederik P and Kumar, Abhishek and Ermon, Stefano and Poole, Ben},
	year         = 2021,
	booktitle    = {The Ninth International Conference on Learning Representations},
	url          = {https://openreview.net/forum?id=PxTIG12RRHS}
}

@inproceedings{karras2022elucidating,
	title        = {Elucidating the Design Space of Diffusion-Based Generative Models},
	author       = {Tero Karras and Miika Aittala and Timo Aila and Samuli Laine},
	year         = 2022,
	booktitle    = {Advances in Neural Information Processing Systems},
	url          = {https://openreview.net/forum?id=k7FuTOWMOc7},
	editor       = {Alice H. Oh and Alekh Agarwal and Danielle Belgrave and Kyunghyun Cho}
}

@article{Zhang2024efficient,
	title        = {Efficient and unbiased sampling of boltzmann distributions via consistency models},
	author       = {Zhang, Fengzhe and He, Jiajun and Midgley, Laurence I and Antor{\'a}n, Javier and Hern{\'a}ndez-Lobato, Jos{\'e} Miguel},
	year         = 2024,
	journal      = {arXiv preprint arXiv:2409.07323}
}

@article{zhang2025efficient,
	title        = {Efficient and Unbiased Sampling from Boltzmann Distributions via Variance-Tuned Diffusion Models},
	author       = {Fengzhe Zhang and Laurence Illing Midgley and Jos{\'e} Miguel Hern{\'a}ndez-Lobato},
	year         = 2025,
	journal      = {Transactions on Machine Learning Research},
	issn         = {2835-8856},
	url          = {https://openreview.net/forum?id=Jq2dcMCS5R},
	note         = {J2C Certification}
}

@article{Agapiou2017Importance,
	title        = {Importance Sampling: Intrinsic Dimension and Computational Cost},
	author       = {S. Agapiou and O. Papaspiliopoulos and D. Sanz-Alonso and A. M. Stuart},
	year         = 2017,
	journal      = {Statistical Science},
	publisher    = {Institute of Mathematical Statistics},
	volume       = 32,
	number       = 3,
	pages        = {405--431},
	issn         = {08834237, 21688745},
	url          = {http://www.jstor.org/stable/26408299},
	urldate      = {2025-06-19}
}

@article{Durmus2015quantitative,
	title        = {Quantitative bounds of convergence for geometrically ergodic Markov chain in the Wasserstein distance with application to the Metropolis Adjusted Langevin Algorithm},
	author       = {Durmus, Alain and Moulines, {\'E}ric},
	year         = 2015,
	month        = {Jan},
	day          = {01},
	journal      = {Statistics and Computing},
	volume       = 25,
	number       = 1,
	pages        = {5--19},
	doi          = {10.1007/s11222-014-9511-z},
	issn         = {1573-1375},
	url          = {https://doi.org/10.1007/s11222-014-9511-z}
}

@inproceedings{ouyang2026fewstep,
	title        = {Few-Step Boltzmann Generators via Scalable Likelihood Flow Maps},
	author       = {RuiKang OuYang and Hanlin Yu and Xinyue Ai and Yutong He and Nicholas Matthew Boffi and Pradeep Ravikumar and Jos{\'e} Miguel Hern{\'a}ndez-Lobato and Max Simchowitz and Benjamin Kurt Miller and Omar Chehab},
	year         = 2026,
	booktitle    = {ICML 2026 Workshop on Structured Probabilistic Inference {\&} Generative Modeling},
	url          = {https://openreview.net/forum?id=mfdTHHRIid}
}

@inproceedings{boffi2025buildconsistencymodellearning,
	title        = {How to build a consistency model: Learning flow maps via self-distillation},
	author       = {Nicholas Matthew Boffi and Michael Samuel Albergo and Eric Vanden-Eijnden},
	year         = 2025,
	booktitle    = {The Thirty-ninth Annual Conference on Neural Information Processing Systems},
	url          = {https://openreview.net/forum?id=Di5apl8HSH}
}

@inproceedings{ai2026jointdistillationfastlikelihood,
	title        = {Joint Distillation for Fast Likelihood Evaluation and Sampling in Flow-based Models},
	author       = {Xinyue Ai and Yutong He and Albert Gu and Ruslan Salakhutdinov and J Zico Kolter and Nicholas Matthew Boffi and Max Simchowitz},
	year         = 2026,
	booktitle    = {The Fourteenth International Conference on Learning Representations},
	url          = {https://openreview.net/forum?id=8uZ5UdIul2}
}

@article{woodard2009sufficient,
	title        = {Sufficient Conditions for Torpid Mixing of Parallel and Simulated Tempering},
	author       = {Woodard, Dawn B. and Schmidler, Scott C. and Huber, Mark},
	year         = 2009,
	journal      = {Electronic Journal of Probability},
	volume       = 14,
	pages        = {780--804}
}

@article{mate2023learning,
	title        = {Learning Interpolations between Boltzmann Densities},
	author       = {M{\'a}t{\'e}, B{\'a}lint and Fleuret, Fran{\c{c}}ois},
	year         = 2023,
	journal      = {Transactions on Machine Learning Research},
	issn         = {2835-8856},
	url          = {https://openreview.net/forum?id=TH6YrEcbth}
}

@article{gu2026starflow,
	title        = {STARFlow: Scaling Latent Normalizing Flows for High-resolution Image Synthesis},
	author       = {Gu, Jiatao and Chen, Tianrong and Berthelot, David and Zheng, Huangjie and Wang, Yuyang and Zhang, Ruixiang and Dinh, Laurent and Bautista, Miguel Angel and Susskind, Joshua and Zhai, Shuangfei},
	year         = 2026,
	journal      = {Advances in Neural Information Processing Systems},
	volume       = 38,
	pages        = {120986--121022}
}

@article{klein2024transferable,
	title        = {Transferable boltzmann generators},
	author       = {Klein, Leon and No{\'e}, Frank},
	year         = 2024,
	journal      = {Advances in Neural Information Processing Systems},
	volume       = 37,
	pages        = {45281--45314}
}

@inproceedings{grenioux2024stochastic,
	title        = {Stochastic Localization via Iterative Posterior Sampling},
	author       = {Grenioux, Louis and Noble, Maxence and Gabri\'{e}, Marylou and Oliviero Durmus, Alain},
	year         = 2024,
	booktitle    = {Proceedings of the 41st International Conference on Machine Learning},
	publisher    = {PMLR},
	series       = {Proceedings of Machine Learning Research},
	volume       = 235,
	pages        = {16337--16376},
	url          = {https://proceedings.mlr.press/v235/grenioux24a.html},
	editor       = {Salakhutdinov, Ruslan and Kolter, Zico and Heller, Katherine and Weller, Adrian and Oliver, Nuria and Scarlett, Jonathan and Berkenkamp, Felix}
}

@inproceedings{huang2018neuralautoregressiveflows,
	title        = {Neural Autoregressive Flows},
	author       = {Huang, Chin-Wei and Krueger, David and Lacoste, Alexandre and Courville, Aaron},
	year         = 2018,
	month        = {10--15 Jul},
	booktitle    = {Proceedings of the 35th International Conference on Machine Learning},
	publisher    = {PMLR},
	series       = {Proceedings of Machine Learning Research},
	volume       = 80,
	pages        = {2078--2087},
	url          = {https://proceedings.mlr.press/v80/huang18d.html},
	editor       = {Dy, Jennifer and Krause, Andreas}
}

@inproceedings{veach1995optimally,
	title        = {Optimally combining sampling techniques for Monte Carlo rendering},
	author       = {Veach, Eric and Guibas, Leonidas J.},
	year         = 1995,
	booktitle    = {Proceedings of the 22nd Annual Conference on Computer Graphics and Interactive Techniques},
	publisher    = {Association for Computing Machinery},
	address      = {New York, NY, USA},
	series       = {SIGGRAPH '95},
	pages        = {419–428},
	doi          = {10.1145/218380.218498},
	isbn         = {0897917014},
	url          = {https://doi.org/10.1145/218380.218498},
	numpages     = 10
}

@article{owen2000safe,
	title        = {Safe and Effective Importance Sampling},
	author       = {Art Owen and Yi Zhou},
	year         = 2000,
	journal      = {Journal of the American Statistical Association},
	publisher    = {[American Statistical Association, Taylor & Francis, Ltd.]},
	volume       = 95,
	number       = 449,
	pages        = {135--143},
	issn         = {01621459, 1537274X},
	url          = {http://www.jstor.org/stable/2669533},
	urldate      = {2026-09-09}
}

@misc{he2014optimalmixtureweightsmultiple,
	title        = {Optimal mixture weights in multiple importance sampling},
	author       = {Hera Y. He and Art B. Owen},
	year         = 2014,
	url          = {https://arxiv.org/abs/1411.3954},
	eprint       = {1411.3954},
	archiveprefix = {arXiv},
	primaryclass = {stat.CO}
}

@article{bugallo2017adaptive,
	title        = {Adaptive Importance Sampling: The past, the present, and the future},
	author       = {Bugallo, Monica F. and Elvira, Victor and Martino, Luca and Luengo, David and Miguez, Joaquin and Djuric, Petar M.},
	year         = 2017,
	journal      = {IEEE Signal Processing Magazine},
	volume       = 34,
	number       = 4,
	pages        = {60--79},
	doi          = {10.1109/MSP.2017.2699226}
}

@article{elvira2019generalized,
	title        = {{Generalized Multiple Importance Sampling}},
	author       = {V{\'i}ctor Elvira and Luca Martino and David Luengo and M{\'o}nica F. Bugallo},
	year         = 2019,
	journal      = {Statistical Science},
	publisher    = {Institute of Mathematical Statistics},
	volume       = 34,
	number       = 1,
	pages        = {129 -- 155},
	doi          = {10.1214/18-STS668},
	url          = {https://doi.org/10.1214/18-STS668}
}

@inproceedings{peebles2023scalable,
	title        = {Scalable Diffusion Models with Transformers},
	author       = {Peebles, William and Xie, Saining},
	year         = 2023,
	month        = {October},
	booktitle    = {Proceedings of the IEEE/CVF International Conference on Computer Vision (ICCV)},
	pages        = {4195--4205}
}

@inproceedings{ying2021graphormer,
	title        = {Do Transformers Really Perform Badly for Graph Representation?},
	author       = {Ying, Chengxuan and Cai, Tianle and Luo, Shengjie and Zheng, Shuxin and Ke, Guolin and He, Di and Shen, Yanming and Liu, Tie-Yan},
	year         = 2021,
	booktitle    = {Advances in Neural Information Processing Systems},
	publisher    = {Curran Associates, Inc.},
	volume       = 34,
	pages        = {28877--28888},
	url          = {https://proceedings.neurips.cc/paper_files/paper/2021/file/f1c1592588411002af340cbaedd6fc33-Paper.pdf},
	editor       = {M. Ranzato and A. Beygelzimer and Y. Dauphin and P.S. Liang and J. Wortman Vaughan}
}

@inproceedings{geng2025mean,
	title        = {Mean Flows for One-step Generative Modeling},
	author       = {Zhengyang Geng and Mingyang Deng and Xingjian Bai and J Zico Kolter and Kaiming He},
	year         = 2025,
	booktitle    = {The Thirty-ninth Annual Conference on Neural Information Processing Systems},
	url          = {https://openreview.net/forum?id=uWj4s7rMnR}
}

@inproceedings{wu2020stochasticnormalizingflows,
	title        = {Stochastic Normalizing Flows},
	author       = {Wu, Hao and K\"{o}hler, Jonas and Noe, Frank},
	year         = 2020,
	booktitle    = {Advances in Neural Information Processing Systems},
	publisher    = {Curran Associates, Inc.},
	volume       = 33,
	pages        = {5933--5944},
	url          = {https://proceedings.neurips.cc/paper_files/paper/2020/file/41d80bfc327ef980528426fc810a6d7a-Paper.pdf},
	editor       = {H. Larochelle and M. Ranzato and R. Hadsell and M.F. Balcan and H. Lin}
}

\clearpage
\startcontents[appendix]
\vbox{%
    \hsize\textwidth
    \linewidth\hsize
    \vskip 0.1in
    \hrule height 4pt
  \vskip 0.25in
  \vskip -\parskip%
    \centering
    {\LARGE\bf {Bridging Stochastic Flow Maps and Boltzmann Generators with Normalizing Flows\\ Appendix} \par}
     \vskip 0.29in
  \vskip -\parskip
  \hrule height 1pt
  \vskip 0.09in%
  }
\printcontents[appendix]{}{1}{\setcounter{tocdepth}{3}}

\clearpage %
\appendix

\crefformat{section}{Appendix~#2#1#3}
\Crefformat{section}{Appendix~#2#1#3}

\section{Details on the method}\label{app:sec:method}
\subsection{Theoretical details and proofs}\label{app:sec:proofs}

\paragraph{Notation.}
For a probability measure $\pi$ and a $\pi$-measurable function $f$, we write $\pi(f) = \mathbb{E}_\pi[f(X)] = \int f(x)\,\pi(\rmd x)$. For any bounded function $f$ and any $B \subset \R^d$, we set
\begin{align*}
    [f]_B = \sup_{x,x' \in B} \abs{f(x') - f(x)}.
\end{align*}
We say that a probability distribution $\pi$ on $\R^d$ has \emph{sub-Gaussian tails with center $m$ and rate $\kappa$} if for all $u > 0$,
\begin{align}\label{eq:subgaussian}
    \pi\left(\norm{Z - m} > \kappa(\sqrt{d} + u)\right) \leq \exp(-u^2).
\end{align}

\paragraph{Bridges and the operator $G_{s|t,x_t}$.}

\begin{figure}[t]
    \centering
    \includegraphics[width=\linewidth]{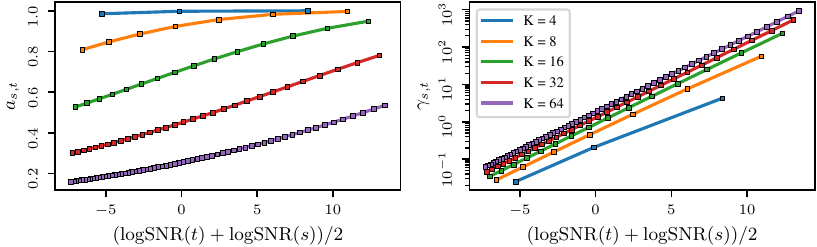}
    \caption{{Values of $a_{s,t}$ and $\gamma_{s,t}$ along the time grid of \citet{karras2022elucidating}.}}
    \label{fig:a_s_t}
\end{figure}

Throughout, $p_{s|t,0}(x_s | x_t,x_0)$ denotes the Gaussian bridge density. We write $\alpha_{s,t}=a_{s,t}$ for the clean-state coefficient used in the main text and use the standard parametrization
\begin{align*}
    \mathcal{L}\left(X_s \mid X_t = x_t, X_0 = x_0\right) = \mathcal{N}(\alpha_{s,t} x_0 + b_{s,t} x_t,\,\Sigma_{s,t}), \qquad \Sigma_{s,t} = \sigma_{s,t}^2\,\Idd,
\end{align*}
and define the \emph{bridge sensitivity} (see \Cref{fig:a_s_t})
\begin{align}\label{eq:bridge_sensitivity}
    \gamma_{t,s} = \frac{\alpha_s \sigma_{t|s}}{\sigma_t \sigma_s} = \frac{\alpha_{s,t}}{\sigma_{s,t}}.
\end{align}
Writing $A_t = \alpha_t^2/\sigma_t^2$ for the signal-to-noise ratio at time $t$ and using $\sigma_{t|s}^2 = \sigma_t^2 - (\alpha_t/\alpha_s)^2\sigma_s^2$, the bridge sensitivity is the square root of an SNR increment,
\begin{align}\label{eq:gamma_snr}
    \gamma_{t,s}^2 = \frac{\alpha_s^2}{\sigma_s^2} - \frac{\alpha_t^2}{\sigma_t^2} = A_s - A_t.
\end{align}
Since $t \mapsto A_t$ is nonincreasing, $\gamma_{t,s} \leq \sqrt{A_s} = \alpha_s/\sigma_s$, which is finite for every $s > 0$ and diverges as $s \to 0$, the bridge degenerating to a point mass. All schedules below therefore stop at a strictly positive time, see \Cref{rmk:endpoints}.
For any $p_s$-measurable $\phi$, we let
\begin{align*}
    (G_{s|t,x_t}\phi)(x_0) = \int \phi(x_s)\,p_{s|t,0}(x_s \mid x_t,x_0)\,\rmd x_s,
\end{align*}
so that
\begin{align*}
    p_{s|t}(\phi \mid x_t) = \int \phi(x_s)\,p_{s|t,0}(x_s \mid x_t,x_0)\,p_{0|t}(x_0 \mid x_t)\,\rmd x_s\,\rmd x_0 = p_{0|t}(G_{s|t,x_t}\phi \mid x_t).
\end{align*}

\paragraph{SNIS approximations.}
We assume access to a proposal $q_{0|t}$ that approximates $p_{0|t}$. Drawing $x^m_{0|t}\overset{\mathrm{iid}}{\sim} q_{0|t}(\cdot| x_t)$ for $m=1,\ldots,M$ and forming unnormalized importance weights
\begin{align*}
    \tilde w^m_{0|t} = \frac{p_{t|0}(x_t \mid x^m_{0|t})\,p(x^m_{0|t})}{q_{0|t}(x^m_{0|t} \mid x_t)},
     \qquad w^m_{0|t} = \frac{\tilde w^m_{0|t}}{\sum_{i=1}^M \tilde w^i_{0|t}},
\end{align*}
the SNIS approximation of $p_{0|t}$ is
\begin{align*}
    p^M_{0|t}(x_0 \mid x_t) = \sum_{m=1}^M  w^m_{0|t}\,\delta_{x^m_{0|t}}(x_0),
\end{align*}
and the induced SNIS approximation of $p_{s|t}$ is
\begin{align*}
    p^M_{s|t}(x_s \mid x_t) = \sum_{m=1}^M  w^m_{0|t}\,p_{s|t,0}(x_s \mid x_t,x^m_{0|t}).
\end{align*}
By the identity $p_{s|t}(\phi | x_t) = p_{0|t}(G_{s|t,x_t}\phi | x_t)$, we have $p^M_{s|t}(\phi | x_t) = p^M_{0|t}(G_{s|t,x_t}\phi | x_t)$.

\paragraph{Standing assumption (per time $t$).}
For a fixed $0\leq s < t \leq T$ and $x_t\in\R^d$, we say that the \emph{sub-Gaussian tail assumption} holds at $(t,x_t)$ if both $q_{0|t}(\cdot| x_t)$ and the normalized square-weight measure proportional to $\tilde w^2_{0|t}\,q_{0|t}(\cdot | x_t)$ have sub-Gaussian tails (in the sense of \eqref{eq:subgaussian}) with common center $m_t(x_t)$ and rate $\kappa_t(x_t)$. We further denote the chi-squared divergence
\begin{align*}
    \rho_t(x_t) = 1 + \chi^2\left(p_{0|t}(\cdot \mid x_t)\,\big\|\,q_{0|t}(\cdot \mid x_t)\right).
\end{align*}

\subsubsection{Controlling the per-step error}

We begin with a standard Wasserstein contraction property of the bridge.

\begin{proposition}\label{prop:wasserstein_bridge}
    Let $p \geq 1$, $x_t \in \R^d$, and assume that both $p_{0|t}(\cdot | x_t)$ and $q_{0|t}(\cdot | x_t)$ have finite $p$-th moment. Then
    \begin{align*}
        W_p\left(p_{s|t}(\cdot \mid x_t),\,q_{s|t}(\cdot \mid x_t)\right) \leq \alpha_{s,t}\,W_p\left(p_{0|t}(\cdot \mid x_t),\,q_{0|t}(\cdot \mid x_t)\right).
    \end{align*}
\end{proposition}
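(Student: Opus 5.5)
Both kernels are obtained by pushing a law on $x_0$ through the same affine-Gaussian map, so the plan is a synchronous coupling argument. Fix $x_t$ and let $\pi$ be an optimal coupling of $p_{0|t}(\cdot\mid x_t)$ and $q_{0|t}(\cdot\mid x_t)$ for the cost $\norm{x_0-x_0'}^p$. Both laws have finite $p$-th moment, so the infimum is attained and $W_p$ is finite. If one prefers to avoid existence of optimal couplings, an $\varepsilon$-optimal coupling gives the same conclusion after letting $\varepsilon\to0$.

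Draw $(X_0,X_0')\sim\pi$ and, independently, $\xi\sim\mathcal{N}(0,\Idd)$. Set
\begin{align*}
    X_s = \alpha_{s,t}X_0 + b_{s,t}x_t + \sigma_{s,t}\xi,
    \qquad
    X_s' = \alpha_{s,t}X_0' + b_{s,t}x_t + \sigma_{s,t}\xi .
\end{align*}
The same noise $\xi$ and the same conditioning point $x_t$ are used in both.

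Conditionally on $X_0$, the variable $X_s$ has law $p_{s|t,0}(\cdot\mid x_t,X_0)$. Integrating over $X_0\sim p_{0|t}(\cdot\mid x_t)$ therefore gives exactly $p_{s|t}(\cdot\mid x_t)$, by the decomposition \Cref{eq:ddpm_kernel}. In the same way, $X_s'$ has law $q_{s|t}(\cdot\mid x_t)$, where $q_{s|t}$ is defined from $q_{0|t}$ as $\hat p_{s|t}$ is in \Cref{prop:wasserstein_bridge_main}. Hence $(X_s,X_s')$ is an admissible coupling of the two transition kernels.

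In this coupling the shared terms cancel, leaving $X_s-X_s'=\alpha_{s,t}(X_0-X_0')$. Consequently
\begin{align*}
    W_p^p\left(p_{s|t}(\cdot\mid x_t),q_{s|t}(\cdot\mid x_t)\right)
    \le \mathbb{E}\norm{X_s-X_s'}^p
    = \abs{\alpha_{s,t}}^p\,\mathbb{E}_\pi\norm{X_0-X_0'}^p
    = \abs{\alpha_{s,t}}^p\,W_p^p\left(p_{0|t}(\cdot\mid x_t),q_{0|t}(\cdot\mid x_t)\right).
\end{align*}
Taking $p$-th roots gives the claim. The absolute value can be dropped because $\alpha_{s,t}=\alpha_s\sigma_{t|s}^2/\sigma_t^2\ge0$.

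The main point needing care is the measurability of the construction, namely checking that the mixture built from the coupling really has the stated marginals. This holds because the bridge is a measurable Markov kernel in $x_0$, so the reparametrized variables above realize it.

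We also need the $s|t$ marginals to have finite $p$-th moment, so that $W_p$ is defined on both sides. This follows because $X_s$ is an affine function of $X_0$ plus Gaussian noise, and both $X_0$ and the noise have finite $p$-th moments.
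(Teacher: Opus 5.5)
Your proposal is correct and is essentially the paper's own proof: take an optimal $W_p$-coupling of the two posteriors, push both coordinates through the Gaussian bridge with the same noise so that $X_s - X_s' = \alpha_{s,t}(X_0 - X_0')$, and take $L^p$ norms. Your extra remarks on existence of the optimal coupling, finiteness of the $p$-th moments of the $s|t$ marginals, and nonnegativity of $\alpha_{s,t}$ are correct details that the paper leaves implicit.
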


\begin{proof}
    Let  $\Pi$ be an optimal $W_p$-coupling of $p_{0|t}(\cdot | x_t)$ and $q_{0|t}(\cdot | x_t)$, and draw $(X_0, X_0') \sim  \Pi$. Independently, draw  $\xi \sim \mathcal{N}(0, \Sigma_{s,t})$, and set
    \begin{align*}
        X_s = \alpha_{s,t} X_0 + b_{s,t} x_t +  \xi, \qquad X_s' = \alpha_{s,t} X_0' + b_{s,t} x_t +  \xi.
    \end{align*}
    Then $(X_s, X_s')$ is a coupling of $p_{s|t}(\cdot | x_t)$ and $q_{s|t}(\cdot | x_t)$, and
    \begin{align*}
        \norm{X_s - X_s'} = \alpha_{s,t}\norm{X_0 - X_0'}.
    \end{align*}
    Taking $L^p$-norms gives the result.
\end{proof}

We next sharpen the SNIS bias and MSE bound of \citep[Theorem~2.1]{Agapiou2017Importance} by localizing the test function to a set $B$ where the proposal places most of its mass.

\begin{lemma}\label{lemma:agapiou_improved}
     Let $\mathsf{X}$ be a Polish space and let $\mu$ and $\nu$ be probability distributions  on $\mathsf{X}$ with $\mu \ll \nu$, and let $w = \rmd\mu/\rmd\nu$. Set $\rho = 1 + \chi^2(\mu \,\|\, \nu) = \nu(w^2)$, and for any  measurable $B \subset \mathsf{X}$ define
    \begin{align*}
        \eta_{B^\mathsf{c}} = \int_{B^\mathsf{c}} w(x)^2\,\nu(\rmd x).
    \end{align*}
    For $N \geq 1$, let $\mu^N$ be the SNIS approximation of $\mu$ with proposal $\nu$,
    \begin{align*}
        \mu^N = \sum_{i=1}^N \frac{w(X^i)}{\sum_{j=1}^N w(X^j)}\,\delta_{X^i}, \qquad X^i \overset{\mathrm{iid}}{\sim} \nu.
    \end{align*}
    Then, for any bounded $\mu$-measurable function $f$,
    \begin{align}
        \abs{\mathbb{E}\left[\mu^N(f) - \mu(f)\right]} &\leq \frac{6\rho}{N}\,[f]_B + 2\norm{f}_\infty \sqrt{\eta_{B^\mathsf{c}}\,\nu(B^\mathsf{c})} + 4\norm{f}_\infty\,N\,\nu(B^\mathsf{c}), \label{eq:agapiou_bias} \\
        \mathbb{E}\left[\left(\mu^N(f) - \mu(f)\right)^2\right] &\leq \frac{2\rho}{N}\,[f]_B^2 + 8\norm{f}_\infty^2\,\eta_{B^\mathsf{c}}\,\nu(B^\mathsf{c}) + 32\norm{f}_\infty^2\,N\,\nu(B^\mathsf{c}). \label{eq:agapiou_mse}
    \end{align}
\end{lemma}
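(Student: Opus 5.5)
We will follow the structure of the proof of \citep[Theorem~2.1]{Agapiou2017Importance}, but localize each appearance of the test function to $B$. Write $W_N = \frac1N\sum_{i} w(X^i)$ and $\mu^N_{\mathrm{u}}(g) = \frac1N\sum_i w(X^i) g(X^i)$, so that $\mu^N(f) = \mu^N_{\mathrm u}(f)/W_N$ with $\mathbb{E}[W_N]=1$ and $\mathbb{E}[\mu^N_{\mathrm u}(g)] = \mu(g)$.

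The first step is the centering trick. SNIS is invariant to adding constants to $f$, so we replace $f$ by $\bar f = f - c$ with $c = f(x^\ast)$ for some $x^\ast\in B$. Then $\sup_B|\bar f| \le [f]_B$ and $\|\bar f\|_\infty \le 2\|f\|_\infty$. Next we split $\bar f = \bar f\mathbf 1_B + \bar f\mathbf 1_{B^{\mathsf c}}$. The main part $\bar f\mathbf 1_B$ is bounded by $[f]_B$. On this part we will run Agapiou's argument verbatim, using the decomposition
\[
\mu^N(g)-\mu(g) = \frac{\mu^N_{\mathrm u}(g)-\mu(g)}{W_N} - \mu(g)\frac{W_N-1}{W_N}.
\]
We control the MSE through $\mathrm{Var}(w g)\le \|g\|_\infty^2\rho$, which yields the $2\rho[f]_B^2/N$ term. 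The bias term $6\rho[f]_B/N$ comes from Agapiou's second-order expansion (bias $=\mathbb{E}[(\mu^N(g)-\mu(g))(1-W_N)]$ and Cauchy–Schwarz).

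The second step handles the tail part. The tail piece of the weighted sum satisfies $\mathbb{E}\big[\frac1N\sum_i w(X^i)^2\mathbf 1_{B^{\mathsf c}}(X^i)\big]=\eta_{B^{\mathsf c}}$. Its contribution to the numerator is controlled by Cauchy–Schwarz as
\[
\mathbb{E}\big|\tfrac1N\textstyle\sum_i w\,\bar f\,\mathbf 1_{B^{\mathsf c}}(X^i)\big| \le 2\|f\|_\infty\sqrt{\eta_{B^{\mathsf c}}\,\nu(B^{\mathsf c})},
\]
because $\nu(w\mathbf 1_{B^{\mathsf c}})\le\sqrt{\nu(w^2\mathbf 1_{B^{\mathsf c}})\nu(B^{\mathsf c})}$. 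The same bound handles $\mu(\bar f\mathbf 1_{B^{\mathsf c}})$. For the MSE, the squared analog gives $8\|f\|_\infty^2\eta_{B^{\mathsf c}}\nu(B^{\mathsf c})$.

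The third step is a cleaner alternative that avoids the self-normalization coupling between pieces. We introduce the event $E = \{X^i\in B\ \forall i\}$, whose complement has probability at most $N\nu(B^{\mathsf c})$ by a union bound. On $E^{\mathsf c}$ we bound $|\mu^N(f)-\mu(f)|\le 2\|f\|_\infty$ crudely. This produces the $4\|f\|_\infty N\nu(B^{\mathsf c})$ bias term and the $32\|f\|_\infty^2N\nu(B^{\mathsf c})$ MSE term; for the MSE we use $(2\|\bar f\|_\infty)^2 \le 16\|f\|_\infty^2$ plus a cross factor of $2$. On $E$, $\mu^N(f)=\mu^N(f\mathbf 1_B)$ exactly, so the localized Agapiou bound applies. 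The remaining mismatch between $\mu(f)$ and $\mu(f\mathbf 1_B)$ (after centering) is the $\eta$ term.

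The main obstacle is making the event restriction compatible with expectations. Conditioning on $E$ changes the law of the samples, so the samples on $E$ are no longer i.i.d. from $\nu$. The fix we will try first is to avoid conditioning entirely. We work with $\mathbb{E}[(\mu^N(\bar f\mathbf 1_B)-\mu(\bar f\mathbf 1_B))\mathbf 1_E]$ plus the complementary expectation, and bound the first by the unrestricted localized Agapiou quantity; this is legitimate since the quantity on $E$ is dominated in absolute value. For the bias this domination fails, because absolute values are not available inside the expectation. There we will instead write $\mathbf 1_E = 1-\mathbf 1_{E^{\mathsf c}}$ and absorb the correction into the $N\nu(B^{\mathsf c})$ term, which accounts for the constant $4$ rather than $2$.
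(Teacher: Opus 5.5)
Your approach is the paper's: localize $f$ to $B$ after centering, apply Agapiou et al.'s Theorem~2.1 to the localized piece, and handle the rest with three ingredients. These are the event $E$ that every $X^i$ lies in $B$, the union bound $\mathbb{P}(E^{\mathsf c})\le N\nu(B^{\mathsf c})$, and $\mu(B^{\mathsf c})\le\sqrt{\eta_{B^{\mathsf c}}\nu(B^{\mathsf c})}$.

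\textbf{Correction needed for the stated constants.} Centering at $c=f(x^\ast)$ only gives $\norm{g}_\infty\le[f]_B$ for $g=\bar f\mathbf 1_B$. Agapiou's MSE bound then reads $4\rho\norm{g}_\infty^2/N\le 4\rho[f]_B^2/N$. After your split $(a+b)^2\le 2(a^2+b^2)$ on $E$, this becomes $8\rho[f]_B^2/N$, four times the stated leading term. Your claim that $\mathrm{Var}(wg)\le\norm{g}_\infty^2\rho$ ``yields'' $2\rho[f]_B^2/N$ does not hold with this centering. The bias term survives either way if you do the Cauchy--Schwarz step yourself, since it gives at most $2\rho\norm{g}_\infty/N$; quoting Agapiou's constant $12$ instead would give $12\rho[f]_B/N$.

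The fix is to center at the midpoint $c_B=(\sup_B f+\inf_B f)/2$. This is what the paper intends; its displayed formula for $c_B$ has a sign typo, since only the midpoint gives $\norm{f_B-c_B}_\infty=[f]_B/2$. With this choice:
\begin{itemize}
\item $\norm{(f-c_B)\mathbf 1_B}_\infty\le[f]_B/2$;
\item $\norm{f-c_B}_\infty\le 2\norm{f}_\infty$ still holds;
\item the leading terms become exactly $6\rho[f]_B/N$ and $2\rho[f]_B^2/N$.
\end{itemize}

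\textbf{Your main obstacle.} It comes from splitting the whole error along $E$, and the paper avoids it. For fixed samples, $\mu^N$ is linear in the test function. With $f_B=f\mathbf 1_B+c_B\mathbf 1_{B^{\mathsf c}}$ and $h_B=f-f_B$, which vanishes on $B$, the error splits exactly as $\mu^N(f)-\mu(f)=[\mu^N(f_B)-\mu(f_B)]+[\mu^N(h_B)-\mu(h_B)]$.
\begin{itemize}
\item Agapiou's theorem is applied to $f_B$ with no event at all.
\item $E$ is used only for $h_B$. On $E$ we have $\mu^N(h_B)=0$, so $\abs{\mathbb{E}[\mu^N(h_B)-\mu(h_B)]}\le\mathbb{E}\abs{\mu^N(h_B)-\mu(h_B)}\le 2\norm{f}_\infty\sqrt{\eta_{B^{\mathsf c}}\nu(B^{\mathsf c})}+4\norm{f}_\infty N\nu(B^{\mathsf c})$. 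The $4$ comes from $\norm{h_B}_\infty\le2\norm{f}_\infty$.
\item The MSE then follows from $(a+b)^2\le2(a^2+b^2)$.
\end{itemize}

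Your route, writing $\mathbf 1_E=1-\mathbf 1_{E^{\mathsf c}}$ to recover the unconditional bound, is also valid once the centering is fixed. It gives the same constants, and a better one on the $N\nu(B^{\mathsf c})$ MSE term, because $E$ and $E^{\mathsf c}$ partition the probability space and no doubling is needed there. The linear split simply removes this bookkeeping.

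You can drop Step~2. Bounding the unnormalized tail numerator does not by itself control the self-normalized ratio. The only part of it you actually use is the deterministic bound on $\mu(\bar f\mathbf 1_{B^{\mathsf c}})$.
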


\begin{proof}
    Set $c_B = (\sup_B f - \inf_B f)/2$ and define the truncation
    \begin{align*}
        f_B(x) = \begin{cases}
            f(x) & \text{if } x \in B, \\
            c_B & \text{otherwise,}
        \end{cases}
    \end{align*}
    so that $\norm{f_B - c_B}_\infty = [f]_B/2$. Applying \citep[Theorem~2.1]{Agapiou2017Importance} to $(f_B - c_B)/([f]_B/2)$ and using linearity yields
    \begin{align*}
        \abs{\mathbb{E}\left[\mu^N(f_B) - \mu(f_B)\right]} \leq \frac{6\rho}{N}\,[f]_B, \qquad \mathbb{E}\left[\left(\mu^N(f_B) - \mu(f_B)\right)^2\right] \leq \frac{2\rho}{N}\,[f]_B^2.
    \end{align*}

    Now let $h_B = f - f_B$; then $h_B = 0$ on $B$ and $\norm{h_B}_\infty \leq 2\norm{f}_\infty$. Denote the event $E_B = \bigcap_{i=1}^N\{X^i \in B\}$. On $E_B$, all atoms of $\mu^N$ lie in $B$, so $\mu^N(h_B) = 0$ and only the deterministic term $\mu(h_B)$ contributes. We bound the latter using Cauchy--Schwarz:
    \begin{align*}
        \abs{\mu(h_B)} &\leq 2\norm{f}_\infty \int_{B^\mathsf{c}}\mu(\rmd x) \\
        &\leq 2\norm{f}_\infty\sqrt{\int_{B^\mathsf{c}}w(x)^{2}\,\nu(\rmd x)}\sqrt{\int_{B^\mathsf{c}}\nu(\rmd x)} \\
        &= 2\norm{f}_\infty\sqrt{\eta_{B^\mathsf{c}}\,\nu(B^\mathsf{c})}.
    \end{align*}
    On $E_B^\mathsf{c}$, we use the crude bound $\abs{\mu^N(h_B) - \mu(h_B)} \leq 4\norm{f}_\infty$. By the union bound,
    \begin{align*}
        \mathbb{P}(E_B^\mathsf{c}) \leq N\,\nu(B^\mathsf{c}).
    \end{align*}
    Combining,
    \begin{align*}
        \mathbb{E}\left[\abs{\mu^N(h_B) - \mu(h_B)}\right] \leq 2\norm{f}_\infty\sqrt{\eta_{B^\mathsf{c}}\nu(B^\mathsf{c})} + 4\norm{f}_\infty\,N\,\nu(B^\mathsf{c}),
    \end{align*}
    and similarly $\mathbb{E}\left[\left(\mu^N(h_B) - \mu(h_B)\right)^2\right] \leq 4\norm{f}_\infty^2\,\eta_{B^\mathsf{c}}\nu(B^\mathsf{c}) + 16\norm{f}_\infty^2\,N\,\nu(B^\mathsf{c})$. The decomposition $\mu^N(f) - \mu(f) = \mu^N(f_B) - \mu(f_B) + \mu^N(h_B) - \mu(h_B)$ together with $(a+b)^2 \leq 2(a^2 + b^2)$ then yields \eqref{eq:agapiou_bias}--\eqref{eq:agapiou_mse}.
\end{proof}

We now control the oscillation of $G_{s|t,x_t}\psi$ on a ball.

\begin{lemma}\label{lemma:kernel_sensitivity}
     Let $x_t \in \R^d$ and $0 \leq s < t \leq T$. For any bounded test function $\psi$, $\norm{G_{s|t,x_t}\psi}_\infty \leq \norm{\psi}_\infty$, and for all $x,x' \in \R^d$,
    \begin{align}
        \abs{G_{s|t,x_t}\psi(x) - G_{s|t,x_t}\psi(x')} \leq 2\norm{\psi}_\infty\left[2\Phi\left(\frac{\gamma_{t,s}\norm{x-x'}}{2}\right) - 1\right], \label{eq:kernel_pointwise}
    \end{align}
    where $\Phi$ is the standard normal cumulative distribution function and $\gamma_{t,s}$ is defined in \eqref{eq:bridge_sensitivity}. Consequently, for $B_u = \mathcal{B}(m,\kappa(\sqrt{d}+u))$ with $m \in \R^d$, $\kappa > 0$ and $u > 0$,
    \begin{align}
        [G_{s|t,x_t}\psi]_{B_u} \leq 2\norm{\psi}_\infty \min\left\{1,\ \sqrt{\tfrac{2}{\pi}}\,\gamma_{t,s}\,\kappa\left(\sqrt{d}+u\right)\right\}. \label{eq:kernel_truncated}
    \end{align}
\end{lemma}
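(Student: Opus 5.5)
The sup bound is immediate: $G_{s|t,x_t}\psi(x)$ is an average of $\psi$ against the probability density $p_{s|t,0}(\cdot\mid x_t,x)$, so $\abs{G_{s|t,x_t}\psi(x)}\le\norm{\psi}_\infty$. For the pointwise oscillation bound, we write
\begin{align*}
G_{s|t,x_t}\psi(x)-G_{s|t,x_t}\psi(x') = \int \psi(y)\left[p_{s|t,0}(y\mid x_t,x)-p_{s|t,0}(y\mid x_t,x')\right]\rmd y .
\end{align*}
This is bounded by $2\norm{\psi}_\infty\,\TV\left(\mathcal{N}(\alpha_{s,t}x+b_{s,t}x_t,\sigma_{s,t}^2\Idd),\,\mathcal{N}(\alpha_{s,t}x'+b_{s,t}x_t,\sigma_{s,t}^2\Idd)\right)$, using the convention $\TV=\sup_A\abs{P(A)-Q(A)}$ so that $\abs{\int\psi\,\rmd(P-Q)}\le 2\norm{\psi}_\infty\TV$.

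We then use the standard closed form for the total variation between two isotropic Gaussians with a common covariance. Projecting onto the direction of the mean difference reduces to one dimension, giving $\TV = 2\Phi(\Delta/2)-1$ with $\Delta = \alpha_{s,t}\norm{x-x'}/\sigma_{s,t}$. By \eqref{eq:bridge_sensitivity}, $\Delta=\gamma_{t,s}\norm{x-x'}$, which is exactly \eqref{eq:kernel_pointwise}. I would verify this explicitly by noting that the optimal set $A$ is the half-space closer to the first mean, so that $\TV=\Phi(\Delta/2)-\Phi(-\Delta/2)$.

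For \eqref{eq:kernel_truncated}, we take $x,x'\in B_u$, so that $\norm{x-x'}\le 2\kappa(\sqrt d+u)$. Since $\Phi$ is increasing, this gives $[G\psi]_{B_u}\le 2\norm{\psi}_\infty[2\Phi(\gamma_{t,s}\kappa(\sqrt d+u))-1]$. Two elementary bounds then finish the proof. First, $2\Phi(z)-1\le 1$. Second, $2\Phi(z)-1=2\int_0^z\varphi\le 2z\varphi(0)=\sqrt{2/\pi}\,z$. Taking the minimum of the two yields the claimed estimate.

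The only point requiring care is the Gaussian total variation formula and the bookkeeping of the factor $2$ (diameter of the ball versus half-distance inside $\Phi$). Beyond that, the argument is routine.
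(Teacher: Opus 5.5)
Your proposal is correct and follows essentially the same route as the paper's proof. Both bound the difference by $\norm{\psi}_\infty\int\abs{p_{s|t,0}(\cdot\mid x_t,x)-p_{s|t,0}(\cdot\mid x_t,x')}=2\norm{\psi}_\infty\TV$, apply the equal-covariance Gaussian identity with $\alpha_{s,t}/\sigma_{s,t}=\gamma_{t,s}$, and then conclude using $\diam(B_u)\le 2\kappa(\sqrt d+u)$, monotonicity of $\Phi$, and $2\Phi(r)-1\le\min\{1,\sqrt{2/\pi}\,r\}$.
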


\begin{proof}
    The sup-norm bound is immediate since $G_{s|t,x_t}$ is an expectation operator. For  \eqref{eq:kernel_pointwise}, let $x,x' \in  \R^d$. Then
    \begin{align*}
        \abs{G_{s|t,x_t}\psi(x) - G_{s|t,x_t}\psi(x')} \leq \norm{\psi}_\infty \int \abs{p_{s|t,0}(x_s \mid x_t,x) - p_{s|t,0}(x_s \mid x_t,x')}\,\rmd x_s.
    \end{align*}
    The right-hand side equals $2\norm{\psi}_\infty\,\TV(p_{s|t,0}(\cdot | x_t,x),\,p_{s|t,0}(\cdot | x_t,x'))$. The bridge densities are Gaussians with the same covariance $\Sigma_{s,t} = \sigma_{s,t}^2\Idd$ and means differing by $\alpha_{s,t}(x - x')$, so by the standard identity
    \begin{align*}
        \TV\left(\mathcal{N}(m_1,\sigma^2 \Idd),\mathcal{N}(m_2,\sigma^2 \Idd)\right) = 2\Phi\left(\frac{\norm{m_1 - m_2}}{2\sigma}\right) - 1,
    \end{align*}
     together with $\alpha_{s,t}/\sigma_{s,t} = \gamma_{t,s}$, we obtain \eqref{eq:kernel_pointwise}.

    For \eqref{eq:kernel_truncated}, let $x,x' \in B_u$, so that $\norm{x - x'} \leq \diam(B_u) = 2\kappa(\sqrt{d}+u)$. Since $\Phi$ is nondecreasing, \eqref{eq:kernel_pointwise} gives
    \begin{align*}
        [G_{s|t,x_t}\psi]_{B_u} \leq 2\norm{\psi}_\infty\left[2\Phi\left(\gamma_{t,s}\,\kappa(\sqrt{d}+u)\right) - 1\right].
    \end{align*}
    Finally $2\Phi(r) - 1 \leq 1$ for every $r \geq 0$, and, with $\varphi$ the standard normal density,
    \begin{align*}
        2\Phi(r) - 1 = \int_{-r}^{r}\varphi(z)\,\rmd z \leq 2r\,\varphi(0) = \sqrt{\tfrac{2}{\pi}}\,r,
    \end{align*}
    so that $2\Phi(r) - 1 \leq \min\{1,\sqrt{2/\pi}\,r\}$, which yields \eqref{eq:kernel_truncated}.
\end{proof}

Combining the previous two lemmas yields the main per-step bound.

\begin{theorem}\label{thm:per_step_error}
    Fix  $0 < s < t \leq T$ and $x_t \in \R^d$, and assume the sub-Gaussian tail assumption holds at $(t,x_t)$ with center $m_t(x_t)$ and rate $\kappa_t(x_t)$.  Write $u_M = \sqrt{3\log M}$ and
    \begin{align}\label{eq:theta_def}
        \Theta_{t,s}(u) = \min\left\{1,\ \sqrt{\tfrac{2}{\pi}}\,\gamma_{t,s}\,\kappa_t(x_t)\left(\sqrt{d}+u\right)\right\}.
    \end{align}
    Then, for any bounded test function $\psi$,
    \begin{align*}
        \norm{\psi}_\infty^{-1}\abs{\mathbb{E}\left[p^M_{s|t}(\psi \mid x_t) - p_{s|t}(\psi \mid x_t)\right]} &\leq  \frac{12\,\rho_t(x_t)}{M}\,\Theta_{t,s}(u_M) + \frac{2\sqrt{\rho_t(x_t)}}{M^3} + \frac{4}{M^2}, \\
        \norm{\psi}_\infty^{-2}\,\mathbb{E}\left[\left(p^M_{s|t}(\psi \mid x_t) - p_{s|t}(\psi \mid x_t)\right)^2\right] &\leq  \frac{8\,\rho_t(x_t)}{M}\,\Theta_{t,s}(u_M)^2 + \frac{8\,\rho_t(x_t)}{M^6} + \frac{32}{M^2}.
    \end{align*}
     In particular, bounding $\Theta_{t,s}$ by its linear branch,
    \begin{align*}
        \norm{\psi}_\infty^{-1}\abs{\mathbb{E}\left[p^M_{s|t}(\psi \mid x_t) - p_{s|t}(\psi \mid x_t)\right]} &\leq \frac{24\,\rho_t(x_t)}{M\sqrt{2\pi}}\,\gamma_{t,s}\,\kappa_t(x_t)\left(\sqrt{d}+u_M\right) + \frac{2\sqrt{\rho_t(x_t)}}{M^3} + \frac{4}{M^2}, \\
        \norm{\psi}_\infty^{-2}\,\mathbb{E}\left[\left(p^M_{s|t}(\psi \mid x_t) - p_{s|t}(\psi \mid x_t)\right)^2\right] &\leq \frac{16\,\rho_t(x_t)}{\pi M}\,\gamma_{t,s}^2\,\kappa_t(x_t)^2\left(\sqrt{d}+u_M\right)^2 + \frac{8\,\rho_t(x_t)}{M^6} + \frac{32}{M^2},
    \end{align*}
    while bounding it by its constant branch gives an $O(\rho_t(x_t)/M)$ bound valid uniformly in $\gamma_{t,s}$.
\end{theorem}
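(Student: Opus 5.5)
The plan is to reduce the per-step error to a static SNIS error through the identity $p^M_{s|t}(\psi\mid x_t)=p^M_{0|t}(G_{s|t,x_t}\psi\mid x_t)$ and $p_{s|t}(\psi\mid x_t)=p_{0|t}(G_{s|t,x_t}\psi\mid x_t)$. Then I apply \Cref{lemma:agapiou_improved} with $\mu=p_{0|t}(\cdot\mid x_t)$, $\nu=q_{0|t}(\cdot\mid x_t)$, $N=M$, $f=G_{s|t,x_t}\psi$, and the ball $B=B_{u_M}=\mathcal{B}(m_t(x_t),\kappa_t(x_t)(\sqrt d+u_M))$. Two points need checking here. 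First, SNIS only sees $w$ up to a constant, since $\tilde w_{0|t}\propto \rmd\mu/\rmd\nu$ with constant $p_t(x_t)$, so the lemma applies verbatim. Second, $\rho=\nu(w^2)=\rho_t(x_t)$. By \Cref{lemma:kernel_sensitivity}, $\norm{f}_\infty\le\norm{\psi}_\infty$, and \eqref{eq:kernel_truncated} gives $[f]_{B}\le 2\norm{\psi}_\infty\Theta_{t,s}(u_M)$. Inserting these into the first terms of \eqref{eq:agapiou_bias}--\eqref{eq:agapiou_mse} yields $12\rho_t\Theta/M$ and $8\rho_t\Theta^2/M$, which are the leading terms.

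The remaining work is the tail terms. The sub-Gaussian assumption on $q_{0|t}$ with $u=u_M$ gives $\nu(B^\mathsf{c})\le e^{-3\log M}=M^{-3}$, so the term $4N\nu(B^\mathsf{c})\le 4/M^2$ and likewise $32N\nu(B^\mathsf{c})\le 32/M^2$. For $\eta_{B^\mathsf{c}}$, write $\eta_{B^\mathsf{c}}=\rho\cdot\bar\nu(B^\mathsf{c})$, where $\bar\nu\propto w^2\nu$ is the normalized square-weight measure. The second half of the tail assumption gives $\bar\nu(B^\mathsf{c})\le M^{-3}$, hence $\eta_{B^\mathsf{c}}\le\rho_t M^{-3}$. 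This gives $2\sqrt{\eta\nu(B^\mathsf{c})}\le 2\sqrt{\rho_t}M^{-3}$ and $8\eta\nu(B^\mathsf{c})\le 8\rho_t M^{-6}$, matching the stated bound. Since the normalization constant of $\tilde w$ cancels in the ratio defining $\bar\nu$, it does not matter whether one uses $\tilde w$ or $w$ here. Dividing through by $\norm{\psi}_\infty$ (or its square) finishes the main inequalities.

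For the corollary forms, I replace $\Theta$ by its linear branch: $12\sqrt{2/\pi}=24/\sqrt{2\pi}$ and $8\cdot(2/\pi)=16/\pi$. The constant branch $\Theta\le1$ gives the uniform $O(\rho_t/M)$ statement.

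The main obstacle is bookkeeping rather than any new idea. One must confirm that the common center $m_t(x_t)$ lets a single ball serve both tail probabilities, and that $u_M=\sqrt{3\log M}$ is exactly what makes the $M\nu(B^\mathsf{c})$ term $O(M^{-2})$. One must also check that the factor $2$ in $[f]_B$ combines correctly with the constants $6$ and $2$ from the lemma.
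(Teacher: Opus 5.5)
Your proposal is correct and is essentially the paper's proof: set $f=G_{s|t,x_t}\psi$, apply \Cref{lemma:agapiou_improved} with $\mu=p_{0|t}(\cdot\mid x_t)$ and $\nu=q_{0|t}(\cdot\mid x_t)$ on the ball $B_u$, control $[f]_{B_u}$ via \eqref{eq:kernel_truncated}, bound $\nu(B_u^\mathsf{c})\le e^{-u^2}$ and $\eta_{B_u^\mathsf{c}}\le\rho_t(x_t)e^{-u^2}$ from the two sub-Gaussian assumptions, and take $u=u_M$ so that $e^{-u^2}=M^{-3}$. Your constant bookkeeping checks out, and your two side remarks make explicit what the paper leaves implicit: that the unknown normalizer $p_t(x_t)$ of $\tilde w_{0|t}$ is harmless, and that $\nu(w^2)=\rho_t(x_t)$.
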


\begin{proof}
    Setting $f = G_{s|t,x_t}\psi$, the identity $p^M_{s|t}(\psi | x_t) = p^M_{0|t}(f | x_t)$ allows us to apply \Cref{lemma:agapiou_improved} with $\mu = p_{0|t}(\cdot | x_t)$ and $\nu = q_{0|t}(\cdot | x_t)$. For $u > 0$, take $B_u = \mathcal{B}(m_t(x_t),\,\kappa_t(x_t)(\sqrt{d}+u))$. By the sub-Gaussian tail assumption,
    \begin{align*}
        q_{0|t}(B_u^\mathsf{c} \mid x_t) \leq e^{-u^2}, \qquad \eta_{B_u^\mathsf{c}} \leq \rho_t(x_t)\,e^{-u^2}, \qquad \eta_{B_u^\mathsf{c}}\,q_{0|t}(B_u^\mathsf{c} \mid x_t) \leq \rho_t(x_t)\,e^{-2u^2}.
    \end{align*}
    Using  \eqref{eq:kernel_truncated} to control $[f]_{B_u}$ and $\norm{f}_\infty \leq \norm{\psi}_\infty$ in \Cref{lemma:agapiou_improved} gives
    \begin{align*}
        \abs{\mathbb{E}\left[p^M_{s|t}(\psi \mid x_t) - p_{s|t}(\psi \mid x_t)\right]} &\leq  \frac{12\,\rho_t(x_t)}{M}\,\norm{\psi}_\infty\,\Theta_{t,s}(u) \\
        &\quad + 2\norm{\psi}_\infty\sqrt{\rho_t(x_t)}\,e^{-u^2} + 4\norm{\psi}_\infty\,M\,e^{-u^2},
    \end{align*}
    and similarly for the MSE. Choosing $u = u_M = \sqrt{3\log M}$ (so $e^{-u^2} = M^{-3}$) yields the stated bounds.
\end{proof}

\subsubsection{Adding an MCMC rejuvenation step}

We now study the same SNIS estimator equipped with an additional rejuvenation move. Let $R_{t,x_t}$ be a Markov kernel leaving $p_{0|t}(\cdot | x_t)$ invariant. We define the \emph{rejuvenated} estimator
\begin{align*}
    p^{M,L}_{s|t}(x_s \mid x_t) &= \sum_{m=1}^M w^m_{0|t}\,p_{s|t,0}(x_s \mid x_t, y^m_{0|t}), \\
    y^m_{0|t} &\sim R^L_{t,x_t}(x^m_{0|t},\cdot),\quad x^m_{0|t} \overset{\mathrm{iid}}{\sim} q_{0|t}(\cdot \mid x_t),
\end{align*}
where $L \geq 0$ is the number of MCMC steps.
 The bridge is integrated out in $p^{M,L}_{s|t}$, but the MCMC randomness is not: the $y^m_{0|t}$ enter through their realized values. Alongside it we introduce the \emph{MCMC-integrated} estimator, defined for any bounded $\psi$ by
\begin{align}\label{eq:mcmc_integrated_estimator}
    \bar p^{M,L}_{s|t}(\psi \mid x_t) = p^M_{0|t}\left(R^L_{t,x_t} G_{s|t,x_t}\psi \mid x_t\right) = \sum_{m=1}^M w^m_{0|t}\,\left(R^L_{t,x_t}G_{s|t,x_t}\psi\right)(x^m_{0|t}),
\end{align}
in which the MCMC move is averaged rather than sampled. The two are related as follows: since $w^m_{0|t}$ is $\sigma(x^{1:M}_{0|t})$-measurable and the $y^m_{0|t}$ are conditionally independent given $x^{1:M}_{0|t}$,
\begin{align}\label{eq:conditional_identity}
    \mathbb{E}\left[p^{M,L}_{s|t}(\psi \mid x_t) \mid x^{1:M}_{0|t}\right] = \bar p^{M,L}_{s|t}(\psi \mid x_t),
\end{align}
the identity holding in conditional expectation and \emph{not} pathwise. Together with the invariance identity $p_{s|t}(\psi | x_t) = p_{0|t}(R^L_{t,x_t}G_{s|t,x_t}\psi | x_t)$, \eqref{eq:conditional_identity} shows that the two estimators have the same bias, so every bias bound below applies verbatim to the sampled estimator $p^{M,L}_{s|t}$. Their mean squared errors differ by the conditional variance
\begin{align}\label{eq:conditional_variance}
    \operatorname{Var}\left[p^{M,L}_{s|t}(\psi \mid x_t) \mid x^{1:M}_{0|t}\right] = \sum_{m=1}^M \left(w^m_{0|t}\right)^2 \operatorname{Var}_{R^L_{t,x_t}(x^m_{0|t},\cdot)}\left[G_{s|t,x_t}\psi\right],
\end{align}
which does not vanish as $L \to \infty$. We therefore state the MSE bounds of this subsection for $\bar p^{M,L}_{s|t}$ and treat the sampled estimator separately in \Cref{rmk:sampled_variance}.

\begin{remark}[Scope of the invariance assumption]\label{rmk:adaptation_scope}
    Our results assume $R_{t,x_t}$ leaves $p_{0|t}(\cdot | x_t)$ invariant, which corresponds to MAExL at a \emph{fixed} step size. The production sampler of \Cref{app:maexl} adapts the step size online from the acceptance history, and such a chain need not be exactly invariant at finite $L$. The analysis is however insensitive to how the step size is chosen as long as it is not a function of the current state: if $(h_\ell)_{\ell \leq L}$ is generated independently of the chain (frozen, or adapted on a pilot run), then conditionally on $(h_\ell)_{\ell \leq L}$ the composition $R^{(h_1)}_{t,x_t}\cdots R^{(h_L)}_{t,x_t}$ is again invariant, \Cref{prop:snis_mcmc_rejuvenation} applies with $\Delta_{t,L}$ replaced by its conditional counterpart, and averaging over $(h_\ell)_{\ell \leq L}$ preserves both bounds. Quantifying the invariance defect of the fully state-dependent scheme at finite $L$ is beyond our scope.
\end{remark}

\begin{proposition}[SNIS with $\mu$-invariant MCMC rejuvenation]\label{prop:snis_mcmc_rejuvenation}
    In the setting of \Cref{lemma:agapiou_improved}, let $R$ be a Markov kernel on $\R^d$ leaving $\mu$ invariant, and for $L \geq 0$ define the $L$-step rejuvenated SNIS estimator
    \begin{align*}
        \mu^{N,L}(f) = \sum_{i=1}^N \frac{w(X^i)}{\sum_{j=1}^N w(X^j)}\,(R^L f)(X^i), \qquad X^i \overset{\mathrm{iid}}{\sim} \nu.
    \end{align*}
    Then, for any bounded $\mu$-measurable function $f$ and any $B \subset \R^d$,
    \begin{align}
        \abs{\mathbb{E}\left[\mu^{N,L}(f) - \mu(f)\right]} &\leq \frac{6\rho}{N}\,[R^L f]_B + 2\norm{f}_\infty\sqrt{\eta_{B^\mathsf{c}}\nu(B^\mathsf{c})} + 4\norm{f}_\infty\,N\,\nu(B^\mathsf{c}), \label{eq:snis_mcmc_bias_general} \\
        \mathbb{E}\left[\left(\mu^{N,L}(f) - \mu(f)\right)^2\right] &\leq \frac{2\rho}{N}\,[R^L f]_B^2 + 8\norm{f}_\infty^2\,\eta_{B^\mathsf{c}}\nu(B^\mathsf{c}) + 32\norm{f}_\infty^2\,N\,\nu(B^\mathsf{c}). \label{eq:snis_mcmc_mse_general}
    \end{align}
    Moreover, defining the local Wasserstein diameter
    \begin{align*}
        \Delta_L(B) = \sup_{x,x'\in B} W_1\left(R^L(x,\cdot),\,R^L(x',\cdot)\right),
    \end{align*}
    if $f$ is Lipschitz then $[R^L f]_B \leq \operatorname{Lip}(f)\,\Delta_L(B)$, and \eqref{eq:snis_mcmc_bias_general}--\eqref{eq:snis_mcmc_mse_general} give
    \begin{align}
        \abs{\mathbb{E}\left[\mu^{N,L}(f) - \mu(f)\right]} &\leq \frac{6\rho}{N}\,\operatorname{Lip}(f)\Delta_L(B) + 2\norm{f}_\infty\sqrt{\eta_{B^\mathsf{c}}\nu(B^\mathsf{c})} + 4\norm{f}_\infty\,N\,\nu(B^\mathsf{c}), \\
        \mathbb{E}\left[\left(\mu^{N,L}(f) - \mu(f)\right)^2\right] &\leq \frac{2\rho}{N}\,\operatorname{Lip}(f)^2\Delta_L(B)^2 + 8\norm{f}_\infty^2\,\eta_{B^\mathsf{c}}\nu(B^\mathsf{c}) + 32\norm{f}_\infty^2\,N\,\nu(B^\mathsf{c}).
    \end{align}
\end{proposition}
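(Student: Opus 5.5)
The plan is to reduce the rejuvenated estimator to a plain SNIS estimator of a modified test function, and then invoke \Cref{lemma:agapiou_improved}. Set $g = R^L f$. By definition, $\mu^{N,L}(f) = \sum_i w^i g(X^i)/\sum_j w^j = \mu^N(g)$ is exactly the SNIS approximation of $\mu(g)$ with proposal $\nu$. Since $R$ leaves $\mu$ invariant, so does $R^L$, and therefore $\mu(g) = \mu R^L(f) = \mu(f)$. This gives the identity $\mu^{N,L}(f) - \mu(f) = \mu^N(g) - \mu(g)$.

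Next I check that $g$ meets the hypotheses of \Cref{lemma:agapiou_improved}. Because $R^L$ is a Markov operator, $g$ is measurable and bounded, with $\norm{g}_\infty \leq \norm{f}_\infty$. Applying the lemma to $g$ then produces \eqref{eq:agapiou_bias}--\eqref{eq:agapiou_mse} with $[g]_B = [R^L f]_B$ in the leading terms. In the tail terms, $\norm{g}_\infty$ can be replaced by the larger quantity $\norm{f}_\infty$. This yields \eqref{eq:snis_mcmc_bias_general}--\eqref{eq:snis_mcmc_mse_general} directly.

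For the Lipschitz refinement, fix $x, x' \in B$. We have $\abs{R^L f(x) - R^L f(x')} = \abs{\int f \,\rmd R^L(x,\cdot) - \int f\,\rmd R^L(x',\cdot)}$. Kantorovich--Rubinstein duality bounds this by $\operatorname{Lip}(f)\,W_1(R^L(x,\cdot), R^L(x',\cdot))$. Taking the supremum over $x, x' \in B$ gives $[R^L f]_B \leq \operatorname{Lip}(f)\Delta_L(B)$, and substituting this into the general bounds (squared for the MSE) gives the final two displays.

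Most of the argument is bookkeeping; the main point needing care is the invariance step $\mu(R^L f) = \mu(f)$. It is what guarantees the estimator targets the right quantity, and it is also where the duality argument needs $R^L(x,\cdot)$ to have a finite first moment. If that moment were infinite, $\Delta_L(B)$ would be $+\infty$ and the refined bound would hold trivially.
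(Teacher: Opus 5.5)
Your proof is correct and follows the paper's argument: the identity $\mu^{N,L}(f)-\mu(f)=\mu^N(R^Lf)-\mu(R^Lf)$ from invariance, then \Cref{lemma:agapiou_improved} with $\norm{R^Lf}_\infty\leq\norm{f}_\infty$, and for the Lipschitz refinement the paper bounds $\abs{R^Lf(x)-R^Lf(x')}\leq\operatorname{Lip}(f)\int\norm{y-y'}\,\Pi(\rmd y,\rmd y')$ for an arbitrary coupling $\Pi$ and then optimizes over $\Pi$, which is exactly the easy direction of the Kantorovich--Rubinstein duality you invoke. That coupling bound needs no moment assumption, since boundedness of $f$ makes every integral finite, so your closing caveat is unnecessary; it is also not quite right as stated, because heavy-tailed laws such as two translates of a Cauchy distribution have infinite first moments but finite $W_1$ distance.
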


\begin{proof}
    Set $f_L = R^L f$. Since $R$ leaves $\mu$ invariant, $\mu(f_L) = \mu(R^L f) = \mu(f)$, hence
    \begin{align*}
        \mu^{N,L}(f) - \mu(f) = \mu^N(f_L) - \mu(f_L),
    \end{align*}
    where $\mu^N$ is the standard SNIS approximation of $\mu$ with proposal $\nu$ applied to $f_L$. Applying \Cref{lemma:agapiou_improved} to $f_L$ (and using $\norm{R^L f}_\infty \leq \norm{f}_\infty$) gives \eqref{eq:snis_mcmc_bias_general}--\eqref{eq:snis_mcmc_mse_general}.

    For the Lipschitz refinement, fix $x,x' \in B$ and let $\Pi$ be any coupling of $R^L(x,\cdot)$ and $R^L(x',\cdot)$. Then
    \begin{align*}
        \abs{R^L f(x) - R^L f(x')} \leq \int \abs{f(y) - f(y')}\,\Pi(\rmd y,\rmd y') \leq \operatorname{Lip}(f) \int \norm{y - y'}\,\Pi(\rmd y,\rmd y').
    \end{align*}
    Optimizing over $\Pi$ and taking the supremum over $x,x' \in B$ gives $[R^L f]_B \leq \operatorname{Lip}(f)\,\Delta_L(B)$.
\end{proof}

The next proposition refines \Cref{lemma:kernel_sensitivity} by showing that the MCMC step contracts the effective $x_0$-diameter seen by the bridge.

\begin{proposition}[MCMC-smoothed kernel sensitivity]\label{prop:mcmc_smoothed_kernel_sensitivity}
    Let $x_t \in \R^d$,  $0 < s < t \leq T$, and let $R_t$ be a Markov kernel on $\R^d$. For $L \geq 0$ and $B \subset \R^d$, set
    \begin{align*}
        \Delta_{t,L}(B) = \sup_{x,x'\in B} W_1\left(R_t^L(x,\cdot),\,R_t^L(x',\cdot)\right).
    \end{align*}
    Then, for any bounded test function $\psi$,
    \begin{align*}
        \norm{R_t^L G_{s|t,x_t}\psi}_\infty \leq \norm{\psi}_\infty,
    \end{align*}
    and
    \begin{align}
        [R_t^L G_{s|t,x_t}\psi]_B &\leq 2\norm{\psi}_\infty\left[2\Phi\left(\frac{\gamma_{t,s}\,\Delta_{t,L}(B)}{2}\right) - 1\right] \label{eq:KG_exact_sensitivity} \\
        &\leq  \norm{\psi}_\infty \min\left\{2,\ \sqrt{\tfrac{2}{\pi}}\,\gamma_{t,s}\,\Delta_{t,L}(B)\right\}. \label{eq:KG_linear_sensitivity}
    \end{align}
    In particular, if $\Delta_{t,L}(B) \leq a_{t,L}(B)\,\diam(B)$ with $a_{t,L}(B) \in [0,1]$, then
    \begin{align*}
        [R_t^L G_{s|t,x_t}\psi]_B \leq \sqrt{\tfrac{2}{\pi}}\,\norm{\psi}_\infty\,\gamma_{t,s}\,a_{t,L}(B)\,\diam(B);
    \end{align*}
    and if there exist $C_t < \infty$ and $r_t \in (0,1)$ such that $W_1(R_t^L(x,\cdot),R_t^L(x',\cdot)) \leq C_t\,r_t^L\,\norm{x-x'}$ for all $x,x' \in B$, then
    \begin{align*}
        [R_t^L G_{s|t,x_t}\psi]_B \leq \sqrt{\tfrac{2}{\pi}}\,\norm{\psi}_\infty\,\gamma_{t,s}\,C_t\,r_t^L\,\diam(B).
    \end{align*}
\end{proposition}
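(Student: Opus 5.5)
The plan is to reduce everything to the pointwise estimate \eqref{eq:kernel_pointwise} of \Cref{lemma:kernel_sensitivity}, transported through the Markov kernel by a coupling argument. The sup-norm bound is immediate. $G_{s|t,x_t}$ is an expectation operator, so $\norm{G_{s|t,x_t}\psi}_\infty \leq \norm{\psi}_\infty$, and $R_t^L$ is also an averaging operator.

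For \eqref{eq:KG_exact_sensitivity}, fix $x,x'\in B$ and set $g = G_{s|t,x_t}\psi$. Let $\Pi$ be an optimal $W_1$-coupling of $R_t^L(x,\cdot)$ and $R_t^L(x',\cdot)$. Then
\begin{align*}
    \abs{R_t^L g(x) - R_t^L g(x')} \leq \int \abs{g(y)-g(y')}\,\Pi(\rmd y,\rmd y') \leq 2\norm{\psi}_\infty \int h(\norm{y-y'})\,\Pi(\rmd y,\rmd y'),
\end{align*}
where $h(r) = 2\Phi(\gamma_{t,s} r/2) - 1$, by \eqref{eq:kernel_pointwise}.

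The key observation is that $h$ is concave and nondecreasing on $[0,\infty)$. This holds because $\Phi$ is concave on $[0,\infty)$, its derivative $\varphi$ being decreasing there. By Jensen's inequality,
\begin{align*}
    \int h(\norm{y-y'})\,\rmd\Pi \leq h\left(\int \norm{y-y'}\,\rmd\Pi\right) = h\left(W_1(R_t^L(x,\cdot),R_t^L(x',\cdot))\right) \leq h(\Delta_{t,L}(B)),
\end{align*}
where the last step uses monotonicity of $h$. Taking the supremum over $x,x'\in B$ gives \eqref{eq:KG_exact_sensitivity}. Existence of an optimal coupling requires finite first moments. If these fail, I would instead take near-optimal couplings, or note that the bound is trivial when $\Delta_{t,L}(B) = \infty$.

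The linear bound \eqref{eq:KG_linear_sensitivity} follows exactly as at the end of \Cref{lemma:kernel_sensitivity}: $2\Phi(r)-1 \leq \min\{1, \sqrt{2/\pi}\,r\}$, here applied with $r = \gamma_{t,s}\Delta_{t,L}(B)/2$. The factor $1/2$ inside the argument cancels the prefactor $2$, which yields $\min\{2,\sqrt{2/\pi}\,\gamma_{t,s}\Delta_{t,L}(B)\}$.

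The two corollaries follow by substitution. The first plugs $\Delta_{t,L}(B) \leq a_{t,L}(B)\diam(B)$ into the linear branch. For the second, the hypothesis gives $\Delta_{t,L}(B) \leq C_t r_t^L \sup_{x,x'\in B}\norm{x-x'} = C_t r_t^L\diam(B)$.

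The only delicate step is the Jensen step. One must use concavity of the total-variation profile $h$ rather than a Lipschitz bound, so that the sharp $\Phi$-form survives and the result is phrased in terms of $W_1$. Everything else is routine.
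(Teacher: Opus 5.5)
Your proof is correct and follows the paper's argument essentially step for step. You transport the pointwise bound \eqref{eq:kernel_pointwise} through a coupling of $R_t^L(x,\cdot)$ and $R_t^L(x',\cdot)$, apply Jensen to the concave nondecreasing profile $r\mapsto 2\Phi(\gamma_{t,s}r/2)-1$, and then linearize via $2\Phi(r)-1\leq\min\{1,\sqrt{2/\pi}\,r\}$. The only cosmetic difference is that the paper starts from an arbitrary coupling and then optimizes over $\Pi$. That sidesteps your caveat about existence of an optimal coupling, which is harmless anyway, since a minimizer always exists for a nonnegative lower semicontinuous cost on $\R^d$.
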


\begin{proof}
    The sup-norm bound follows because both $R_t^L$ and $G_{s|t,x_t}$ are expectation operators. For the oscillation, let $x,x' \in B$ and let $\Pi$ be any coupling of $R_t^L(x,\cdot)$ and $R_t^L(x',\cdot)$. Then
    \begin{align*}
        \abs{R_t^L G_{s|t,x_t}\psi(x) - R_t^L G_{s|t,x_t}\psi(x')} \leq \int \abs{G_{s|t,x_t}\psi(y) - G_{s|t,x_t}\psi(y')}\,\Pi(\rmd y,\rmd y'),
    \end{align*}
    and by  \eqref{eq:kernel_pointwise} the integrand is at most $2\norm{\psi}_\infty[2\Phi(\gamma_{t,s}\norm{y-y'}/2) - 1]$. The map $r \mapsto 2\Phi(\gamma_{t,s} r/2) - 1$ is nondecreasing and concave on $\R_+$, so Jensen's inequality gives
    \begin{align*}
        \int \left[2\Phi\left(\frac{\gamma_{t,s}\norm{y - y'}}{2}\right) - 1\right]\Pi(\rmd y,\rmd y') \leq 2\Phi\left(\frac{\gamma_{t,s}}{2}\int \norm{y - y'}\Pi(\rmd y,\rmd y')\right) - 1.
    \end{align*}
    Optimizing over $\Pi$ and taking the supremum over $x,x' \in B$ yields \eqref{eq:KG_exact_sensitivity};  the inequality $2\Phi(r) - 1 \leq \min\{1,\sqrt{2/\pi}\,r\}$ for $r \geq 0$ then gives \eqref{eq:KG_linear_sensitivity}.
\end{proof}

\begin{theorem}[Per-step error with MCMC rejuvenation]\label{thm:mcmc_improved_per_step_error}
    Fix  $0 < s < t \leq T$ and $x_t \in \R^d$, and assume the sub-Gaussian tail assumption holds at $(t,x_t)$ with center $m_t(x_t)$ and rate $\kappa_t(x_t)$. For $u > 0$, set
    \begin{align*}
        B_u = \mathcal{B}\left(m_t(x_t),\,\kappa_t(x_t)(\sqrt{d}+u)\right), \qquad \Delta_{t,L}(u) = \Delta_{t,L}(B_u).
    \end{align*}
    Then, for any bounded test function $\psi$,  the bias of the sampled estimator $p^{M,L}_{s|t}$ and the mean squared error of the MCMC-integrated estimator $\bar p^{M,L}_{s|t}$ of \eqref{eq:mcmc_integrated_estimator} satisfy
    \begin{align}
        \norm{\psi}_\infty^{-1}\abs{\mathbb{E}\left[p^{M,L}_{s|t}(\psi \mid x_t) - p_{s|t}(\psi \mid x_t)\right]} &\leq \frac{12\,\rho_t(x_t)}{M}\left[2\Phi\left(\frac{\gamma_{t,s}\,\Delta_{t,L}(u)}{2}\right) - 1\right] \nonumber \\
        &\quad + 2\sqrt{\rho_t(x_t)}\,e^{-u^2} + 4M\,e^{-u^2}, \label{eq:mcmc_improved_bias_exact} \\
        \norm{\psi}_\infty^{-2}\,\mathbb{E}\left[\left( \bar p^{M,L}_{s|t}(\psi \mid x_t) - p_{s|t}(\psi \mid x_t)\right)^2\right] &\leq \frac{8\,\rho_t(x_t)}{M}\left[2\Phi\left(\frac{\gamma_{t,s}\,\Delta_{t,L}(u)}{2}\right) - 1\right]^2 \nonumber \\
        &\quad + 8\,\rho_t(x_t)\,e^{-2u^2} + 32M\,e^{-u^2}. \label{eq:mcmc_improved_mse_exact}
    \end{align}
     Bounding $2\Phi(r) - 1 \leq \min\{1,\sqrt{2/\pi}\,r\}$ and writing
    \begin{align}\label{eq:lambda_def}
        \Lambda_{t,L}(u) = \min\left\{1,\ \tfrac{1}{\sqrt{2\pi}}\,\gamma_{t,s}\,\Delta_{t,L}(u)\right\},
    \end{align}
    \begin{align}
        \norm{\psi}_\infty^{-1}\abs{\mathbb{E}\left[p^{M,L}_{s|t}(\psi \mid x_t) - p_{s|t}(\psi \mid x_t)\right]} &\leq  \frac{12\,\rho_t(x_t)}{M}\,\Lambda_{t,L}(u) \nonumber \\
        &\quad + 2\sqrt{\rho_t(x_t)}\,e^{-u^2} + 4M\,e^{-u^2}, \label{eq:mcmc_improved_bias_linear} \\
        \norm{\psi}_\infty^{-2}\,\mathbb{E}\left[\left( \bar p^{M,L}_{s|t}(\psi \mid x_t) - p_{s|t}(\psi \mid x_t)\right)^2\right] &\leq  \frac{8\,\rho_t(x_t)}{M}\,\Lambda_{t,L}(u)^2 \nonumber \\
        &\quad + 8\,\rho_t(x_t)\,e^{-2u^2} + 32M\,e^{-u^2}. \label{eq:mcmc_improved_mse_linear}
    \end{align}
\end{theorem}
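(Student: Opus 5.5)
The argument mirrors the proof of \Cref{thm:per_step_error}, with \Cref{prop:snis_mcmc_rejuvenation} in place of \Cref{lemma:agapiou_improved} and \Cref{prop:mcmc_smoothed_kernel_sensitivity} in place of \Cref{lemma:kernel_sensitivity}. Set $f = G_{s|t,x_t}\psi$, $\mu = p_{0|t}(\cdot\mid x_t)$, $\nu = q_{0|t}(\cdot\mid x_t)$ and $R = R_{t,x_t}$, so that $R$ leaves $\mu$ invariant. By \eqref{eq:mcmc_integrated_estimator}, the MCMC-integrated estimator satisfies $\bar p^{M,L}_{s|t}(\psi\mid x_t) = \mu^{M,L}(f)$, the $L$-step rejuvenated SNIS estimator of \Cref{prop:snis_mcmc_rejuvenation} with $N=M$. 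The invariance identity gives $\mu(f) = p_{s|t}(\psi\mid x_t)$. For the bias of the sampled estimator, I use \eqref{eq:conditional_identity} and the tower property: $\mathbb{E}[p^{M,L}_{s|t}(\psi\mid x_t)] = \mathbb{E}[\bar p^{M,L}_{s|t}(\psi\mid x_t)]$. Hence it is enough to bound the bias and MSE of $\mu^{M,L}(f)$.

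I then apply \eqref{eq:snis_mcmc_bias_general}--\eqref{eq:snis_mcmc_mse_general} with $B = B_u$, using $\norm{f}_\infty \le \norm{\psi}_\infty$. The tail terms are handled exactly as in \Cref{thm:per_step_error}. The sub-Gaussian assumption gives $\nu(B_u^\mathsf{c}) \le e^{-u^2}$. The tail assumption on the normalized square-weight measure gives $\eta_{B_u^\mathsf{c}} \le \rho_t(x_t)\,e^{-u^2}$. Together these give $\sqrt{\eta\,\nu(B_u^\mathsf{c})} \le \sqrt{\rho_t}\,e^{-u^2}$ and $\eta\,\nu(B_u^\mathsf{c}) \le \rho_t e^{-2u^2}$. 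These produce the terms $2\sqrt{\rho_t}e^{-u^2} + 4Me^{-u^2}$ in the bias bound and $8\rho_t e^{-2u^2} + 32Me^{-u^2}$ in the MSE bound.

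The leading term needs a bound on $[R^L f]_{B_u}$. Here I would not use the Lipschitz branch of \Cref{prop:snis_mcmc_rejuvenation}, since $f$ need not be Lipschitz with a good constant. Instead I invoke \eqref{eq:KG_exact_sensitivity} of \Cref{prop:mcmc_smoothed_kernel_sensitivity}, which gives
\[
[R^L f]_{B_u} \le 2\norm{\psi}_\infty\Bigl[2\Phi\bigl(\gamma_{t,s}\Delta_{t,L}(u)/2\bigr) - 1\Bigr].
\]
Substituting this into $6\rho/M\cdot[\,\cdot\,]$ gives the factor $12\rho_t/M$. Substituting into $2\rho/M\cdot[\,\cdot\,]^2$ gives $8\rho_t/M$ times the squared bracket. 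This proves \eqref{eq:mcmc_improved_bias_exact}--\eqref{eq:mcmc_improved_mse_exact}.

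For the linear versions I use $2\Phi(r)-1 \le \min\{1,\sqrt{2/\pi}\,r\}$ with $r = \gamma_{t,s}\Delta_{t,L}(u)/2$. This gives $\min\{1, \gamma_{t,s}\Delta_{t,L}(u)/\sqrt{2\pi}\} = \Lambda_{t,L}(u)$, and squaring handles the MSE bound.

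The main obstacle is bookkeeping. Everything else reduces to the earlier results, but one must not confuse the sampled and integrated estimators: the bias bound transfers to $p^{M,L}_{s|t}$ only through \eqref{eq:conditional_identity}, while the MSE bound is stated for $\bar p^{M,L}_{s|t}$ alone. One must also check that the squared-weight tail assumption is taken with respect to the original proposal draws $x^m_{0|t}$ and not the rejuvenated points, which holds because the weights depend only on the $x^m_{0|t}$.
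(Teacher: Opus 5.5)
Your proposal is correct and follows the paper's proof essentially step for step. You identify $\bar p^{M,L}_{s|t}$ with the rejuvenated SNIS estimator of \Cref{prop:snis_mcmc_rejuvenation}, transfer the bias to $p^{M,L}_{s|t}$ via \eqref{eq:conditional_identity}, handle the tail terms exactly as in \Cref{thm:per_step_error}, and control $[R^L G_{s|t,x_t}\psi]_{B_u}$ by \eqref{eq:KG_exact_sensitivity}.

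One small correction to your stated motivation: $G_{s|t,x_t}\psi$ is in fact $\sqrt{2/\pi}\,\gamma_{t,s}\norm{\psi}_\infty$-Lipschitz by \eqref{eq:kernel_pointwise}. So the Lipschitz branch, together with the trivial bound $[R^L f]_{B}\le 2\norm{\psi}_\infty$, would already give the linearized bounds \eqref{eq:mcmc_improved_bias_linear}--\eqref{eq:mcmc_improved_mse_linear}. What genuinely needs \Cref{prop:mcmc_smoothed_kernel_sensitivity}, through its Jensen step, is the sharper $\Phi$-form \eqref{eq:mcmc_improved_bias_exact}--\eqref{eq:mcmc_improved_mse_exact}.
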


\begin{proof}
    Set $f = G_{s|t,x_t}\psi$. Since $R_t$ is $p_{0|t}(\cdot | x_t)$-invariant, $p_{0|t}(R_t^L f | x_t) = p_{0|t}(f | x_t) = p_{s|t}(\psi | x_t)$. Apply \Cref{prop:snis_mcmc_rejuvenation} with $\mu = p_{0|t}(\cdot | x_t)$, $\nu = q_{0|t}(\cdot | x_t)$, $N = M$, and $f = G_{s|t,x_t}\psi$;  the resulting estimator $\mu^{M,L}(f)$ is exactly $\bar p^{M,L}_{s|t}(\psi | x_t)$, and by \eqref{eq:conditional_identity} its bias coincides with that of $p^{M,L}_{s|t}(\psi | x_t)$. The sub-Gaussian tail assumption gives, exactly as in the proof of \Cref{thm:per_step_error},
    \begin{align*}
        q_{0|t}(B_u^\mathsf{c} \mid x_t) \leq e^{-u^2}, \qquad \eta_{B_u^\mathsf{c}}\,q_{0|t}(B_u^\mathsf{c} \mid x_t) \leq \rho_t(x_t)\,e^{-2u^2}.
    \end{align*}
    The oscillation term is controlled by \Cref{prop:mcmc_smoothed_kernel_sensitivity}:
    \begin{align*}
        [R_t^L G_{s|t,x_t}\psi]_{B_u} \leq 2\norm{\psi}_\infty\left[2\Phi\left(\frac{\gamma_{t,s}\,\Delta_{t,L}(u)}{2}\right) - 1\right].
    \end{align*}
    Plugging in yields \eqref{eq:mcmc_improved_bias_exact}--\eqref{eq:mcmc_improved_mse_exact}; linearization gives \eqref{eq:mcmc_improved_bias_linear}--\eqref{eq:mcmc_improved_mse_linear}.
\end{proof}

\begin{remark}[MSE of the sampled estimator]\label{rmk:sampled_variance}
    The MSE bounds above are stated for $\bar p^{M,L}_{s|t}$; by \eqref{eq:conditional_variance} the sampled estimator $p^{M,L}_{s|t}$ carries an extra conditional variance that does not vanish as $L \to \infty$. Controlling it needs no new argument. On the extended space $\R^d \times \R^d$, set
    \begin{align*}
        \bar\nu(\rmd x_0,\rmd y) = q_{0|t}(\rmd x_0 \mid x_t)R^L_{t,x_t}(x_0,\rmd y), \qquad \bar\mu(\rmd x_0,\rmd y) = p_{0|t}(\rmd x_0 \mid x_t)R^L_{t,x_t}(x_0,\rmd y).
    \end{align*}
    Then $\rmd\bar\mu/\rmd\bar\nu(x_0,y) = \tilde w_{0|t}(x_0)/p_t(x_t)$, so $1 + \chi^2(\bar\mu\|\bar\nu) = \rho_t(x_t)$ and the SNIS weights built from $\bar\nu$ coincide with $w^{1:M}_{0|t}$. By invariance the $y$-marginal of $\bar\mu$ is $p_{0|t}(\cdot | x_t)$, so $\bar\mu(\bar f) = p_{s|t}(\psi | x_t)$ for $\bar f(x_0,y) = G_{s|t,x_t}\psi(y)$. Thus $p^{M,L}_{s|t}(\psi | x_t)$ is the SNIS approximation of $\bar\mu$ applied to $\bar f$, and \Cref{lemma:agapiou_improved} with $B = \R^d \times B^y_u$, combined with \eqref{eq:kernel_truncated}, gives the MSE bound of \Cref{thm:per_step_error} with $\kappa_t(x_t)$ replaced by the sub-Gaussian rate of the $y$-marginal of $\bar\nu$, i.e. of the $L$-step pushforwards of $q_{0|t}(\cdot | x_t)$ and of the normalized square-weight measure through $R^L_{t,x_t}$. Rejuvenation therefore improves the bias by the mixing factor $r_t^L$ while leaving the MSE of the same order as without it; averaging $J$ conditionally independent moves per particle divides the extra term \eqref{eq:conditional_variance} by $J$.
\end{remark}

\begin{corollary}[Convenient choice of radius]\label{cor:mcmc_improved_convenient_radius}
    Under the assumptions of \Cref{thm:mcmc_improved_per_step_error}, taking $u_M = \sqrt{3\log M}$ gives
    \begin{align*}
        \norm{\psi}_\infty^{-1}\abs{\mathbb{E}\left[p^{M,L}_{s|t}(\psi \mid x_t) - p_{s|t}(\psi \mid x_t)\right]} &\leq  \frac{12\,\rho_t(x_t)}{M}\,\Lambda_{t,L}(u_M) + \frac{2\sqrt{\rho_t(x_t)}}{M^3} + \frac{4}{M^2}, \\
        \norm{\psi}_\infty^{-2}\,\mathbb{E}\left[\left( \bar p^{M,L}_{s|t}(\psi \mid x_t) - p_{s|t}(\psi \mid x_t)\right)^2\right] &\leq  \frac{8\,\rho_t(x_t)}{M}\,\Lambda_{t,L}(u_M)^2 + \frac{8\,\rho_t(x_t)}{M^6} + \frac{32}{M^2}.
    \end{align*}
\end{corollary}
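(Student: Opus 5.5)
This corollary follows from \Cref{thm:mcmc_improved_per_step_error} by plugging in $u=u_M=\sqrt{3\log M}$. The only work is to evaluate the exponential tail terms at this radius.

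The key identity is
\begin{align*}
    e^{-u_M^2} = M^{-3}, \qquad e^{-2u_M^2} = M^{-6}.
\end{align*}
For the bias, start from the linearized bound \eqref{eq:mcmc_improved_bias_linear}, evaluated at $u=u_M$. The first term $\frac{12\rho_t(x_t)}{M}\Lambda_{t,L}(u_M)$ is kept as is. The second term becomes $2\sqrt{\rho_t(x_t)}\,e^{-u_M^2} = 2\sqrt{\rho_t(x_t)}/M^3$. The third term becomes $4M\,e^{-u_M^2} = 4/M^2$. This is exactly the displayed bias bound.

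For the MSE, start from \eqref{eq:mcmc_improved_mse_linear} at $u=u_M$. The leading term $\frac{8\rho_t(x_t)}{M}\Lambda_{t,L}(u_M)^2$ is kept. The middle term becomes $8\rho_t(x_t)e^{-2u_M^2} = 8\rho_t(x_t)/M^6$. The last term becomes $32M\,e^{-u_M^2} = 32/M^2$. This gives the displayed MSE bound.

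Two small points need checking. First, $u_M>0$ requires $M\geq 2$. For $M=1$ we have $u_M=0$; the theorem's proof still goes through there, since the sub-Gaussian tail bound $e^{-u^2}\le 1$ is trivially valid at $u=0$. Second, $\Lambda_{t,L}(u_M)$ must be defined through $\Delta_{t,L}(B_{u_M})$ with the same ball $B_{u_M}$ used in the theorem, so that no extra dependence enters. There is no real obstacle: the argument mirrors the final step of the proof of \Cref{thm:per_step_error}.
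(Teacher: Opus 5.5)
Your proposal is correct and matches the paper's intended argument. The paper gives no separate proof of this corollary, but the final step of the proof of \Cref{thm:per_step_error} is the same substitution: evaluate \eqref{eq:mcmc_improved_bias_linear}--\eqref{eq:mcmc_improved_mse_linear} at $u=u_M$ and use $e^{-u_M^2}=M^{-3}$ and $e^{-2u_M^2}=M^{-6}$. Your remark on $M=1$ is also sound, since \Cref{lemma:agapiou_improved} holds for any measurable $B$ and the tail bounds are trivial at $u=0$.
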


\paragraph{Specialization to MAExL.}
We now specialize to the case where $R_t$ is the MAExL Markov kernel of \cite{Durmus2015quantitative} targeting $\pi_t = p_{0|t}(\cdot | x_t)$. Recall  $A_t = \alpha_t^2/\sigma_t^2$ from \eqref{eq:gamma_snr} and the auxiliary quantities
\begin{align*}
    \bar m_t(x_t) = \frac{x_t}{\alpha_t}, \qquad Q_t = A_t\,\Idd.
\end{align*}

\begin{corollary}[Per-step error with MAExL-rejuvenated SNIS]\label{cor:maexl_rejuvenated_snis_error}
    Fix  $0 < s < t \leq T$ and $x_t \in \R^d$. Assume that, after centering at $\bar m_t(x_t)$, the MAExL kernel $R_t$ satisfies the EI-MALA assumptions M1--M2 of \cite{Durmus2015quantitative}. Then by \citep[Proposition~3, Theorem~1, Theorem~2]{Durmus2015quantitative}, there exist constants $C_t < \infty$, $r_t \in (0,1)$, $\eta_t > 0$, $\varepsilon_t > 0$, and a bounded distance $d_t = d_{\eta_t,\varepsilon_t}$ such that for all $L \geq 0$ and $x,y \in \R^d$,
    \begin{align*}
        W_{d_t}\left(R_t^L(x,\cdot),\,R_t^L(y,\cdot)\right) \leq C_t\,r_t^L\,\{V_t(x) + V_t(y)\}, \qquad V_t(x) = 1 \vee \norm{x - \bar m_t(x_t)}_{Q_t}.
    \end{align*}

    Assume the sub-Gaussian tail assumption holds at $(t,x_t)$ with center $m_t(x_t)$ and rate $\kappa_t(x_t)$. For $u > 0$, set $B_u = \mathcal{B}(m_t(x_t),\,\kappa_t(x_t)(\sqrt{d}+u))$ and $V_{t,u} = \sup_{x \in B_u} V_t(x)$. Define the normalized $d_t$-Lipschitz constant of $G_{s|t,x_t}\psi$,
    \begin{align*}
        \mathfrak{L}_{t,s} = \sup_{x \neq y} \frac{\abs{G_{s|t,x_t}\psi(x) - G_{s|t,x_t}\psi(y)}}{\norm{\psi}_\infty\,d_t(x,y)};
    \end{align*}
    using  \eqref{eq:kernel_pointwise} together with the lower comparison of $d_{\eta,\varepsilon}$ with the $Q_t$-norm \citep[Lemma~8]{Durmus2015quantitative}, one may take
    \begin{align}\label{eq:frakL_bound}
        \mathfrak{L}_{t,s} \leq 2 \vee \left(\sqrt{\tfrac{2}{\pi}}\,\gamma_{t,s}\,\frac{\varepsilon_t}{\sqrt{A_t}}\right).
    \end{align}

    Then, for any bounded test function $\psi$,  with the bias referring to $p^{M,L}_{s|t}$ and the MSE to $\bar p^{M,L}_{s|t}$,
    \begin{align}
        \norm{\psi}_\infty^{-1}\abs{\mathbb{E}\left[p^{M,L}_{s|t}(\psi \mid x_t) - p_{s|t}(\psi \mid x_t)\right]} &\leq \frac{12\,\rho_t(x_t)}{M}\,\mathfrak{L}_{t,s}\,C_t\,r_t^L\,V_{t,u} \nonumber \\
        &\quad + 2\sqrt{\rho_t(x_t)}\,e^{-u^2} + 4M\,e^{-u^2}, \label{eq:maexl_rejuvenated_bias} \\
        \norm{\psi}_\infty^{-2}\,\mathbb{E}\left[\left( \bar p^{M,L}_{s|t}(\psi \mid x_t) - p_{s|t}(\psi \mid x_t)\right)^2\right] &\leq \frac{8\,\rho_t(x_t)}{M}\,\mathfrak{L}_{t,s}^2\,C_t^2\,r_t^{2L}\,V_{t,u}^2 \nonumber \\
        &\quad + 8\,\rho_t(x_t)\,e^{-2u^2} + 32M\,e^{-u^2}. \label{eq:maexl_rejuvenated_mse}
    \end{align}
    Furthermore, $V_{t,u} \leq 1 \vee \sqrt{A_t}\left(\norm{m_t(x_t) - \bar m_t(x_t)} + \kappa_t(x_t)(\sqrt{d}+u)\right)$.
\end{corollary}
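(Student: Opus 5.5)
The plan is to reduce the statement to \Cref{prop:snis_mcmc_rejuvenation}, applied exactly as in the proof of \Cref{thm:mcmc_improved_per_step_error}. The one new input is a bound on the oscillation $[R_t^L G_{s|t,x_t}\psi]_{B_u}$ that uses the $d_t$-contraction of \citet{Durmus2015quantitative} in place of the $W_1$ diameter $\Delta_{t,L}$.

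Set $f = G_{s|t,x_t}\psi$, $\mu = p_{0|t}(\cdot\mid x_t)$, $\nu = q_{0|t}(\cdot\mid x_t)$ and $N = M$. Invariance of $R_t$ gives $\mu(R_t^L f) = p_{s|t}(\psi\mid x_t)$. By \eqref{eq:conditional_identity}, the bias of $p^{M,L}_{s|t}$ equals that of $\bar p^{M,L}_{s|t}$, which is $\mu^{M,L}(f)$. The tail terms $2\sqrt{\rho_t}e^{-u^2}+4Me^{-u^2}$ and $8\rho_t e^{-2u^2}+32Me^{-u^2}$ then follow verbatim from the sub-Gaussian assumption, through $\nu(B_u^{\mathsf c})\le e^{-u^2}$ and $\eta_{B_u^{\mathsf c}}\le\rho_t e^{-u^2}$, as in \Cref{thm:per_step_error}. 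It therefore remains to show
\begin{align*}
[R_t^L f]_{B_u} \le \norm{\psi}_\infty\,\mathfrak{L}_{t,s}\,C_t\,r_t^L\,2V_{t,u}.
\end{align*}
Inserting this into \eqref{eq:snis_mcmc_bias_general}, the factor $2$ combines with $6\rho/M$ to give $12\rho_t/M$. In the MSE, $(2V)^2=4V^2$ combines with $2\rho/M$ to give $8\rho_t/M$. These match the stated constants.

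For the oscillation, fix $x,x'\in B_u$ and let $\Pi$ be an optimal $W_{d_t}$-coupling of $R_t^L(x,\cdot)$ and $R_t^L(x',\cdot)$. By the definition of $\mathfrak{L}_{t,s}$,
\begin{align*}
\abs{R_t^Lf(x)-R_t^Lf(x')} \le \int \abs{f(y)-f(y')}\,\Pi(\rmd y,\rmd y') \le \norm{\psi}_\infty\,\mathfrak{L}_{t,s}\,W_{d_t}\!\left(R_t^L(x,\cdot),R_t^L(x',\cdot)\right).
\end{align*}
The quoted contraction then bounds the right-hand side by $\norm{\psi}_\infty\mathfrak{L}_{t,s}C_t r_t^L\{V_t(x)+V_t(x')\} \le 2\norm{\psi}_\infty\mathfrak{L}_{t,s}C_t r_t^L V_{t,u}$. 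Existence of the optimal coupling is not needed: an $\epsilon$-optimal coupling suffices.

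The bound on $V_{t,u}$ is a triangle inequality. For $x\in B_u$,
\begin{align*}
\norm{x-\bar m_t}_{Q_t} = \sqrt{A_t}\,\norm{x-\bar m_t} \le \sqrt{A_t}\left(\norm{m_t-\bar m_t}+\kappa_t(\sqrt d+u)\right).
\end{align*}

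The main obstacle is justifying \eqref{eq:frakL_bound}, i.e.\ a $d_t$-Lipschitz bound for $f$. I would use two ingredients.
\begin{itemize}
\item \emph{Far pairs.} $\abs{f(x)-f(y)}\le 2\norm{\psi}_\infty$ always holds. Since $d_{\eta,\varepsilon}$ is bounded below by a constant (normalized to $1$) whenever $\norm{x-y}_{Q_t}$ is large, this gives the branch $2$.
\item \emph{Near pairs.} Here $d_{\eta,\varepsilon}(x,y)\ge \norm{x-y}_{Q_t}/\varepsilon_t = \sqrt{A_t}\norm{x-y}/\varepsilon_t$ by \citep[Lemma~8]{Durmus2015quantitative}. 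Combined with \eqref{eq:kernel_pointwise} and $2\Phi(r)-1\le\sqrt{2/\pi}\,r$, this gives a Lipschitz factor of order $\sqrt{2/\pi}\,\gamma_{t,s}\,\varepsilon_t/\sqrt{A_t}$.
\end{itemize}
Taking the maximum of the two regimes yields \eqref{eq:frakL_bound}. The delicate part is checking the exact form of the lower comparison in Lemma~8, which fixes how the constants combine. If that form differs, only $\mathfrak{L}_{t,s}$ changes, and the main inequalities, which are stated in terms of $\mathfrak{L}_{t,s}$, are unaffected.
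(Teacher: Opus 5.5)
Your proposal is correct and follows the paper's proof essentially step for step. Like the paper, you apply \Cref{prop:snis_mcmc_rejuvenation} with $B=B_u$, bound $[R_t^L G_{s|t,x_t}\psi]_{B_u}\leq 2\norm{\psi}_\infty\mathfrak{L}_{t,s}C_t r_t^L V_{t,u}$ via the $d_t$-Lipschitz property and the Durmus--Moulines contraction, insert the sub-Gaussian tail estimates, and finish with the triangle inequality for $V_{t,u}$. The paper justifies \eqref{eq:frakL_bound} only by citation; your near/far split recovers exactly that bound whenever $d_t(x,y)\geq 1\wedge \norm{x-y}_{Q_t}/\varepsilon_t$.
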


\begin{proof}
    Set $f = G_{s|t,x_t}\psi$. Since $R_t$ is $p_{0|t}(\cdot | x_t)$-invariant,  the MCMC-integrated estimator $\bar p^{M,L}_{s|t}$ coincides with the SNIS estimator applied to $R_t^L f$. Applying \Cref{prop:snis_mcmc_rejuvenation} with $\mu = p_{0|t}(\cdot | x_t)$, $\nu = q_{0|t}(\cdot | x_t)$, $N = M$, and $B = B_u$, we get
    \begin{align*}
        \abs{\mathbb{E}\left[ \bar p^{M,L}_{s|t}(\psi \mid x_t) - p_{s|t}(\psi \mid x_t)\right]} &\leq \frac{6\,\rho_t(x_t)}{M}\,[R_t^L f]_{B_u} + 2\norm{\psi}_\infty\sqrt{\eta_{B_u^\mathsf{c}}\,q_{0|t}(B_u^\mathsf{c} \mid x_t)} \\
        &\quad + 4\norm{\psi}_\infty\,M\,q_{0|t}(B_u^\mathsf{c} \mid x_t),
    \end{align*}
    and analogously for the MSE;  by \eqref{eq:conditional_identity} the left-hand side is unchanged if $\bar p^{M,L}_{s|t}$ is replaced by $p^{M,L}_{s|t}$.

    For the oscillation term, note that $f$ is $\norm{\psi}_\infty\,\mathfrak{L}_{t,s}$-Lipschitz with respect to $d_t$, so the quantitative Wasserstein control of MAExL gives
    \begin{align*}
        [R_t^L f]_{B_u} \leq \norm{\psi}_\infty\,\mathfrak{L}_{t,s}\sup_{x,y \in B_u} W_{d_t}\left(R_t^L(x,\cdot),\,R_t^L(y,\cdot)\right) \leq 2\norm{\psi}_\infty\,\mathfrak{L}_{t,s}\,C_t\,r_t^L\,V_{t,u}.
    \end{align*}
    The sub-Gaussian tail assumption gives $q_{0|t}(B_u^\mathsf{c} | x_t) \leq e^{-u^2}$ and $\eta_{B_u^\mathsf{c}}\,q_{0|t}(B_u^\mathsf{c} | x_t) \leq \rho_t(x_t)\,e^{-2u^2}$, yielding \eqref{eq:maexl_rejuvenated_bias}--\eqref{eq:maexl_rejuvenated_mse}.

    The bound on $V_{t,u}$ follows from the triangle inequality and the definition of the $Q_t$-norm.
\end{proof}

\paragraph{Interpretation.}
Compared with the non-rejuvenated SNIS bound of \Cref{thm:per_step_error}, the leading oscillation term is multiplied by the MAExL mixing factor $C_t\,r_t^L\,V_{t,u}$. Thus the bridge sensitivity $\gamma_{t,s}$ is unaltered by the MCMC step; rather, MAExL contracts the effective $x_0$-diameter seen by the bridge. The constants $C_t, r_t$ and the adapted metric $d_t = d_{\eta_t,\varepsilon_t}$ are those of the EI-MALA Wasserstein analysis of \cite{Durmus2015quantitative}.  The analysis never requires the rejuvenation output to be an exact draw from $p_{0|t}(\cdot | x_t)$: only the invariance identity $p_{0|t}(R^L_{t,x_t}G_{s|t,x_t}\psi | x_t) = p_{s|t}(\psi | x_t)$ is used, and the quality of the finite-$L$ chain, together with its dependence on the initialization supplied by the proposal, enters solely through $\Delta_{t,L}(B)$, which is finite and in general nonzero for every finite $L$.

\subsubsection{From per-step bounds to a path-error bound}

We now propagate the per-step bias and MSE bounds along the diffusion schedule.

\begin{proposition}[Telescoping]\label{prop:telescoping}
    Let $t_K > t_{K-1} > \cdots > t_0$, and let $P_k$ be the exact Markov kernel from time $t_k$ to $t_{k-1}$. Let $\widehat P_1,\ldots,\widehat P_K$ be independent random Markov kernels approximating $P_1,\ldots,P_K$.  Given a probability measure $\eta_K$ on $\R^d$, define the exact marginals
    \begin{align*}
        \eta_{k-1} = \eta_k\,P_k \quad (k = K,\ldots,1),
    \end{align*}
    so that for any bounded $\psi$, $\eta_0(\psi) = \eta_K\,P_K\cdots P_1\,\psi$. Define the chained approximation
    \begin{align*}
        \widehat\eta_0(\psi) = \eta_K\,\widehat P_K\,\widehat P_{K-1}\cdots \widehat P_1\,\psi.
    \end{align*}
    Suppose that for every bounded measurable $f$, every $x \in \R^d$, and every $k$, there exist deterministic functions $b_k$ and $v_k$ such that
    \begin{align*}
        \abs{\mathbb{E}\left[\widehat P_k f(x) - P_k f(x)\right]} \leq b_k(x,\norm{f}_\infty), \qquad \mathbb{E}\left[\left(\widehat P_k f(x) - P_k f(x)\right)^2\right] \leq v_k(x,\norm{f}_\infty)^2.
    \end{align*}
    Then, for every bounded measurable $\psi$,
    \begin{align}
        \abs{\mathbb{E}\left[\widehat\eta_0(\psi) - \eta_0(\psi)\right]} &\leq \sum_{k=1}^K \eta_k\left[b_k(\cdot,\norm{\psi}_\infty)\right], \label{eq:bias_propagation} \\
        \mathbb{E}\left[\left(\widehat\eta_0(\psi) - \eta_0(\psi)\right)^2\right] &\leq \left(\sum_{k=1}^K \sqrt{\eta_k\left[v_k(\cdot,\norm{\psi}_\infty)^2\right]}\right)^{2}. \label{eq:mse_propagation}
    \end{align}
\end{proposition}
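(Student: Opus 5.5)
The plan is a telescoping decomposition that swaps one exact kernel for its approximation at a time. For $k=0,\ldots,K$ define the hybrid quantities
\begin{align*}
    H_k(\psi) = \eta_K\,\widehat P_K\cdots\widehat P_{k+1}\,P_k\cdots P_1\,\psi,
\end{align*}
so that $H_K(\psi)=\eta_0(\psi)$ (no approximate kernel) and $H_0(\psi)=\widehat\eta_0(\psi)$. Then
\begin{align*}
    \widehat\eta_0(\psi)-\eta_0(\psi) = \sum_{k=1}^K \left(H_{k-1}(\psi)-H_k(\psi)\right) = \sum_{k=1}^K \widehat\mu_k\left(\widehat P_k f_k - P_k f_k\right),
\end{align*}
where $f_k = P_{k-1}\cdots P_1\psi$ is deterministic with $\norm{f_k}_\infty\le\norm{\psi}_\infty$, and $\widehat\mu_k = \eta_K\widehat P_K\cdots\widehat P_{k+1}$ is a random measure depending only on $\widehat P_{k+1},\ldots,\widehat P_K$. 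In particular, $\widehat\mu_k$ is independent of $\widehat P_k$.

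For the bias, condition on $\widehat P_{k+1:K}$ and use independence and Fubini:
\begin{align*}
    \mathbb{E}\left[\widehat\mu_k(\widehat P_k f_k - P_k f_k)\right] = \mathbb{E}\left[\widehat\mu_k\left(\mathbb{E}[\widehat P_k f_k - P_k f_k]\right)\right].
\end{align*}
The integrand is bounded in absolute value by $b_k(\cdot,\norm{\psi}_\infty)$, after using monotonicity of $b_k$ in its second argument (implicit in the bounds of \Cref{thm:per_step_error}, which are proportional to $\norm{\psi}_\infty$). This leaves $\mathbb{E}[\widehat\mu_k](b_k)$. To replace it by $\eta_k(b_k)$ I need $\mathbb{E}[\widehat\mu_k]=\eta_k$, i.e.\ that each $\widehat P_j$ is unbiased in mean as a kernel. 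That fails for SNIS at finite $M$.

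This is the main obstacle. The stated bound, with $\eta_k$ rather than the law of the approximate chain, only holds cleanly if either the per-step bounds are interpreted along the exact marginals, or the randomness is organised so that $\widehat\mu_k$ is replaced by $\eta_k$. I would try first to reverse the telescoping order: define
\begin{align*}
    H'_k(\psi) = \eta_K\,P_K\cdots P_{k+1}\,\widehat P_k\cdots\widehat P_1\,\psi,
\end{align*}
so that the difference terms read $\eta_k\left((\widehat P_k - P_k)\widehat g_k\right)$ with the outer measure exactly $\eta_k$. Here $\widehat g_k=\widehat P_{k-1}\cdots\widehat P_1\psi$ is random but independent of $\widehat P_k$ and satisfies $\norm{\widehat g_k}_\infty\le\norm{\psi}_\infty$. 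Conditioning on $\widehat P_{1:k-1}$ makes $\widehat g_k$ a fixed bounded function. The hypothesis then applies pointwise in $x$ with $b_k(x,\norm{\psi}_\infty)$, and integrating against the deterministic $\eta_k$ and summing gives \eqref{eq:bias_propagation}. This ordering fits the assumption of independent random kernels with bounds uniform over all bounded $f$, so I would commit to it.

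For the MSE, I apply Minkowski's inequality in $L^2(\mathbb{P})$ to the same sum:
\begin{align*}
    \left\|\widehat\eta_0(\psi)-\eta_0(\psi)\right\|_{L^2} \le \sum_k \left\|\eta_k\left((\widehat P_k-P_k)\widehat g_k\right)\right\|_{L^2}.
\end{align*}
For each term, Jensen's inequality under the probability measure $\eta_k$ gives
\begin{align*}
    \left(\eta_k\left((\widehat P_k-P_k)\widehat g_k\right)\right)^2 \le \eta_k\left(\left((\widehat P_k-P_k)\widehat g_k\right)^2\right).
\end{align*}
Taking expectations, conditioning on $\widehat P_{1:k-1}$, and applying the MSE hypothesis pointwise yields the bound $\eta_k\left[v_k(\cdot,\norm{\psi}_\infty)^2\right]$. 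Taking square roots, summing, and squaring gives \eqref{eq:mse_propagation}.
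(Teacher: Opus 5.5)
Your committed ordering is exactly the paper's proof. It uses the exact outer measure $\eta_k=\eta_K P_K\cdots P_{k+1}$ and the random inner test function $\widehat P_{k-1}\cdots\widehat P_1\psi$ of sup-norm at most $\|\psi\|_\infty$, conditions on $\widehat P_1,\ldots,\widehat P_{k-1}$ for the bias, and uses Minkowski in $L^2(\mathbb{P})$ plus Jensen under $\eta_k$ for the MSE; you were right to discard the first ordering. Like the paper, you implicitly need $b_k$ and $v_k$ to be nondecreasing in their second argument when replacing $\|\widehat g_k\|_\infty$ by $\|\psi\|_\infty$, which holds for the bounds of \Cref{thm:per_step_error} since they are proportional to that norm.
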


\begin{proof}
    For $k=1,\ldots,K$, define the random backward test functions $\widehat\psi_0 = \psi$ and $\widehat\psi_{k-1} = \widehat P_{k-1}\cdots \widehat P_1\,\psi$. Each $\widehat P_j$ is a Markov kernel, so $\norm{\widehat\psi_{k-1}}_\infty \leq \norm{\psi}_\infty$. The telescoping identity
    \begin{align*}
        \widehat P_K\cdots \widehat P_1 - P_K\cdots P_1 = \sum_{k=1}^K P_K\cdots P_{k+1}\,(\widehat P_k - P_k)\,\widehat P_{k-1}\cdots \widehat P_1
    \end{align*}
    gives
    \begin{align*}
        \widehat\eta_0(\psi) - \eta_0(\psi) = \sum_{k=1}^K \eta_k\left[(\widehat P_k - P_k)\,\widehat\psi_{k-1}\right].
    \end{align*}

    \emph{Bias.} Conditioning on $\widehat P_1,\ldots,\widehat P_{k-1}$ fixes $\widehat\psi_{k-1}$, hence
    \begin{align*}
        \abs{\mathbb{E}\left[\eta_k\left[(\widehat P_k - P_k)\,\widehat\psi_{k-1}\right]\right]} \leq \eta_k\left[b_k(\cdot,\norm{\psi}_\infty)\right].
    \end{align*}
    Summing over $k$ gives \eqref{eq:bias_propagation}.

    \emph{MSE.} By Minkowski's inequality and Jensen's inequality, conditioning on $\widehat P_1,\ldots,\widehat P_{k-1}$,
    \begin{align*}
        \left\{\mathbb{E}\left[\left(\widehat\eta_0(\psi) - \eta_0(\psi)\right)^2\right]\right\}^{1/2} &\leq \sum_{k=1}^K \left\{\mathbb{E}\left[\left(\eta_k\left[(\widehat P_k - P_k)\widehat\psi_{k-1}\right]\right)^2\right]\right\}^{1/2} \\
        &\leq \sum_{k=1}^K \sqrt{\eta_k\left[v_k(\cdot,\norm{\psi}_\infty)^2\right]},
    \end{align*}
    which gives \eqref{eq:mse_propagation}.
\end{proof}

Specializing to the chained SNIS estimators yields the path-error bounds.

\begin{corollary}[Path error without MCMC]\label{cor:chained_no_mcmc}
    Let $T = t_K > t_{K-1} > \cdots > t_0  = \tau > 0$, and for $k = 1,\ldots,K$ set $s_k = t_{k-1}$ and $\gamma_k = \gamma_{t_k,s_k}$. Let $P_k(x,\rmd x') = p_{s_k|t_k}(\rmd x' | x)$ be the exact denoising kernel and let $Q_k^M$ be its SNIS approximation. Assume $Q_1^M,\ldots,Q_K^M$ are independent and the initial law is exact, $q_{t_K} = p_{t_K}$, so that $\eta_k = p_{t_k}$. Define the chained approximation
    \begin{align*}
         q_\tau^M(\psi) = p_{t_K}\,Q_K^M\cdots Q_1^M\,\psi,
    \end{align*}
    and write $\rho_k(x) = 1 + \chi^2(p_{0|t_k}(\cdot | x)\,\|\,q_{0|t_k}(\cdot | x))$ and $\kappa_k(x) = \kappa_{t_k}(x)$. Assume that for each $k$, the sub-Gaussian tail assumption holds at $(t_k,x)$ for every $x$ in the support of $p_{t_k}$, with center $m_k(x)$ and rate $\kappa_k(x)$.  With $u_M = \sqrt{3\log M}$ and
    \begin{align*}
        \Theta_k(x) = \min\left\{1,\ \sqrt{\tfrac{2}{\pi}}\,\gamma_k\,\kappa_k(x)\left(\sqrt{d}+u_M\right)\right\},
    \end{align*}
    set
    \begin{align*}
        B_k(x) &=  \frac{12\,\rho_k(x)}{M}\,\Theta_k(x) + \frac{2\sqrt{\rho_k(x)}}{M^3} + \frac{4}{M^2}, \\
        V_k(x) &=  \frac{8\,\rho_k(x)}{M}\,\Theta_k(x)^2 + \frac{8\,\rho_k(x)}{M^6} + \frac{32}{M^2}.
    \end{align*}
    Then, for every bounded measurable $\psi$,
    \begin{align}
        \abs{\mathbb{E}\left[ q_\tau^M(\psi) - p_\tau(\psi)\right]} &\leq \norm{\psi}_\infty\sum_{k=1}^K \eta_k(B_k), \label{eq:chained_bias} \\
        \mathbb{E}\left[\left( q_\tau^M(\psi) - p_\tau(\psi)\right)^2\right] &\leq \norm{\psi}_\infty^2\left(\sum_{k=1}^K \sqrt{\eta_k(V_k)}\right)^{2}. \label{eq:chained_mse}
    \end{align}
\end{corollary}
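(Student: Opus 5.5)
The plan is to obtain \Cref{cor:chained_no_mcmc} as a direct specialization of \Cref{prop:telescoping}, with the per-step bias and MSE bounds $b_k,v_k$ taken from \Cref{thm:per_step_error}. Concretely, I would set $\eta_K = p_{t_K}$, $P_k = p_{s_k|t_k}$ and $\widehat P_k = Q_k^M$. I would then check that the exact marginals defined by $\eta_{k-1}=\eta_k P_k$ are precisely $p_{t_{k-1}}$. This holds because the exact denoising kernels of \Cref{eq:ddpm_kernel} transport the marginal $p_{t_k}$ to $p_{t_{k-1}}$: by Bayes' rule $p_{t_k}(x_t)p_{0|t_k}(x_0\mid x_t)=p_0(x_0)p_{t_k|0}(x_t\mid x_0)$, and integrating the Gaussian bridge against the joint law of $(X_0,X_{t_k})$ yields the law of $X_{t_{k-1}}$. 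Hence $\eta_k = p_{t_k}$ for all $k$, $\eta_0 = p_\tau$, and $\widehat\eta_0(\psi) = q^M_\tau(\psi)$.

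Next I would verify the hypotheses of \Cref{prop:telescoping}, namely that for every bounded $f$ and every $x$ in the support of $p_{t_k}$,
\begin{align*}
\abs{\mathbb{E}[Q_k^M f(x) - P_k f(x)]} \leq \norm{f}_\infty B_k(x), \qquad \mathbb{E}[(Q_k^M f(x) - P_k f(x))^2] \leq \norm{f}_\infty^2 V_k(x).
\end{align*}
Both bounds are exactly the first display of \Cref{thm:per_step_error} applied at $(t,s)=(t_k,s_k)$ with $x_t=x$ and $\psi=f$. The sub-Gaussian tail assumption is available there by hypothesis, and $s_k=t_{k-1}\geq\tau>0$, so $\gamma_k<\infty$. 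This gives $b_k(x,\norm{f}_\infty) = \norm{f}_\infty B_k(x)$ and $v_k(x,\norm{f}_\infty)^2=\norm{f}_\infty^2V_k(x)$, both linear or quadratic in $\norm{f}_\infty$ and deterministic. Substituting into \eqref{eq:bias_propagation}--\eqref{eq:mse_propagation} with $\norm{\psi}_\infty$ factored out produces \eqref{eq:chained_bias}--\eqref{eq:chained_mse}.

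The main obstacle is the precise meaning of ``independent random kernels''. In \Cref{thm:per_step_error} the randomness is the particle cloud drawn at a single fixed $x$. For $Q_k^M f(x)$ to be a random kernel to which the telescoping conditioning argument applies, the particle draws at different $x$ must be independent of the randomness in $\widehat P_1,\dots,\widehat P_{k-1}$, which determines the random test function $\widehat\psi_{k-1}$. I would formalize $Q_k^M$ as a random field indexed by $x$, with fresh particles at each state, independent across $k$. Conditionally on $\widehat\psi_{k-1}$, the per-step bound then applies pointwise with $f=\widehat\psi_{k-1}$, using $\norm{\widehat\psi_{k-1}}_\infty\leq\norm{\psi}_\infty$ and the monotonicity of the bounds in $\norm{f}_\infty$. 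Joint measurability in $(x,\omega)$, needed to integrate against $\eta_k$ and apply Fubini and Jensen, I would treat as a standing regularity assumption.
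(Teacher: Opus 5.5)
Your proposal is correct and is exactly the paper's argument: \Cref{prop:telescoping} applied with $\widehat P_k = Q_k^M$, and with the per-step bounds $b_k = \norm{f}_\infty B_k$ and $v_k^2 = \norm{f}_\infty^2 V_k$ read off \Cref{thm:per_step_error} at $u = u_M$, which is legitimate because $s_k \geq \tau > 0$. Your extra care about formalizing $Q_k^M$ as a random kernel goes beyond what the paper spells out, but one caveat applies: the joint measurability you assume is only available if the particles at different states within one step are built from common base noise (e.g.\ the same latent draws pushed through the conditional flow), because genuinely independent draws at each of uncountably many $x$ cannot form a jointly measurable field.
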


\begin{proof}
    Apply \Cref{prop:telescoping} to $\widehat P_k = Q_k^M$, with the per-step bias and MSE bounds of \Cref{thm:per_step_error}  at $u = u_M$; these are legitimate since $s_k \geq \tau > 0$ for every $k$.
\end{proof}

\begin{corollary}[Path error with MCMC rejuvenation]\label{cor:chained_mcmc}
    In the setting of \Cref{cor:chained_no_mcmc}, let $Q_k^{M,L}$ be the MAExL-rejuvenated SNIS approximation of $P_k$ with $L$ MCMC steps, and assume $Q_1^{M,L},\ldots,Q_K^{M,L}$ are independent, the independence covering their MCMC randomness as well as their proposal draws. Assume the MAExL constants $C_k(x) < \infty$, $r_k(x) \in (0,1)$, $\varepsilon_{t_k} > 0$ of \Cref{cor:maexl_rejuvenated_snis_error} are well-defined at every $(t_k,x)$, and set
    \begin{align*}
        \mathfrak{L}_k &= 2 \vee \left(\sqrt{\tfrac{2}{\pi}}\,\gamma_k\,\frac{\varepsilon_{t_k}}{\sqrt{A_{t_k}}}\right), \\
        \mathfrak{V}_k(x) &= 1 \vee \sqrt{A_{t_k}}\left(\norm{m_k(x) - \tfrac{x}{\alpha_{t_k}}} + \kappa_k(x)\left(\sqrt{d}+u_M\right)\right),
    \end{align*}
    as in \eqref{eq:frakL_bound} and the last display of \Cref{cor:maexl_rejuvenated_snis_error}. Assume moreover that the sub-Gaussian tail assumption holds at $(t_k,x)$ for the $L$-step pushforwards of $q_{0|t_k}(\cdot | x)$ and of the normalized square-weight measure through $R^L_{t_k,x}$, with common center $\tilde m_{k,L}(x)$ and rate $\tilde\kappa_{k,L}(x)$, and write
    \begin{align*}
        \tilde\Theta_{k,L}(x) = \min\left\{1,\ \sqrt{\tfrac{2}{\pi}}\,\gamma_k\,\tilde\kappa_{k,L}(x)\left(\sqrt{d}+u_M\right)\right\}.
    \end{align*}
    Define the chained approximation
    \begin{align*}
        q_\tau^{M,L}(\psi) = p_{t_K}\,Q_K^{M,L}\cdots Q_1^{M,L}\,\psi,
    \end{align*}
    and set
    \begin{align*}
        B_k(x) &= \frac{12\,\rho_k(x)}{M}\,\mathfrak{L}_k\,C_k(x)\,r_k(x)^L\,\mathfrak{V}_k(x) + \frac{2\sqrt{\rho_k(x)}}{M^3} + \frac{4}{M^2}, \\
        V_k(x) &= \frac{8\,\rho_k(x)}{M}\,\tilde\Theta_{k,L}(x)^2 + \frac{8\,\rho_k(x)}{M^6} + \frac{32}{M^2}.
    \end{align*}
    Then, for every bounded measurable $\psi$,
    \begin{align}
        \abs{\mathbb{E}\left[q_\tau^{M,L}(\psi) - p_\tau(\psi)\right]} &\leq \norm{\psi}_\infty\sum_{k=1}^K \eta_k(B_k), \\
        \mathbb{E}\left[\left(q_\tau^{M,L}(\psi) - p_\tau(\psi)\right)^2\right] &\leq \norm{\psi}_\infty^2\left(\sum_{k=1}^K \sqrt{\eta_k(V_k)}\right)^{2}.
    \end{align}
    The bias bound therefore contracts geometrically in $L$, while the MSE bound is of the same order as in \Cref{cor:chained_no_mcmc}; see \Cref{rmk:sampled_variance}.
\end{corollary}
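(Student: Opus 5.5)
The plan is to apply \Cref{prop:telescoping} with $\widehat P_k = Q_k^{M,L}$, $P_k(x,\rmd x') = p_{s_k|t_k}(\rmd x' \mid x)$ and $\eta_K = p_{t_K}$. Since the initial law is exact, $\eta_k = p_{t_k}$ for every $k$. Independence of $Q_1^{M,L},\ldots,Q_K^{M,L}$, including their MCMC randomness, is exactly the hypothesis that \Cref{prop:telescoping} needs. Conditioning on $\widehat P_1,\ldots,\widehat P_{k-1}$ freezes the backward test function $\widehat\psi_{k-1}$, and this function satisfies $\norm{\widehat\psi_{k-1}}_\infty \leq \norm{\psi}_\infty$. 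What remains is to supply deterministic per-step functions $b_k(x,\norm{f}_\infty) = \norm{f}_\infty B_k(x)$ and $v_k(x,\norm{f}_\infty)^2 = \norm{f}_\infty^2 V_k(x)$, valid for every bounded $f$ and every $x$ in the support of $p_{t_k}$. Plugging these into \eqref{eq:bias_propagation}--\eqref{eq:mse_propagation} gives the two displayed bounds.

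\emph{Bias.} At each step I will invoke \Cref{cor:maexl_rejuvenated_snis_error} at $(t_k,x)$ with $s = s_k \geq \tau > 0$, so that $\gamma_k$ is finite. I take $u = u_M = \sqrt{3\log M}$, so that $e^{-u^2} = M^{-3}$ and the tail terms become $2\sqrt{\rho_k}/M^3 + 4/M^2$. By \eqref{eq:conditional_identity}, that corollary's bias bound applies to the sampled estimator $p^{M,L}_{s|t}$, which is the kernel $Q_k^{M,L}$. Its leading term is $\frac{12\rho_k}{M}\mathfrak{L}_{t_k,s_k}C_k r_k^L V_{t_k,u_M}$. Two substitutions then give $B_k$: first, $\mathfrak{L}_{t_k,s_k} \leq \mathfrak{L}_k$ by \eqref{eq:frakL_bound}; second, $V_{t_k,u_M} \leq \mathfrak{V}_k(x)$ by the final display of that corollary, with $\bar m_t(x) = x/\alpha_{t_k}$.

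\emph{MSE.} This is the step I expect to be the main obstacle. The MAExL MSE bound in \Cref{cor:maexl_rejuvenated_snis_error} concerns the MCMC-integrated estimator $\bar p^{M,L}$, but the chain actually uses the sampled kernel, whose extra conditional variance \eqref{eq:conditional_variance} does not vanish as $L$ grows. I will therefore follow \Cref{rmk:sampled_variance} and lift to the extended space, with
\begin{align*}
\bar\nu(\rmd x_0,\rmd y) &= q_{0|t_k}(\rmd x_0 \mid x)\,R^L_{t_k,x}(x_0,\rmd y), \\
\bar\mu(\rmd x_0,\rmd y) &= p_{0|t_k}(\rmd x_0 \mid x)\,R^L_{t_k,x}(x_0,\rmd y).
\end{align*}
The density ratio $\rmd\bar\mu/\rmd\bar\nu$ depends only on $x_0$, so $1+\chi^2(\bar\mu\|\bar\nu) = \rho_k(x)$ and the weights coincide with $w^{1:M}_{0|t_k}$. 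By invariance, $\bar\mu(\bar f) = p_{s_k|t_k}(f \mid x)$ for $\bar f(x_0,y) = G_{s_k|t_k,x}f(y)$. I then apply \Cref{lemma:agapiou_improved} with $B = \R^d \times \mathcal{B}(\tilde m_{k,L}(x),\tilde\kappa_{k,L}(x)(\sqrt d+u_M))$. The oscillation $[\bar f]_B$ only depends on $y$, so \eqref{eq:kernel_truncated} bounds it by $2\norm{f}_\infty\tilde\Theta_{k,L}(x)$.

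The tail terms need care. The assumed sub-Gaussian tails of the $y$-pushforwards of $q_{0|t_k}$ and of the normalized square-weight measure give $\bar\nu(B^\mathsf{c}) \leq M^{-3}$. They also give $\eta_{B^\mathsf{c}} \leq \rho_k M^{-3}$, because $\int_{B^\mathsf{c}} w^2\,\rmd\bar\nu$ equals $\rho_k$ times the mass that the pushforward of the square-weight measure assigns to $B^\mathsf{c}$. Reproducing the arithmetic of \Cref{thm:per_step_error} then yields exactly $V_k$.

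Having both per-step bounds, I plug them into \Cref{prop:telescoping}. The closing remark follows by reading off the bounds: $B_k$ carries the factor $r_k^L$ and so decays geometrically in $L$, whereas $V_k$ has no contraction factor and is of the same form as in \Cref{cor:chained_no_mcmc}.
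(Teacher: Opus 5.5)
Your proposal is correct and follows the paper's proof essentially step for step. You use \Cref{prop:telescoping} with $\widehat P_k = Q_k^{M,L}$, take the bias from \Cref{cor:maexl_rejuvenated_snis_error} at $u=u_M$ together with \eqref{eq:frakL_bound} and $V_{t_k,u_M}\le\mathfrak{V}_k(x)$, and take the MSE of the sampled kernel from the extended-space SNIS argument of \Cref{rmk:sampled_variance}. The only difference is that you spell out the lifted-space details the paper leaves inside that remark: the density ratio depends only on $x_0$, the oscillation is bounded by $2\norm{f}_\infty\tilde\Theta_{k,L}(x)$, and $\eta_{B^\mathsf{c}}\le\rho_k(x)M^{-3}$ via the pushforward of the square-weight measure.
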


\begin{proof}
    Each $Q_k^{M,L}$ is a random Markov kernel in $x_{t_k}$: the bridge remains integrated in
    \begin{align*}
        Q_k^{M,L}(x,\rmd x') = \sum_{m=1}^M w^m_{0|t_k}\,p_{s_k|t_k,0}(\rmd x' \mid x, y^m_{0|t_k}),
    \end{align*}
    the MCMC draws $y^m_{0|t_k}$ being part of its internal randomness. \Cref{prop:telescoping} therefore applies with $\widehat P_k = Q_k^{M,L}$, the assumed independence across $k$ covering both sources of randomness.

    It remains to supply $b_k$ and $v_k$. Fix $k$ and a bounded measurable $f$, and apply the per-step results at $(t_k,x)$ with $s = s_k$, $\psi = f$ and $u = u_M$, which is legitimate since $s_k \geq \tau > 0$. For the bias, \eqref{eq:maexl_rejuvenated_bias} together with \eqref{eq:frakL_bound} and $V_{t_k,u_M} \leq \mathfrak{V}_k(x)$ gives
    \begin{align*}
        \abs{\mathbb{E}\left[Q_k^{M,L}f(x) - P_kf(x)\right]} \leq \norm{f}_\infty B_k(x),
    \end{align*}
    this being a bound on the sampled estimator by \eqref{eq:conditional_identity}. For the MSE, \Cref{rmk:sampled_variance} gives
    \begin{align*}
        \mathbb{E}\left[\left(Q_k^{M,L}f(x) - P_kf(x)\right)^2\right] \leq \norm{f}_\infty^2 V_k(x),
    \end{align*}
    with $\tilde\kappa_{k,L}(x)$ the sub-Gaussian rate of the $L$-step pushforwards assumed above. Both bounds are deterministic and of the form required by \Cref{prop:telescoping}, and since $\norm{\widehat\psi_{k-1}}_\infty \leq \norm{\psi}_\infty$ for every $k$, \eqref{eq:bias_propagation}--\eqref{eq:mse_propagation} yield the claim.
\end{proof}

\begin{remark}[Early stopping and terminal initialization]\label{rmk:endpoints}
    The grid terminates at $t_0 = \tau > 0$ ($\tau = 10^{-4}$ in the experiments), so by \eqref{eq:gamma_snr} every step has $\gamma_k \leq \sqrt{A_{s_k}} \leq \sqrt{A_\tau} = \alpha_\tau/\sigma_\tau < \infty$ and the constants above are finite, the final-step constant scaling as $\sigma_\tau^{-1}$. The truncated form \eqref{eq:kernel_truncated} keeps the per-step bound $O(\rho_k(x)/M)$ uniformly in $\tau$. The target controlled by these bounds is thus $p_\tau$, not $p_0$. Likewise, \Cref{cor:chained_no_mcmc,cor:chained_mcmc} assume $q_{t_K} = p_{t_K}$, whereas the sampler starts from $\nu_T = \mathcal{N}(0,\sigma_T^2\Idd)$: decomposing along $\nu_T P_K\cdots P_1\psi$ adds $2\norm{\psi}_\infty\TV(\nu_T,p_{t_K})$ to the bias bound and to the square root of the MSE bound, with $\eta_k$ then the marginals of the exact backward chain started at $\nu_T$. Under the VE schedule $\alpha_T = 1$, so $p_{t_K} = p_0 * \mathcal{N}(0,\sigma_T^2\Idd)$ and joint convexity of the Kullback--Leibler divergence together with Pinsker's inequality give
    \begin{align*}
        \TV(\nu_T,p_{t_K}) \leq \frac{\sqrt{\mathbb{E}_{p_0}\norm{X_0}^2}}{2\sigma_T}.
    \end{align*}
    Neither term depends on $M$ or $L$, so consistency in $M$ holds for the $(\tau,T)$-truncated target only. Finally, bounding the bias uniformly over $\norm{\psi}_\infty \leq 1$ in \eqref{eq:chained_bias} controls $\TV$ between the law of the sampler output and $p_\tau$. The MSE bounds concern instead the across-run variability of expectation estimates.
\end{remark}

\subsection{Multiple Importance Sampling}\label{app:mis}

This appendix details the \ourmethod{}-MIS sampler introduced in \Cref{subsec:method:mis}, which builds on the \emph{multiple importance sampling} (MIS) literature \citep{veach1995optimally, owen2000safe, he2014optimalmixtureweightsmultiple, bugallo2017adaptive, elvira2019generalized}. Recall that \ourmethod{}-IS shares candidates only at initialization; the KL identity \eqref{eq:posterior_kl_gaussian} is what lets us extend this sharing further along the chain, since it shows that posteriors conditioned on nearby noisy states are themselves close. We specify the mixture proposal, component selection, and candidate sharing; derive the self-normalized importance-sampling correction for the reverse kernel; and give the full algorithm together with its computational cost.

\paragraph{Mixture proposal.} At noise level $t$, let $\{x_t^i\}_{i=1}^N$ denote the current trajectories. For each trajectory $i$, we build a mixture proposal by combining $C$ selected learned posteriors,
\begin{equation}\label{eq:mixture_proposal}
    m^{\theta}_{0|t}(x_0\mid x_t^i)
    = \sum_{c \in\mathcal{C}(i)}\omega_{i,c}\,q^\theta_{0|t}\left(x_0\mid x_t^c\right),
    \qquad \sum_{c\in\mathcal{C}(i)}\omega_{i,c}=1,
\end{equation}
while keeping the target unchanged at $p_{0|t}(\cdot| x_t^i)$. The component set $\mathcal{C}(i)$ is a multiset of trajectory indices that always contains $i$ itself, so trajectory $i$'s own posterior is always a summand; the other $C-1$ slots are drawn from elsewhere in the ensemble. \Cref{fig:mis_cartoon} works through a small example of this construction and of the candidate sharing described below.

\begin{figure}[t]
\centering
\begin{tikzpicture}[
  >={Stealth[length=2mm,width=1.6mm]},
  font=\small,
  state/.style={circle, draw, minimum size=8mm, inner sep=2pt},
  batch/.style={rounded corners=2pt, draw, minimum width=2.6cm, minimum height=1.1cm},
  mix/.style={rounded corners=2pt, draw, align=center, inner sep=6pt},
  lab/.style={font=\scriptsize, align=center},
]
\foreach \j/\pos/\col in {1/0/parA,2/3.5/parB,3/7/parC}{
  \node[state,draw=\col,fill=\col!12] (s\j) at (\pos,0) {$x_t^{\j}$};
  \node[lab,above=2pt of s\j] {source $j=\j$};
  \node[batch,draw=\col] (b\j) at (\pos,-1.6) {};
  \draw[->] (s\j) -- node[lab,right] {draw $M$} (b\j);
  \foreach \m/\dx in {1/-0.8,2/0,3/0.8}{
    \node[circle,draw=\col,fill=\col!30,inner sep=2pt] at (\pos+\dx,-1.4) {};
    \node[lab] at (\pos+\dx,-1.83) {$x_0^{\j,\m}$};
  }
}
\node[mix,draw=parA,fill=parA!6] (m1) at (1.15,-3.65)
  {receiving trajectory $i=1$\\$m^\theta_{0|t}(\cdot\mid x_t^1)$};
\node[mix,draw=parB,fill=parB!6] (m2) at (6.0,-3.65)
  {receiving trajectory $i=2$\\$m^\theta_{0|t}(\cdot\mid x_t^2)$};
\draw[->,parA] (b1.south) -- node[lab,left] {$c=1$\\(own)} (m1.north west);
\draw[->,parB] (b2.south west) -- node[lab,left] {$c=2$} (m1.north east);
\draw[->,parB] (b2.south east) -- node[lab,right] {$c=2$\\(own)} (m2.north west);
\draw[->,parC] (b3.south) -- node[lab,right] {$c=3$} (m2.north east);
\node[state,draw=parA,fill=parA!12] (o1) at (1.15,-5.4) {$x_s^1$};
\node[state,draw=parB,fill=parB!12] (o2) at (6.0,-5.4) {$x_s^2$};
\draw[->] (m1) -- node[lab,right] {weights $\Omega_{1,c,m}$\\select $(c,m)$; Gaussian transition} (o1);
\draw[->] (m2) -- node[lab,right] {weights $\Omega_{2,c,m}$\\select $(c,m)$; Gaussian transition} (o2);
\end{tikzpicture}
\caption{\textbf{Candidate sharing and indexing in \ourmethod{}-MIS.} The example uses $N=3$ sources, $M=3$ candidates per source, and $C=2$ components per mixture; two receiving trajectories are shown. Candidate $x_0^{j,m}$ is indexed by its source $j$ and position $m$ within that source's batch. For receiving trajectory $i$, a component $c\in\mathcal{C}(i)$ is itself a source index: $c=i$ always appears (the trajectory's ``own'' component), and the remaining slots are drawn by posterior proximity. The middle batch, drawn once from source $j=2$, is reused by both trajectories: it serves as component $c=2$ of trajectory $1$'s mixture and as the own component $c=2$ of trajectory $2$'s mixture, receiving weight $\Omega_{1,2,m}$ and $\Omega_{2,2,m}$ respectively — the same $M$ flow draws inform two different reverse steps. Each trajectory samples $(c,m)$ from its own categorical weights and applies the endpoint-conditioned Gaussian transition. When a source is selected more than once within the same mixture, the repeat instead draws a fresh, independent batch, indexed by the occurrence counter $r(i,c)$ introduced below.}
\label{fig:mis_cartoon}
\end{figure}

\paragraph{Choosing components by posterior proximity.} By \eqref{eq:posterior_kl_gaussian}, for a Gaussian target $\mathcal{N}(\mu,\gamma^2\Idd)$ the denoising posteriors satisfy
\begin{align*}
\operatorname{KL}\left(p_{0|t}(\cdot\mid x)\,\|\,p_{0|t}(\cdot\mid y)\right) = \beta_t\norm{x-y}^2, \qquad \beta_t = \frac{\alpha_t^2\gamma^2}{2\sigma_t^2(\alpha_t^2\gamma^2+\sigma_t^2)},
\end{align*}
so state-space distance provides a schedule-dependent measure of posterior similarity in this Gaussian case; for molecular targets we use it as a heuristic with $\gamma^2=1$ in normalized coordinates. This is also what makes sharing most valuable at high noise levels: as $t$ grows, $\beta_t\to 0$ and the bound above shows the posteriors across \emph{all} trajectories converge toward the common Boltzmann target, so nearly any component is an informative proposal for any trajectory. Given $\beta_t$, define for every pair $(i,j)$ the proximity score
\begin{equation}\label{eq:mis_scores}
    \lambda_j(i) = \exp\left\{-\beta_t\norm{x_t^i - x_t^j}^2\right\}.
\end{equation}
The component set $\mathcal{C}(i)$ always retains the slot $i$ itself, and the remaining $C-1$ slots are drawn independently, with replacement, from the multinomial over $\{1,\ldots,N\}$ with weights $\lambda_1(i),\ldots,\lambda_N(i)$. The mixture weights are then set proportionally to these same scores, restricted to the selected slots,
\begin{equation}\label{eq:mis_mixture_weights}
    \omega_{i,c} = \frac{\lambda_c(i)}{\sum_{l\in\mathcal{C}(i)}\lambda_l(i)},
\end{equation}
so that conditioning states closer to $x_t^i$ contribute more strongly to its own proposal.

\paragraph{Rao-Blackwellized mixture sampling.} We now explain how expectations under $m^{\theta}_{0|t}(\cdot | x_t^i)$ are computed. Just as in \ourmethod{}-IS, we draw one \emph{base batch} of $M$ i.i.d.\ candidates $\xi_{j,1,1},\ldots,\xi_{j,1,M}\sim q^\theta_{0|t}(\cdot | x_t^j)$ for every trajectory $j=1,\ldots,N$ (this is the batches drawn under each source $j$ in \Cref{fig:mis_cartoon}). To assemble trajectory $i$'s $CM$ candidates, each slot $c\in\mathcal{C}(i)$ draws on the batch of its conditioning trajectory: the first time a trajectory is used as a component it reuses the shared base batch, as source $j=2$ does for both receiving trajectories in the figure. Because $\mathcal{C}(i)$ is a multiset, however, the same trajectory can occupy more than one slot of the \emph{same} mixture; reusing one batch twice within a single mixture would break the independence the estimator below relies on, so each repeated occurrence instead draws a fresh, independent batch. Writing $r(i,c)$ for the occurrence number of slot $c$ within $\mathcal{C}(i)$ (its first, second, \ldots\ appearance), we denote the resulting candidates
\begin{align*}
x_0^{i,c,m} = \xi_{c,\,r(i,c),\,m}, \qquad \xi_{j,r,m}\sim q^\theta_{0|t}(\cdot\mid x_t^j), \quad m=1,\ldots,M,
\end{align*}
where batches are shared across every trajectory whose component set calls for the same pair $(j,r)$, and drawn independently otherwise. This lets us reuse our expensive flow samples as much as possible while still having independent candidates within each trajectory's own mixture. Expectations under $m^\theta_{0|t}(\cdot | x_t^i)$ can then be computed in a Rao-Blackwellized fashion, by conditioning on the mixture component rather than sampling it:
\begin{equation}\label{eq:mis_rb_expectation}
    \mathbb{E}_{m^{\theta}_{0|t}(\cdot\mid x_t^i)}\left[\phi(X^i_0)\right] = \sum_{c \in \mathcal{C}(i)}  \omega_{i,c}\, \mathbb{E}_{q^{\theta}_{0|t}(\cdot\mid x_t^c)}\left[\phi(X_0^{i,c})\right] \approx \frac{1}{M}\sum_{c \in \mathcal{C}(i)} \sum_{m=1}^M \omega_{i,c}\, \phi\!\left(x_0^{i,c,m}\right).
\end{equation}

\paragraph{Self-normalized importance sampling for the reverse kernel.} We now return to the denoising kernel approximation. Writing the exact transition as an expectation under the mixture proposal,
\begin{align*}
    p_{s|t}(x^i_s\mid x^i_t) &= \int p_{s|t,0}(x^i_s\mid x^i_t,x^i_0)\, p_{0|t}(x^i_0\mid x^i_t)\,\rmd x^i_0 \\
    &= \int p_{s|t,0}(x^i_s\mid x^i_t,x^i_0)\, \frac{p_{0|t}(x^i_0\mid x^i_t)}{m^{\theta}_{0|t}(x^i_0\mid x^i_t)}\, m^{\theta}_{0|t}(x^i_0\mid x^i_t)\,\rmd x^i_0 \\
    &= \mathbb{E}_{m^\theta_{0|t}(\cdot\mid x_t^i)}\!\left[\frac{p_{0|t}(X_0^i\mid x_t^i)}{m^\theta_{0|t}(X_0^i\mid x_t^i)}\, p_{s|t,0}(x^i_s\mid x^i_t, X_0^i)\right] \\
    &\approx \sum_{c \in \mathcal{C}(i)} \sum_{m=1}^M \omega_{i,c}\, \Lambda_{i,c,m}\, p_{s|t,0}(x^i_s\mid x^i_t,x_0^{i,c,m}),
\end{align*}
where the last line applies the Rao-Blackwellized estimator \eqref{eq:mis_rb_expectation}, replacing the intractable ratio $p_{0|t}/m^\theta_{0|t}$ by a self-normalized importance weight $\Lambda_{i,c,m}$. Concretely, the unnormalized weights are
\begin{align*}
    \tilde{\Lambda}_{i,c,m} = \frac{p_{0|t}(x_0^{i,c,m}\mid x^i_t)}{m^{\theta}_{0|t}(x_0^{i,c,m}\mid x^i_t)} \propto \frac{p_{t|0}(x^i_t \mid x_0^{i,c,m})\, \exp\{-\mathcal{E}(x_0^{i,c,m})\}}{\sum_{c' \in\mathcal{C}(i)}\omega_{i,c'}\,q^\theta_{0|t}\left(x_0^{i,c,m}\mid x_t^{c'}\right)}.
\end{align*}
Because these weights only enter the final mixture in combination with $\omega_{i,c}$, we normalize the two together in one step,
\begin{equation}\label{eq:mis_omega}
    \Omega_{i,c,m} = \frac{\omega_{i,c}\,\tilde{\Lambda}_{i,c,m}}{\sum_{c' \in \mathcal{C}(i)} \sum_{m'=1}^M \omega_{i,c'}\,\tilde{\Lambda}_{i,c',m'}}.
\end{equation}
This gives a new mixture approximation of $p_{s|t}$,
\begin{equation}\label{eq:mis_kernel}
    \hat p^{\mathcal{C},M,\theta}_{s|t}(x^i_s\mid x^i_t) = \sum_{c \in \mathcal{C}(i)} \sum_{m=1}^M \Omega_{i,c,m}\; p_{s|t,0}(x^i_s\mid x^i_t,x_0^{i,c,m}),
\end{equation}
composed at the cost of only $M$ flow draws per component, yet at least as expressive than the $M$-candidate proposal of \ourmethod{}-IS.

\paragraph{The computational catch and the autoregressive blessing.} This added expressiveness is not free: computing $\Omega_{i,c,m}$ requires the cross-likelihoods
\begin{align*}
   q^\theta_{0|t}\left(x_0^{i,c,m}\mid x_t^{c'}\right), \qquad \text{for all } c, c' \in \mathcal{C}(i).
\end{align*}
 The $C$ "diagonal" terms $q^\theta_{0|t}(x_0^{i,\bar{c},m} | x_t^{\bar{c}})$ are returned for free with each flow sample and, like the samples themselves, are reused whenever a slot shares its source (\Cref{fig:mis_cartoon}). Only the remaining $C(C-1)$ cross terms per trajectory need a new evaluation, giving at most $NMC(C-1)$ additional evaluations per level (zero when $C=1$, where the mixture collapses to \ourmethod{}-IS) and less in practice when component sets overlap across trajectories. This overhead stays comparatively harmless for an autoregressive flow, where sampling is sequential and linear in token count but density evaluation is a single parallel pass: since sampling cost dominates, the batch-sharing scheme of \Cref{fig:mis_cartoon} is what drives wall-clock cost, while the extra cross-likelihoods remain comparatively negligible.

The full sampler is summarized in \Cref{alg:sequential_mis}.

\begin{algorithm}[t]
    \SetAlgoNlRelativeSize{-1}
    \setcounter{AlgoLine}{0}
    \renewcommand{\thealgocfproc}{mis}
    \SetAlgoLined
    \KwIn{Grid $0 = t_0 < \cdots < t_K = T$; posteriors $q^\theta_{0|t_k}$; target $p(x_0)\propto\exp\{-\mathcal{E}(x_0)\}$; trajectories $N$; batch size $M$; component slots $C$.}
    \KwOut{Ensemble $\{x_{t_0}^i\}_{i=1}^N$ approximately distributed as $p_0$.}
    Sample shared $x_{t_K}\sim\mathcal{N}(0,\sigma_T^2I)$ and draw $NM$ candidates from $q^\theta_{0|t_K}(\cdot\mid x_{t_K})$\;

    Form joint-pool SNIS weights as in \eqref{eq:denoising_snis_weights} with $NM$ candidates; select $N$ candidates with replacement according to these weights and propagate them through their DDIM kernels to obtain $x_{t_{K-1}}^i$\;

    \For{$k = K-1, \ldots, 1$}{
        For every trajectory $i$, compute the proximity scores $\lambda_j(i)=\exp\{-\beta_{t_k}\|x_{t_k}^i-x_{t_k}^j\|^2\}$, draw the component multiset $\mathcal{C}(i)$ by proximity-weighted sampling with slot $i$ fixed, and set $\omega_{i,\cdot}$ via \eqref{eq:mis_mixture_weights}\;

        Let $r(i,c)$ count the occurrences of $c$ among the slots of $\mathcal{C}(i)$ drawn so far. For each distinct pair $(j,r)$ arising across all trajectories, draw one independent batch of $M$ candidates $\xi_{j,r,m}\sim q^\theta_{0|t_k}(\cdot\mid x_{t_k}^j)$\;

        Assemble trajectory $i$'s $CM$ candidates as $x_0^{i,c,m}=\xi_{c,\,r(i,c),\,m}$, evaluate the required cross-likelihoods $q^\theta_{0|t_k}(x_0^{i,c,m}\mid x_{t_k}^{c'})$ for $c,c'\in\mathcal{C}(i)$, and form the selection weights
        $$
            \Omega_{i,c,m} \propto \omega_{i,c}\, \tilde{\Lambda}_{i,c,m}, \qquad \tilde{\Lambda}_{i,c,m} = \frac{p_{t_k|0}(x_{t_k}^i \mid x_0^{i,c,m})\,\exp\{-\mathcal{E}(x_0^{i,c,m})\}}{\sum_{c'\in\mathcal{C}(i)}\omega_{i,c'}\,q^\theta_{0|t_k}(x_0^{i,c,m} \mid x_{t_k}^{c'})}
        $$

        For every trajectory $i$, sample $(c^\star, m^\star) \sim \mathrm{Categorical}(\Omega_i)$ and set
        $$
            x_{t_{k-1}}^i \sim p_{t_{k-1}|t_k, 0}(\cdot \mid x_{t_k}^i,\, x_0^{i,c^\star,m^\star})
        $$
    }
    \Return $\{x_{t_0}^i\}_{i=1}^N$
    \caption{\ourmethod{}-MIS sampling.}
    \label{alg:sequential_mis}
\end{algorithm}

\paragraph{Amortized cost.} Let $U$ denote the number of distinct pairs $(j,r)$ drawn across the ensemble at a given level. Candidate generation then costs $UM$ flow samples, with $N\leq U\leq NC$, against $NCM$ without sharing. 

The lower bound $U=N$ holds when no source repeats \emph{within} any single trajectory's $\mathcal{C}(i)$: every external pick then falls at its first occurrence, already drawn by its source for itself, so cross-trajectory sharing is automatic. Such repeats become rarer as the proximity weights $\lambda_j(i)$ spread across more indices, the high-noise regime of \eqref{eq:posterior_kl_gaussian} ($\beta_t\to0$). The upper bound $U=NC$ is reached e.g.\ if every trajectory fills all $C$ slots with itself: no batch is ever shared, and the mixture collapses to a single density, paying $C\times$ the sampling cost of \ourmethod{}-IS for none of its added expressiveness.

\paragraph{Empirical ablation}

\begin{figure}
    \centering
    \includegraphics[width=\linewidth]{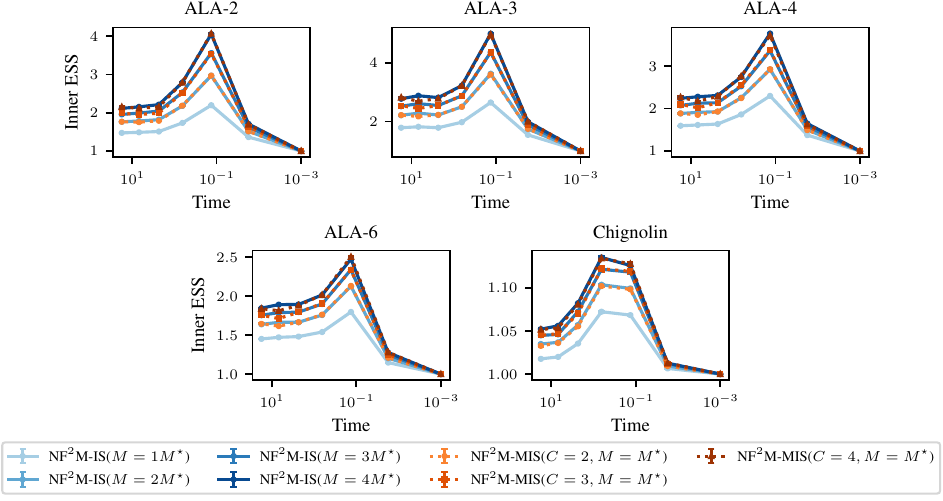}
    \caption{Average effective sample size of $q^{\theta}_{0|t}(\cdot | x_t)$ against $p_{0|t}(\cdot | x_t)$ as a function of $t$, for \ourmethod{}-IS and \ourmethod{}-MIS across different numbers of importance candidates or mixture components. $M^{\star}$ denotes the budget reported in \Cref{tab:hparam_nfm_is_budget}.}
    \label{fig:inner_ess_seqis_seqmis}
\end{figure}

\begin{figure}
    \centering
    \includegraphics[width=\linewidth]{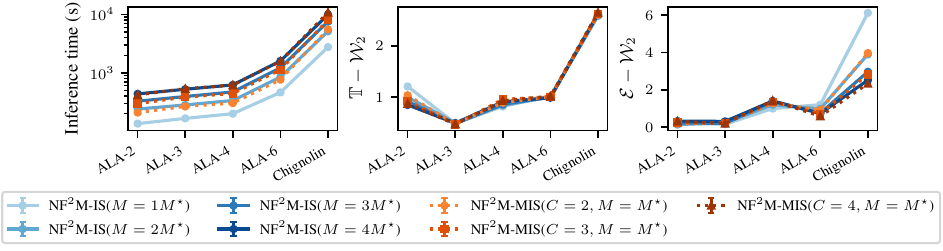}
    \caption{Inference time and distributional distances for all systems, comparing \ourmethod{}-IS and \ourmethod{}-MIS across different numbers of importance candidates or mixture components. $M^{\star}$ denotes the budget reported in \Cref{tab:hparam_nfm_is_budget}.}
    \label{fig:metrics_ablation_seqis_seq_mis}
\end{figure}

We compare the performance of \ourmethod{}-MIS against \ourmethod{}-IS along two axes: candidate-pool quality and downstream performance. \Cref{fig:inner_ess_seqis_seqmis} reports the average effective sample size at each denoising transition, and shows that \ourmethod{}-MIS with $C$ components and $M$ candidates per component matches the pool quality of \ourmethod{}-IS run with $CM$ independent candidates, confirming that sharing candidates across trajectories does not degrade the proposal. However, both methods exhibit low inner ESS, particularly for chignolin and at the lowest noise levels, indicating substantial importance-weight concentration. This is surprising, as noise levels near $t=0$ should make the target easiest to approximate. We hypothesize instead that the near-deterministic posterior forces the flow's affine-coupling scale network into a high-variance regime that is difficult to learn. Preconditioning the flow with techniques akin to \citet{karras2022elucidating} often degraded performance rather than fixing this. Fortunately, this is precisely the regime where the MAExL steps (\Cref{app:maexl}) are most accurate, compensating for the weak proposal. 

\Cref{fig:metrics_ablation_seqis_seq_mis} shows that this parity carries over to downstream distributional distances, with the two methods performing on par at matched budgets. The left panel of \Cref{fig:metrics_ablation_seqis_seq_mis} shows that inference time is likewise comparable between the two methods. This can be explained at our current scale, since for molecules this small the autoregressive flow's sampling and log-likelihood costs are already comparable on a modern GPU, leaving little headroom for the sharing scheme in \ourmethod{}-MIS to reduce wall-clock time. We expect this gap to widen in \ourmethod{}-MIS's favor as system size grows and sampling cost increasingly dominates evaluation cost. Finally, \Cref{fig:metrics_ablation_seqis_seq_mis} confirms the expected trend that both methods' metrics improve monotonically as the candidate budget increases.

\subsection{Metropolis-Adjusted Exponential Langevin} \label{app:maexl}

A recurring subroutine in our pipeline is sampling from a posterior whose likelihood is an isotropic Gaussian in the latent variable. This subsection describes the kernel we use for that subroutine: a Metropolis-adjusted exponential integrator of the corresponding Langevin diffusion, designed to remain stable across the full range of likelihood scales encountered during inference.

\paragraph{Target distribution.}
We need to sample from the conditional distribution of the clean sample $x_0$ given a noisy observation $x_t$ produced by the forward process at time $t$. With schedule $(\alpha_t, \sigma_t)$, this is
\begin{equation}
  p_{0 | t}(x_0 | x_t) \propto \mathcal{N}\left(x_t; \alpha_t x_0, \sigma_t^2 I\right) p(x_0),
  \label{eq:maexl-target}
\end{equation}
where $p$ is the data distribution, accessed through its log-density and score $\nabla \log p$. Throughout this subsection $(\alpha_t, \sigma_t)$ are fixed scalars.

\paragraph{Associated Langevin SDE.}
The score of \eqref{eq:maexl-target} decomposes as
\begin{equation}
  \nabla_{x_0} \log p_{0|t}(x_0 | x_t) = \nabla \log p(x_0) - A x_0 + b,
  \qquad A := \frac{\alpha_t^2}{\sigma_t^2}, \quad b := \frac{\alpha_t}{\sigma_t^2} x_t,
\end{equation}
and the overdamped Langevin diffusion targeting \eqref{eq:maexl-target} reads
\begin{equation}
  dX_s = \left(-A X_s + b + \nabla \log p(X_s)\right) ds + \sqrt{2} dW_s.
  \label{eq:maexl-sde}
\end{equation}
The drift has an \emph{affine} part ($-A X_s + b$) coming from the Gaussian likelihood and a nonlinear part coming from the prior score.

\paragraph{Exponential integration.}
We integrate \eqref{eq:maexl-sde} by treating the affine part exactly and freezing the prior score over each step. Variation of constants on $Y_s = \exp(A_s) X_s$ gives, on a step of size $h$,
\begin{equation}
  X_{s+h} = e^{-Ah} X_s + \int_0^h e^{-A(h-u)}\left(b + \nabla \log p(X_{s+u})\right) du + \sqrt{2} \int_0^h e^{-A(h-u)} dW_u.
\end{equation}
Freezing the score at $X_s$ produces the (exponential Euler) one-step kernel
\begin{equation}
  X_{s+h} \approx e^{-Ah} X_s + \frac{1 - e^{-Ah}}{A}\left(b + \nabla \log p(X_s)\right) + \sqrt{\tfrac{1 - e^{-2Ah}}{A}}\xi, \qquad \xi \sim \mathcal{N}(0, I).
  \label{eq:maexl-expeuler}
\end{equation}
Compared to plain Euler--Maruyama, this scheme (i) preserves the Ornstein--Uhlenbeck invariant law of the linear sub-problem at \emph{every} step size, so its noise variance is correct rather than $\mathcal{O}(h)$-biased, and (ii) is unconditionally stable in the affine part: there is no $hA \lesssim 1$ restriction. This is critical when the likelihood is informative (small $\sigma_t$), where Euler--Maruyama is forced to take vanishingly small steps.

\paragraph{Metropolis correction.}
Treating \eqref{eq:maexl-expeuler} as a Markov kernel proposal and accepting with the standard Metropolis--Hastings probability gives an exact sampler of \eqref{eq:maexl-target}. Because the proposal covariance is state-independent (scalar isotropic), the log-determinant terms cancel and the log Metropolis--Hastings ratio reduces to
\begin{equation}
  \log \alpha(x, x') = \log p_{0|t}(x' | x_t) - \log p_{0|t}(x | x_t) + \frac{1}{4\tau}\left(\norm{x' - \mu(x)}^2 - \norm{x - \mu(x')}^2\right),
\end{equation}
where $\mu(x) = e^{-Ah} x + A^{-1} (1 - e^{-Ah})(b + \nabla \log p(x))$ and $\tau$ is defined below. The proposed point is accepted with probability $\min(1, e^{\log \alpha})$. We refer to this kernel as \emph{Metropolis-Adjusted Exponential Langevin} (MAExL).

\paragraph{Step-size reparametrization.}
Rather than tuning $h$ directly, we reparametrize via the noise-to-signal ratio $\tau := (1 - e^{-2Ah})/(2A) \in (0, \tau_{\max})$, with $\tau_{\max} := 1/(2A) = \sigma_t^2/(2\alpha_t^2)$, so that the proposal is expressed entirely through
\begin{equation}
  c(\tau) := \sqrt{1 - 2A\tau}, \qquad \varphi(\tau) := \frac{1 - c(\tau)}{A} = \frac{2\tau}{1 + c(\tau)},
  \label{eq:maexl-coeffs}
\end{equation}
giving $\mu(x) = c(\tau) x + \varphi(\tau)(b + \nabla \log p(x))$ and noise variance $2\tau$. The right-hand expression for $\varphi(\tau)$ is algebraically equivalent but numerically well-conditioned for small $\tau$, avoiding cancellation in $1 - \sqrt{1 - 2A\tau}$.  This reparametrization has three advantages. First, it maps the unbounded, saturating $h \in (0,\infty)$ onto the bounded interval $(0,\tau_{\max})$. Second, $2\tau$ is exactly the proposal variance, and as $\tau \to 0$ the proposal reduces to MALA with step size $\tau$. Third, as $\tau \to \tau_{\max}$, $c(\tau) \to 0$ and the proposal forgets the current state except through the prior score, jumping at the likelihood scale $\sigma_t/\alpha_t$. A single bounded scalar thus moves the kernel from local MALA moves to global, likelihood-scale moves.

$\tau_{\max}$ itself still shrinks by several orders of magnitude as $t \to 0$, so $\tau$ is not the quantity we carry across noise levels. We instead track the induced proposal noise standard deviation $\omega := \sqrt{2\tau} \in (0, \omega_{\max})$, with $\omega_{\max} := \sqrt{2\tau_{\max}} = \sigma_t/\alpha_t$: this is the intended proposal noise scale \emph{in clean $x_0$-coordinates}, and it is $\omega$ that we adapt per chain and report per system,  as a practical parameterization for transferring step sizes across noise levels; the next paragraph clarifies the limits of this interpretation. Because $\omega_{\max}$ depends on $t$, an adapted $\omega$ can legitimately exceed it. Rather than clip $\omega$ itself, we cap only the noise scale actually used by the proposal,
\begin{equation}
  \epsilon := \min\left(\omega, \, \rho_{\max}\,\omega_{\max}\right), \qquad \tau = \frac{\epsilon^2}{2}, \qquad \rho_{\max} = 0.99,
\end{equation}
substituted for $\tau$ in \eqref{eq:maexl-coeffs}, leaving $\omega$ itself unclamped above the cap so that the previously adapted value is reused unchanged once $\omega_{\max}$ grows again at a later (larger) $t$.

\paragraph{ Why $\omega$ transfers across noise levels.}
 The exponential integrator treats the Gaussian-likelihood drift exactly, so the only approximation is freezing the prior score. Concretely,  suppose the prior score is constant , $\nabla \log p \equiv g_0$ for some constant $g_0$ (so $\log p(x) = g_0 \cdot x + \mathrm{const}$). Then $\mu(x) = cx + \varphi(b+g_0)$, and the identities $1 - c^2 = A\epsilon^2$ and $\varphi(1+c) = \epsilon^2$ give
\begin{equation}
  \frac{1}{2\epsilon^2}\left(\norm{x' - \mu(x)}^2 - \norm{x - \mu(x')}^2\right) = \frac{A}{2}\left(\norm{x'}^2 - \norm{x}^2\right) - (b + g_0)\cdot(x' - x),
\end{equation}
which cancels exactly against $\log p_{0|t}(x') - \log p_{0|t}(x) = g_0\cdot(x'-x) - \frac{A}{2}(\norm{x'}^2-\norm{x}^2) + b\cdot(x'-x)$. Hence $\log \alpha(x, x') \equiv 0$ for \emph{every} $A > 0$, every $\tau \in (0, \tau_{\max})$, and every pair $(x, x')$:  whenever the prior score is constant, every proposal is accepted regardless of the likelihood precision.

 Rejection therefore stems from the variation of $\nabla \log p$ over a step, i.e.\ from the curvature of the prior at the scale $\omega$ of the move, rather than from $A_t$ directly. Acceptance is not exactly $t$-independent, since $A_t$ still enters through $c$, $\varphi$ and the visited states, but this makes $\omega$ the natural quantity to carry across noise levels.

\paragraph{Connection to score-matching trade-offs.}
The two ends of the $\tau$-range, global likelihood-scale moves and local MALA moves on the posterior score, mirror the tension between using the likelihood (Tweedie's identity, denoising-score-matching style) and the posterior score (target-score-matching style) as the primary signal; a growing line of work argues that neither extreme is universally best \cite{debortoli2024targetscorematching, kahouli2025controlvariatescorematching, young2026diffusionpathsamplerssequential, blessing2026bridgematchingsamplerscalable}. Our step-size adaptation lets the sampler choose its own balance, per chain and per noise level.

\paragraph{Per-chain step-size adaptation.}
As above, adaptation runs on $\log \omega$ rather than $\log \tau$: step sizes are adapted independently for each chain in a batch by an asymmetric geometric update, so as to favor the high-acceptance regime:
\begin{equation}
  \omega_{m+1} = \max\left(\omega_m \exp\left(\eta_m (\hat\alpha_m - \alpha^\star)\right), \, 10^{-30}\right),
\end{equation}
\begin{equation}
  \eta_m = \begin{cases}
    0 & \text{if } \epsilon_m = \rho_{\max}\omega_{\max} \text{ and } \hat\alpha_m > \alpha^\star \\
    \eta_{\uparrow} & \text{if } \hat\alpha_m > \alpha^\star \\
    \eta_{\downarrow} & \text{if } \hat\alpha_m \le \alpha^\star,
  \end{cases}
\end{equation}
with target acceptance $\alpha^\star = 0.574$ \citep{roberts1998optimal} and gains $\eta_{\uparrow} = 0.1, \eta_{\downarrow} = 0.5$. Here $\hat\alpha_m$ is the per-chain Metropolis acceptance probability $\min(1, e^{\log r})$ at iteration $m$, and $\epsilon_m = \min(\omega_m, \rho_{\max}\omega_{\max})$ is the noise scale actually used by the $m$-th proposal. The asymmetry ($\eta_{\downarrow} > \eta_{\uparrow}$) is deliberate: a stalled chain in the low-acceptance regime is far more harmful for short runs than a slightly conservative one, so we react aggressively to low acceptance and gently to high acceptance. Working in $\log \omega$ keeps $\omega$ positive automatically; rather than clip $\omega$ against the $t$-dependent cap $\rho_{\max}\omega_{\max}$, we freeze upward moves while that cap is already active, since acceptance is then insensitive to $\omega$ and the frozen value is picked back up once $\omega_{\max}$ grows again at a later (larger) $t$. A separate, diminishing-gain Robbins--Monro schedule on $\log \omega$ ($\eta_m = \eta_0/(m+t_0)^\kappa$ with $\eta_0 = 0.05$, $t_0 = 10$, $\kappa = 0.75$) is used only offline, to calibrate the fixed per-system presets of Table~\ref{tab:hparam_nfm_stepsizes}; the geometric schedule above is what runs in the loop.  The analysis of \Cref{app:sec:proofs} applies to this kernel with the step sizes frozen.

\paragraph{Calibrated step sizes.}
Table~\ref{tab:hparam_nfm_stepsizes} reports, per system, the in-loop MAExL step size $\omega$, together with the step size $\tau$ used by a separate post-hoc MALA pass that polishes a finished sample set against the clean prior $p(x_0)$ directly, with no likelihood term. Both step sizes were obtained from very long calibration runs (128 chains, a short geometric burn-in followed by several thousand steps of the Robbins--Monro schedule above, at target acceptance $\alpha^\star = 0.574$) and are then held fixed as the shared starting point across all subsequent runs on that system.

\begin{table}[t]
    \centering
    \caption{Per-system calibrated step sizes (Robbins--Monro, target acceptance 0.574).}
    \label{tab:hparam_nfm_stepsizes}
    \small
    \begin{tabular}{lccccc}
        \toprule
         & ALA-2 & ALA-3 & ALA-4 & ALA-6 & Chignolin \\
        \midrule
        MAExL $\omega$ (in-loop)      & $1.02{\times}10^{-2}$ & $7.48{\times}10^{-3}$ & $6.55{\times}10^{-3}$ & $4.68{\times}10^{-3}$ & $3.46{\times}10^{-3}$ \\
        MALA $\tau$ (post-hoc)        & $5.27{\times}10^{-5}$ & $2.75{\times}10^{-5}$ & $2.14{\times}10^{-5}$ & $1.10{\times}10^{-5}$ & $6.01{\times}10^{-6}$ \\
        \bottomrule
    \end{tabular}
\end{table}

\paragraph{Connection with EI-MALA \citep{Durmus2015quantitative}.}
The MAExL kernel is closely related to the exponential-integrator version of MALA analyzed in \cite{Durmus2015quantitative}. In that work, the authors study targets that are Gaussian reference measures perturbed by a nonlinear potential, and construct a Metropolis-adjusted proposal by integrating the Gaussian linear drift exactly while treating the nonlinear part explicitly. This is precisely the structure exploited here: the Gaussian likelihood contribution to $p_{0|t}(\cdot | x_t)$ induces the affine drift $-Ax+b$, which is integrated exactly, while the prior score $\nabla\log p$ is frozen over one step and corrected by a Metropolis--Hastings accept-reject step. In the notation of \cite{Durmus2015quantitative}, our scalar precision $A I$ plays the role of the Gaussian precision matrix $Q$, and the reparametrization $\tau=(1-e^{-2Ah})/(2A)$ yields the same proposal covariance as their exponential-integrator MALA. Their Wasserstein convergence analysis is therefore directly aligned with the role of MAExL in our method: it provides quantitative control of the contraction and mixing of the $p_{0|t}$-invariant Markov kernel, which can be combined with the bridge sensitivity factor $\gamma_{t,s}$ to sharpen the propagated SNIS error bounds.

\begin{corollary}[Quantitative Wasserstein control of MAExL]\label{cor:maexl_durmus_wasserstein}
    Fix $t\in(0,T]$ and $x_t\in\mathbb{R}^d$, and write
    \begin{align*}
        \pi_t(\rmd x_0) = p_{0|t}(x_0 | x_t)\rmd x_0.
    \end{align*}
    Let $R_t$ denote the MAExL Markov kernel targeting $\pi_t$.
    Set
    \begin{align*}
        A_t= \frac{\alpha_t^2}{\sigma_t^2}, \qquad  \bar m_t(x_t)=A_t^{-1}b_t=\frac{x_t}{\alpha_t}, \qquad Q_t=A_t I_d.
    \end{align*}
    In the centered variable $z = x_0 - \bar m_t(x_t)$, assume that the target can be written as
    \begin{align*}
        \pi_t(\rmd z) \propto \exp\left(-\frac{1}{2} z^\top Q_t z -\Gamma_t(z)\right)\rmd z,
    \end{align*}
    where $\Gamma_t(z)=-\log p(z+\bar m_t(x_t))$, or more generally where the prior contribution is split as in the EI-MALA framework of \cite{Durmus2015quantitative}. Assume that the corresponding MAExL/EI-MALA proposal satisfies assumptions M1--M2 of \cite{Durmus2015quantitative} for the step size used by $R_t$. Then there exist constants
    \begin{align*}
        \eta_t>0,\qquad \varepsilon_t>0,\qquad C_t<\infty,
        \qquad r_t\in(0,1),
    \end{align*}
    and a bounded distance $d_t=d_{\eta_t,\varepsilon_t}$, topologically equivalent to the $Q_t$-norm, such that, for all $L\geq 0$ and all $x_0,y_0\in\mathbb{R}^d$,
    \begin{align}
        W_{d_t}\left(R_t^L(x_0,\cdot), R_t^L(y_0,\cdot)\right) &\leq C_t r_t^L \left\{V_t(x_0)+V_t(y_0)\right\}, \label{eq:maexl_pairwise_wasserstein_contraction} \\
        W_{d_t}\left(R_t^L(x_0,\cdot), \pi_t\right) &\leq C_t r_t^L V_t(x_0), \label{eq:maexl_wasserstein_mixing}
    \end{align}
    where
    \begin{align*}
        V_t(x_0) = 1 \vee \norm{x_0-\bar m_t(x_t)}_{Q_t} = 1 \vee \sqrt{A_t}\norm{x_0-\bar m_t(x_t)}.
    \end{align*}
    Consequently, for any measurable set $B\subset\mathbb{R}^d$, the local Wasserstein diameter of the $L$-step MAExL kernel satisfies
    \begin{align*}
        \Delta^{(d_t)}_{t,L}(B) := \sup_{x_0,y_0\in B} W_{d_t}\left(R_t^L(x_0,\cdot), R_t^L(y_0,\cdot)\right) \leq 2C_t r_t^L \sup_{x_0\in B}V_t(x_0).
    \end{align*}
    In particular, for
    \begin{align*}
        B_u = \mathcal{B}\left(m_t(x_t), \kappa_t(x_t)(\sqrt d+u)\right),
    \end{align*}
    one has
    \begin{align*}
        \Delta^{(d_t)}_{t,L}(B_u) \leq 2C_t r_t^L \left[1 \vee \sqrt{A_t}\left(\norm{m_t(x_t)-\bar m_t(x_t)} + \kappa_t(x_t)(\sqrt d+u)\right)\right].
    \end{align*}
\end{corollary}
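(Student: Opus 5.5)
The plan is to reduce everything to the EI-MALA Wasserstein theory of \cite{Durmus2015quantitative} by an affine change of variables, and then to derive the diameter statements by elementary triangle inequalities.

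\textbf{Step 1: centering.} First I would identify MAExL with their kernel. In the centered variable $z = x_0 - \bar m_t(x_t)$, the posterior score reads $-A_t z + \nabla\log p(z+\bar m_t(x_t))$, because $b_t = A_t \bar m_t(x_t)$. The MAExL proposal mean is $\mu(x) = c(\tau)x + \varphi(\tau)(b_t + \nabla\log p(x))$. Subtracting $\bar m_t$ and using $c(\tau) + A_t\varphi(\tau) = 1$ gives
\[
\mu - \bar m_t = c(\tau)\,z + \varphi(\tau)\,\nabla\log p(z + \bar m_t).
\]
This is exactly the exponential-integrator proposal for the reference Gaussian $\mathcal{N}(0,Q_t^{-1})$ with $Q_t = A_t I_d$, perturbed by $\Gamma_t$. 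The proposal covariance is $2\tau I_d = (1-e^{-2A_t h})A_t^{-1} I_d$. The Metropolis--Hastings ratio is likewise invariant under translation. Hence the centered MAExL chain coincides with EI-MALA on $\pi_t(\rmd z) \propto \exp(-\tfrac12 z^\top Q_t z - \Gamma_t(z))\,\rmd z$, and since Wasserstein distances are invariant under common translation, it is enough to work in $z$.

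\textbf{Step 2: invoking their results.} Under M1--M2, I would apply \citep[Proposition~3, Theorems~1--2]{Durmus2015quantitative}. These provide a Foster--Lyapunov drift for $V_t(z) = 1\vee\|z\|_{Q_t}$ and a local contraction in the bounded metric $d_{\eta,\varepsilon}$. Together they give the pairwise bound \eqref{eq:maexl_pairwise_wasserstein_contraction} with some $C_t$, $r_t$, $\eta_t$, $\varepsilon_t$. The topological equivalence of $d_t$ with the $Q_t$-norm is their Lemma~8.

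\textbf{Step 3: mixing bound.} To get \eqref{eq:maexl_wasserstein_mixing}, I would use invariance, $\pi_t R_t^L = \pi_t$, together with convexity of $W_{d_t}$ under mixtures:
\[
W_{d_t}(R_t^L(x_0,\cdot),\pi_t) \le \int W_{d_t}(R_t^L(x_0,\cdot), R_t^L(y,\cdot))\,\pi_t(\rmd y) \le C_t r_t^L\{V_t(x_0) + \pi_t(V_t)\}.
\]
The term $\pi_t(V_t)$ is finite by the drift condition. It can either be absorbed into $V_t(x_0)\ge 1$ by enlarging $C_t$, or replaced by directly citing their Theorem~2, which is stated in this form.

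\textbf{Step 4: diameters.} Taking the supremum over $x_0, y_0 \in B$ in the pairwise bound gives $\Delta^{(d_t)}_{t,L}(B) \le 2C_t r_t^L \sup_B V_t$. For $x_0 \in B_u$, the triangle inequality gives $\|x_0 - \bar m_t\| \le \|m_t - \bar m_t\| + \kappa_t(\sqrt d + u)$. Multiplying by $\sqrt{A_t}$ and taking the maximum with $1$ yields the final display.

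\textbf{Main obstacle.} I expect the hardest part to be Step 1: checking carefully that the $\tau$-reparametrized proposal, and especially the capped step $\epsilon$, matches Durmus--Moulines' EI-MALA parametrization exactly. Their constants depend on the step size, so the step size must be frozen, consistent with \Cref{rmk:adaptation_scope}. A secondary issue is the absorption of $\pi_t(V_t)$ in Step 3.
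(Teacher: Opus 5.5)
Your proposal is correct and follows essentially the same route as the paper. Both arguments center at $\bar m_t(x_t)$ to identify MAExL with EI-MALA for the Gaussian reference of precision $Q_t$, invoke \citep[Proposition~3, Theorem~2]{Durmus2015quantitative} under M1--M2, take the supremum over $B$, and bound $V_t$ on $B_u$ by the triangle inequality.

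Two of your steps go a little beyond the paper, and both are valid. Your explicit check that $c(\tau)+A_t\varphi(\tau)=1$ fills in the centering identification, which the paper only asserts. Your derivation of \eqref{eq:maexl_wasserstein_mixing} from the pairwise bound, via invariance and convexity of $W_{d_t}$, works as an alternative to citing Theorem~2 directly; it absorbs $\pi_t(V_t)<\infty$ into $C_t$ using $V_t\ge 1$.
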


\begin{proof}
    In the centered variable $z=x_0-\bar m_t(x_t)$, the MAExL proposal coincides with the exponential-integrator MALA proposal applied to a Gaussian reference measure with precision $Q_t=A_tI_d$, perturbed by the prior potential. Under assumptions M1--M2 of \cite{Durmus2015quantitative}, their Theorem~2 gives geometric convergence of the EI-MALA kernel in a Wasserstein distance $W_{d_t}$ associated with a bounded distance $d_t=d_{\eta_t,\varepsilon_t}$. This yields \eqref{eq:maexl_wasserstein_mixing}. Their Proposition~3 gives the corresponding pairwise contraction estimate, which yields \eqref{eq:maexl_pairwise_wasserstein_contraction}. Taking the supremum over $x_0,y_0\in B$ gives the bound on $\Delta^{(d_t)}_{t,L}(B)$. The final display follows by bounding $V_t$ on $B_u$.
\end{proof}

 \paragraph{Impact of MAExL.}

\begin{table*}[t]
    \centering
    \caption{MAExL ablation on ALA-$N$ systems, with $N=2,3,4,6$, and on chignolin. Each column is the \ourmethod{} configuration of \Cref{tab:hparam_nfm_selected}, rerun with the number of in-loop MAExL steps per level given in the header; the 512-step columns are those of \Cref{tab:wasserstein}. Results are obtained from three random seeds, reported as \texttt{mean} $\pm$ \texttt{std}. Energy and torus Wasserstein distances are computed on the test split against $10^4$ reference samples, from a proposal budget of $10^4$ samples. Inference time covers sampling and the post-hoc MALA ladder.}
    \scriptsize
    \label{tab:maexl_ablation}

    \renewcommand{\arraystretch}{1.5}
    \setlength{\tabcolsep}{1.5pt}

    \resizebox{\linewidth}{!}{%
    \begin{tabular}{c l c c c c c c}
        \cmidrule[0.8pt]{1-8}
        & & \multicolumn{2}{c}{{\footnotesize 1024 MAExL steps}}
          & \multicolumn{2}{c}{{\footnotesize 512 MAExL steps}}
          & \multicolumn{2}{c}{{\footnotesize 0 MAExL steps}} \\
        \cmidrule(lr){3-4} \cmidrule(lr){5-6} \cmidrule(lr){7-8}
        & & {\tiny \ourmethod{}-IS} & {\tiny \ourmethod{}-MIS}
          & {\tiny \ourmethod{}-IS} & {\tiny \ourmethod{}-MIS}
          & {\tiny \ourmethod{}-IS} & {\tiny \ourmethod{}-MIS} \\
        \cmidrule[0.5pt]{1-8}

        \multirow{3}{*}{\begin{tabular}[c]{@{}c@{}}\textbf{ALA-2} \\ ($d=66$)\end{tabular}}
        & $\mathcal{E}$-$\mathcal{W}_2$ $\downarrow$ & $0.127 \err{0.019}$ & $0.186 \err{0.042}$ & $0.132 \err{0.013}$ & $0.242 \err{0.080}$ & $0.256 \err{0.054}$ & $0.367 \err{0.035}$ \\
        & $\mathbb{T}$-$\mathcal{W}_2$ $\downarrow$ & $1.044 \err{0.019}$ & $1.241 \err{0.010}$ & $1.050 \err{0.011}$ & $1.239 \err{0.006}$ & $1.055 \err{0.009}$ & $1.257 \err{0.023}$ \\
        & Inference (s) $\downarrow$ & $129.2 \err{5.8}$ & $118.7 \err{0.2}$ & $137.2 \err{14.5}$ & $125.8 \err{14.4}$ & $122.8 \err{14.4}$ & $116.6 \err{14.6}$ \\
        \cmidrule{1-8}

        \multirow{3}{*}{\begin{tabular}[c]{@{}c@{}}\textbf{ALA-3} \\ ($d=99$)\end{tabular}}
        & $\mathcal{E}$-$\mathcal{W}_2$ $\downarrow$ & $0.167 \err{0.016}$ & $0.151 \err{0.030}$ & $0.164 \err{0.011}$ & $0.157 \err{0.009}$ & $0.215 \err{0.037}$ & $0.212 \err{0.033}$ \\
        & $\mathbb{T}$-$\mathcal{W}_2$ $\downarrow$ & $0.497 \err{0.011}$ & $0.471 \err{0.020}$ & $0.490 \err{0.016}$ & $0.465 \err{0.007}$ & $0.505 \err{0.022}$ & $0.462 \err{0.020}$ \\
        & Inference (s) $\downarrow$ & $162.8 \err{9.8}$ & $163.2 \err{10.7}$ & $164.1 \err{0.4}$ & $147.2 \err{13.1}$ & $137.2 \err{12.0}$ & $149.7 \err{0.8}$ \\
        \cmidrule{1-8}

        \multirow{3}{*}{\begin{tabular}[c]{@{}c@{}}\textbf{ALA-4} \\ ($d=126$)\end{tabular}}
        & $\mathcal{E}$-$\mathcal{W}_2$ $\downarrow$ & $1.022 \err{0.073}$ & $0.970 \err{0.100}$ & $0.988 \err{0.027}$ & $1.107 \err{0.013}$ & $7.125 \err{11.571}$ & $1.745 \err{2.142}$ \\
        & $\mathbb{T}$-$\mathcal{W}_2$ $\downarrow$ & $0.827 \err{0.011}$ & $0.850 \err{0.063}$ & $0.824 \err{0.041}$ & $0.882 \err{0.051}$ & $0.840 \err{0.055}$ & $0.882 \err{0.024}$ \\
        & Inference (s) $\downarrow$ & $193.7 \err{12.3}$ & $174.8 \err{12.6}$ & $184.3 \err{13.5}$ & $162.3 \err{13.6}$ & $159.4 \err{14.0}$ & $148.0 \err{12.5}$ \\
        \cmidrule{1-8}

        \multirow{3}{*}{\begin{tabular}[c]{@{}c@{}}\textbf{ALA-6} \\ ($d=189$)\end{tabular}}
        & $\mathcal{E}$-$\mathcal{W}_2$ $\downarrow$ & $1.070 \err{0.137}$ & $1.084 \err{0.123}$ & $1.018 \err{0.063}$ & $1.301 \err{0.092}$ & $2.077 \err{0.196}$ & $2.649 \err{0.109}$ \\
        & $\mathbb{T}$-$\mathcal{W}_2$ $\downarrow$ & $1.013 \err{0.023}$ & $1.019 \err{0.016}$ & $0.990 \err{0.026}$ & $1.027 \err{0.021}$ & $0.994 \err{0.018}$ & $1.016 \err{0.004}$ \\
        & Inference (s) $\downarrow$ & $455.3 \err{1.5}$ & $440.3 \err{13.4}$ & $452.8 \err{13.9}$ & $405.6 \err{14.2}$ & $417.3 \err{5.2}$ & $372.8 \err{9.3}$ \\
        \cmidrule{1-8}

        \multirow{3}{*}{\begin{tabular}[c]{@{}c@{}}\textbf{Chignolin} \\ ($d=414$)\end{tabular}}
        & $\mathcal{E}$-$\mathcal{W}_2$ $\downarrow$ & $4.966 \err{0.369}$ & $5.734 \err{0.091}$ & $4.331 \err{0.262}$ & $6.112 \err{0.129}$ & $10.155 \err{0.502}$ & $9.129 \err{0.077}$ \\
        & $\mathbb{T}$-$\mathcal{W}_2$ $\downarrow$ & $2.588 \err{0.018}$ & $2.579 \err{0.023}$ & $2.597 \err{0.054}$ & $2.581 \err{0.018}$ & $2.594 \err{0.045}$ & $2.573 \err{0.017}$ \\
        & Inference (s) $\downarrow$ & $2945.3 \err{13.1}$ & $2356.3 \err{16.1}$ & $2830.6 \err{17.4}$ & $2264.5 \err{1.1}$ & $2727.2 \err{9.7}$ & $2181.8 \err{14.4}$ \\
        \cmidrule[0.8pt]{1-8}
    \end{tabular}%
    }
\end{table*}

\Cref{tab:maexl_ablation} isolates the contribution of the in-loop MAExL steps by rerunning the selected \ourmethod{} configurations with 1024, 512 and 0 steps per noise level. The effect is concentrated almost entirely in the energy metric. Disabling MAExL degrades $\mathcal{E}$-$\mathcal{W}_2$ on every system and for both weighting schemes, in most cases by a factor of two or more, and on ALA-4 it is what separates a usable estimator from an unreliable one: without MAExL the energy discrepancy grows by an order of magnitude and its across-seed standard deviation exceeds its mean, i.e. individual seeds fail outright, while the same configuration with MAExL steps is both accurate and tightly concentrated. MAExL therefore buys run-to-run reproducibility as much as average accuracy. By contrast, $\mathbb{T}$-$\mathcal{W}_2$ is essentially flat across the three settings on all five systems, moving by at most a few percent and generally within one seed standard deviation. This is what one expects from a local, $p_{0|t}$-invariant kernel: MAExL relaxes energetic strain in the proposed $x_0$ configurations, the clashes and distorted bonds that dominate the energy distribution, but its moves are too local to transfer probability mass between torsional metastable states, so the dihedral marginals remain governed by the coverage of the learned model. The price is modest and shrinks in relative terms with system size, since on the larger systems the flow evaluations and the post-hoc MALA ladder dominate the inference budget. Returns also saturate well before the largest setting: 512 and 1024 steps are statistically indistinguishable for \ourmethod{}-IS and nominally favour the shorter chains on the three largest systems, with only \ourmethod{}-MIS showing a consistent preference for 1024 steps.

\subsection{Centre-of-mass adjustment for the conditional posterior}
\label{app:com-adjustment-conditional}

We follow the Gaussian centre-of-mass (CoM) augmentation of SBG \citep{tan2025scalable} and extend it to our diffusion posterior. Unlike SBG, we do not use a radial correction: every weight below is built from the Cartesian density of the centroid. We first derive valid importance weights and then, separately, characterise when matching the centroid proposal improves the effective sample size (ESS).

\subsubsection{Setup and notation}

For $x=(x_1,\ldots,x_N)\in\mathbb{R}^{Nd}$, let $c=N^{-1}\sum_i x_i$ be the unweighted centroid, which is what we mean by the CoM throughout, and let $\hat{x}=x-\mathbf{1}_N\otimes c$ be the centred configuration. Then
\begin{align*}
    x=\hat{x}+\mathbf{1}_N\otimes c,
    \qquad
    \hat{x}\in\mathcal{H}:=\Bigl\{s\in\mathbb{R}^{Nd}:\textstyle\sum_i s_i=0\Bigr\}.
\end{align*}
The two components are orthogonal, and $\dim\mathcal{H}=(N-1)d$. All densities below are written in the coordinates $(\hat{x},c)$, with respect to Lebesgue measure on $\mathcal{H}\times\mathbb{R}^d$. The change of variables to Cartesian coordinates has a constant Jacobian, which we include whenever the flow density is expressed in these coordinates.

Let $p$ be the normalised Boltzmann density on $\mathcal{H}$. Restricting to $\mathcal{H}$ removes the translational degree of freedom, which would otherwise make the density unnormalisable. As in SBG, we lift $p$ to the ambient space by attaching an independent Gaussian centroid:
\begin{equation}
\label{eq:lift}
    \tilde{p}(\hat{x},c)=p(\hat{x})\,g_\sigma(c),
    \qquad
    g_\sigma(c)=\mathcal{N}(c;0,\sigma^2 I_d),
    \qquad
    \sigma^2=1/N.
\end{equation}
This variance is that of the centroid of $N$ independent standard Gaussian vectors. Since $g_\sigma$ integrates to one, the centred marginal of $\tilde{p}$ is exactly $p$.

\subsubsection{Relation to SBG's unconditional adjustment}

Like SBG, we introduce an auxiliary Gaussian centroid so that the model can be trained in the full ambient space; we differ in the density used for reweighting. For $d=3$, the radius $r=\|c\|$ has density
\begin{align*}
    f_\sigma(r)=\frac{1}{\sigma}\chi_3(r/\sigma)=4\pi r^2 g_\sigma(c),
    \qquad \|c\|=r,
\end{align*}
where the factor $4\pi r^2$ arises from integrating the spherical volume element over the angles. Thus $f_\sigma$ is a density with respect to $dr$, whereas $g_\sigma$ is a density with respect to $dc$.

SBG's Eq.~(5) subtracts $\log f_\sigma(\|c\|)$ from the flow log-density. For the Gaussian lift~\eqref{eq:lift}, however, the correct unconditional importance weight is
\begin{align*}
    w(x)\propto\frac{p(\hat{x})\,g_\sigma(c)}{q^\theta(x)}.
\end{align*}
Rewriting the denominator as $q^\theta/g_\sigma$ amounts to adding $\|c\|^2/(2\sigma^2)$ to the flow log-density, up to an additive constant. We omit SBG's additional $-2\log\|c\|$ term, since it stems from the radial Jacobian rather than from the Cartesian Gaussian density.

\subsubsection{The conditional Gaussian lift}

We consider variance-exploding noising with $\alpha_t=1$ and $\sigma_t>0$:
\begin{align*}
    X_t\mid X_0\sim\mathcal{N}(X_0,\sigma_t^2 I_{Nd}).
\end{align*}
The conditional flow $q^\theta_{0|t}$ is trained on pairs $(x_0,x_t)$ generated by first augmenting with a Gaussian centroid and then applying this noising. The posterior over centred configurations is
\begin{equation}
\label{eq:centred-posterior}
    p^\perp_{0|t}(\hat{x}_0\mid\hat{x}_t)
    \propto p(\hat{x}_0)\exp\!\left(-\frac{\|\hat{x}_t-\hat{x}_0\|^2}{2\sigma_t^2}\right),
\end{equation}
whereas the lifted model has posterior
\begin{equation}
\label{eq:lifted-posterior}
    \tilde{p}_{0|t}(x_0\mid x_t)
    \propto \mathcal{N}(x_t;x_0,\sigma_t^2 I_{Nd})\,p(\hat{x}_0)\,g_\sigma(c_0).
\end{equation}
Here and below, evaluating a density at $x_0$ is shorthand for evaluating it at $(\hat{x}_0,c_0)$ in the coordinates fixed above.

The squared distance splits orthogonally as
\begin{align*}
    \|x_t-x_0\|^2=\|\hat{x}_t-\hat{x}_0\|^2+N\|c_t-c_0\|^2,
\end{align*}
so the centroid is observed through the Gaussian likelihood
\begin{align*}
    c_t\mid c_0\sim\mathcal{N}(c_0,\sigma_t^2 I_d/N).
\end{align*}
Combining this likelihood with the Gaussian prior $g_\sigma$ yields the factorisation
\begin{equation}
\label{eq:lifted-factorisation}
    \tilde{p}_{0|t}(x_0\mid x_t)
    =p^\perp_{0|t}(\hat{x}_0\mid\hat{x}_t)\,g_t(c_0\mid c_t),
    \qquad
    g_t(c_0\mid c_t)=\mathcal{N}(c_0;\eta_t c_t,\beta_t^2 I_d),
\end{equation}
where
\begin{equation}
\label{eq:alpha-beta}
\begin{aligned}
    \eta_t
    &=\frac{\sigma^2}{\sigma^2+\sigma_t^2/N}
      =\frac{1}{1+\sigma_t^2},\\
    \beta_t^2
    &=\frac{\sigma^2(\sigma_t^2/N)}{\sigma^2+\sigma_t^2/N}
      =\frac{\sigma_t^2}{N(1+\sigma_t^2)},
\end{aligned}
\end{equation}
and the second equalities use $\sigma^2=1/N$. In particular, lifting before conditioning leaves the centred posterior~\eqref{eq:centred-posterior} unchanged.

\paragraph{Rotations.}
If $p$ is rotationally invariant, the lifted posterior is jointly rotation invariant: for every $R\in\mathrm{SO}(d)$, acting identically on each particle,
\begin{equation}
\label{eq:joint-rot-inv}
    \tilde{p}_{0|t}(Rx_0\mid Rx_t)=\tilde{p}_{0|t}(x_0\mid x_t).
\end{equation}
This is invariance under rotating $x_0$ and $x_t$ together, not under rotating $x_0$ alone at fixed $x_t$. Neither the factorisation above nor the weights below require rotational invariance; both follow solely from the independent Gaussian lift and the orthogonal split of the likelihood.

\subsubsection{Importance weights and effective log-likelihood}

Fix $t$ and $x_t$, and draw independent samples from the learned flow $q^\theta_{0|t}(\cdot\,|\,x_t)$, whose support we assume covers that of the lifted posterior. The importance weights are
\begin{equation}
\label{eq:conditional-com-weight}
    w_t(x_0)
    =\frac{p^\perp_{0|t}(\hat{x}_0\mid\hat{x}_t)\,g_t(c_0\mid c_t)}
           {q^\theta_{0|t}(x_0\mid x_t)},
\end{equation}
where normalising constants independent of $x_0$ may be dropped in practice. These weights do not require the learned flow to factorise. Indeed, since $g_t(\cdot\,|\,c_t)$ integrates to one, every integrable centred observable $F$ satisfies
\begin{align*}
    \mathbb{E}_{q^\theta_{0|t}}\bigl[w_t(X_0)\,F(\hat{X}_0)\bigr]
    =\int_{\mathcal{H}}F(s)\,p^\perp_{0|t}(s\mid\hat{x}_t)\,ds.
\end{align*}
Taking $F\equiv1$ gives $\mathbb{E}_{q^\theta_{0|t}}[w_t]=1$, so self-normalised importance sampling is consistent for such observables.

To compare with SBG's log-likelihood adjustment, we define the \emph{effective} log-likelihood
\begin{equation}
\label{eq:adj-flow-loglik}
\begin{aligned}
    \log q^{\theta,\mathrm{eff}}_{0|t}(x_0\mid x_t)
    &:=\log q^\theta_{0|t}(x_0\mid x_t)-\log g_t(c_0\mid c_t)\\
    &=\log q^\theta_{0|t}(x_0\mid x_t)
      +\frac{\|\delta\|^2}{2\beta_t^2}
      +\frac{d}{2}\log(2\pi\beta_t^2),
    \qquad \delta=c_0-\eta_t c_t,
\end{aligned}
\end{equation}
so that $w_t=p^\perp_{0|t}/q^{\theta,\mathrm{eff}}_{0|t}$. This merely rewrites~\eqref{eq:conditional-com-weight}; it is not a new sampling proposal. In general, $q^{\theta,\mathrm{eff}}_{0|t}$ is neither normalised on the ambient space nor equal to the centred marginal of the flow.

Note that the first factor in the numerator of~\eqref{eq:conditional-com-weight} is the \emph{centred} posterior~\eqref{eq:centred-posterior}, not the lifted posterior~\eqref{eq:lifted-posterior}. Equivalently, the entire numerator may be replaced by the lifted posterior, that is, the full likelihood times $p(\hat{x}_0)\,g_\sigma(c_0)$, which already contains the centroid factor. Either way, the Gaussian centroid factor must not be included twice.

\subsubsection{When centroid matching improves ESS}

For $K$ independent proposal samples with weights $w_1,\ldots,w_K$, the effective sample size is
\begin{align*}
    \mathrm{ESS}_K=\frac{\bigl(\sum_{i=1}^K w_i\bigr)^2}{\sum_{i=1}^K w_i^2}.
\end{align*}
For a normalised target $\pi$ and a proposal $q$ whose support covers that of $\pi$, if $w=\pi/q$ has a finite second moment, then
\begin{align*}
    \frac{\mathrm{ESS}_K}{K}
    \xrightarrow[K\to\infty]{\mathrm{a.s.}}
    \rho(q,\pi):=\frac{(\mathbb{E}_q[w])^2}{\mathbb{E}_q[w^2]}.
\end{align*}
Our comparison concerns this asymptotic fractional ESS, not the ESS of any particular finite sample.

\begin{proposition}[Matching the centroid proposal]
\label{prop:cond-ess}
Fix $t$ and $x_t$, and write $P(s)=p^\perp_{0|t}(s\,|\,\hat{x}_t)$ and $g(c)=g_t(c\,|\,c_t)$. Consider the lifted target $\pi(s,c)=P(s)\,g(c)$ and the two proposals
\begin{align*}
    q(s,c)=Q(s)\,k(c),
    \qquad
    q_{\mathrm{match}}(s,c)=Q(s)\,g(c).
\end{align*}
Assume that all densities are normalised, that $Q>0$ wherever $P>0$ and $k>0$ wherever $g>0$ (up to null sets), and that
\begin{align*}
    \mathbb{E}_Q\bigl[(P/Q)^2\bigr]<\infty,
    \qquad
    \mathbb{E}_k\bigl[(g/k)^2\bigr]<\infty.
\end{align*}
Then
\begin{equation}
\label{eq:ess-ineq}
    \rho(q,\pi)\leq\rho(q_{\mathrm{match}},\pi),
\end{equation}
with equality if and only if $k=g$ almost everywhere.
\end{proposition}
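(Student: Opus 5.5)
The plan is to exploit the product structure of target and proposals, which makes the importance weight factorise. Under $q$, the weight is $w(s,c)=\frac{P(s)}{Q(s)}\cdot\frac{g(c)}{k(c)}$. Under $q_{\mathrm{match}}$, the centroid factor cancels exactly and leaves $w_{\mathrm{match}}(s,c)=P(s)/Q(s)$. Because all densities are normalised and the support conditions hold, both weights have unit mean: $\mathbb{E}_q[w]=\mathbb{E}_Q[P/Q]\,\mathbb{E}_k[g/k]=1$, and likewise $\mathbb{E}_{q_{\mathrm{match}}}[w_{\mathrm{match}}]=1$. The comparison of $\rho$ therefore reduces to comparing second moments.

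Independence of $s$ and $c$ under $q$ makes the second moment factorise as
\begin{align*}
\mathbb{E}_q[w^2]=\mathbb{E}_Q\bigl[(P/Q)^2\bigr]\,\mathbb{E}_k\bigl[(g/k)^2\bigr],
\end{align*}
and this is finite by the stated integrability assumptions. Under the matched proposal we get $\mathbb{E}_{q_{\mathrm{match}}}[w_{\mathrm{match}}^2]=\mathbb{E}_Q[(P/Q)^2]$. Hence
\begin{align*}
\rho(q,\pi)=\frac{\rho(q_{\mathrm{match}},\pi)}{\mathbb{E}_k\bigl[(g/k)^2\bigr]}=\frac{\rho(q_{\mathrm{match}},\pi)}{1+\chi^2(g\,\|\,k)}.
\end{align*}
The main step is then to show $\mathbb{E}_k[(g/k)^2]\ge 1$. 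By Jensen's inequality (or Cauchy--Schwarz), $\mathbb{E}_k[(g/k)^2]\ge(\mathbb{E}_k[g/k])^2$. Since $k>0$ wherever $g>0$, we have $\mathbb{E}_k[g/k]=\int_{\{k>0\}}g=1$, which gives \eqref{eq:ess-ineq}. Here $\rho(q_{\mathrm{match}},\pi)>0$ is finite, because $\mathbb{E}_Q[(P/Q)^2]\ge1$ by the same argument.

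The equality case is the only delicate point. Equality holds iff $\mathrm{Var}_k(g/k)=0$, i.e.\ iff $g/k$ is $k$-a.s.\ constant, and that constant must equal $1$ since its mean is $1$. This gives $g=k$ $k$-a.e. To upgrade this to Lebesgue-a.e., note that $g=k$ on $\{k>0\}$ implies $\int_{\{k>0\}}g=1=\int g$, so $g=0$ a.e.\ on $\{k=0\}$; thus $g=k$ everywhere up to null sets. Conversely, $k=g$ trivially gives equality.
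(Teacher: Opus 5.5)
Your proof is correct and follows the paper's argument: you factorise the weight under the product proposal, use unit means and independence to get $\rho(q,\pi)=\rho(q_{\mathrm{match}},\pi)/\mathbb{E}_k[(g/k)^2]$, and conclude from $\mathbb{E}_k[(g/k)^2]\geq(\mathbb{E}_k[g/k])^2=1$, with equality characterised by zero variance of $g/k$ under $k$. Your explicit upgrade from $k$-a.e.\ to Lebesgue-a.e.\ equality, and your check that $\rho(q_{\mathrm{match}},\pi)$ is finite and positive, fill in details that the paper's proof leaves implicit.
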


\begin{proof}
Under $q$, the weight factorises as $w=A(S)\,B(C)$ with $A=P/Q$ and $B=g/k$. These factors are independent and each has expectation one, so
\begin{align*}
    \rho(q,\pi)
    =\frac{1}{\mathbb{E}_Q[A^2]\,\mathbb{E}_k[B^2]}
    \leq\frac{1}{\mathbb{E}_Q[A^2]}
    =\rho(q_{\mathrm{match}},\pi),
\end{align*}
where the inequality follows from $\mathbb{E}_k[B^2]\geq(\mathbb{E}_k B)^2=1$. Equality holds exactly when $B=1$ $k$-almost surely, that is, when $k=g$ almost everywhere.
\end{proof}

The proof relies on the same independent-weight factorisation as SBG \citep[Appendix~B.1]{tan2025scalable}. It differs from their Proposition~1 in that it compares two normalised proposals against a single fixed, normalised target, under explicit moment assumptions. It is therefore a statement about proposal matching and provides no additional ESS guarantee for the effective log-likelihood.

Several consequences follow. If the flow already factorises as $q^\theta_{0|t}=Q\,g_t$, its ordinary lifted-target weights reduce to $P/Q$, because the centroid factor cancels; if moreover $Q=P$, all weights are constant. For a mismatched factorised proposal, realising the improvement requires actually sampling from $Q\,g_t$ and evaluating its density; modifying the likelihood in the denominator alone is not enough. Finally, for a generic non-factorised flow, the weights~\eqref{eq:conditional-com-weight} remain valid, but the ESS comparison no longer applies.

\paragraph{Limits.}
For fixed $x_t$, as $\sigma_t\to\infty$ we have $\eta_t\to0$ and $\beta_t^2\to\sigma^2$, which recovers the unconditional \emph{Gaussian} adjustment above rather than SBG's radial formula. As $\sigma_t\to0$, we have $\eta_t\to1$ and $\beta_t^2\to0$, so the centroid posterior concentrates at $c_t$. At zero noise it becomes singular, which is why the density formulas are stated only for $\sigma_t>0$. Under $g_t$, writing $\delta=\beta_t Z$ with $Z\sim\mathcal{N}(0,I_d)$ gives
\begin{align*}
    -\log g_t(c_0\mid c_t)=\frac{\|Z\|^2}{2}+\frac{d}{2}\log(2\pi\beta_t^2).
\end{align*}
The second term is constant in $x_0$ and cancels under self-normalisation, but the sample-dependent first term does not vanish at low noise. It does, however, cancel exactly in the importance ratio at every positive noise level whenever the proposal centroid matches $g_t$.

\subsection{Conditional TarFlow and StarFlow}
\label{app:molecular-tarflow}

\Cref{subsec:method:training} introduces our conditional molecular TarFlow and StarFlow backbones and the two mechanisms used to inject $x_t$. This appendix gives the detail that does not fit there: a brief recap of the two base architectures (\Cref{app:tarflow-starflow-background}, pointing to their own papers for anything not needed below), exactly how time and space conditioning are added on top of them (\Cref{app:conditional-mechanism}), the cache that makes conditioning nearly free at sampling time (\Cref{app:conditioning-cache}), and the resulting time and space complexity of every operation our sampler uses (\Cref{app:tarflow-complexity}).

\subsubsection{TarFlow and StarFlow}
\label{app:tarflow-starflow-background}

\paragraph{TarFlow.} TarFlow \citep{zhai2025normalizingflowscapablegenerative} is an autoregressive normalising flow built from causal-Transformer coupling layers. Given tokens $x=(x_1,\ldots,x_N)$, $x_i\in\mathbb{R}^D$, one layer computes
\begin{equation}
    \label{eq:tarflow-coupling}
    z_i = \bigl(x_i-\mu_i(x_{<i})\bigr)\odot\exp\bigl(-\alpha_i(x_{<i})\bigr), \qquad i=1,\ldots,N,
\end{equation}
with $z_1\equiv x_1$ and $(\mu_i,\alpha_i)$ produced by a single causal-Transformer pass over the prefix $x_{<i}$; the Jacobian is lower-triangular, with $\log|\det\partial z/\partial x|=-\sum_i\mathbf{1}^\top\alpha_i(x_{<i})$. Training computes every $(\mu_i,\alpha_i)$ in one parallel pass under a causal mask; sampling reruns the same Transformer autoregressively, with a standard KV cache. A deep flow stacks $L$ such layers, permuting the sequence between layers so every coordinate sits both early and late in the autoregressive order across the stack. We refer to \citet{zhai2025normalizingflowscapablegenerative} for the full architecture, including the image-patch tokenisation, which does not affect anything below.

\paragraph{StarFlow.} StarFlow \citep{gu2026starflow} scales the same recipe with two changes we adopt. First, \emph{deep--shallow depth allocation}: rather than $L$ equal layers, one \emph{deep} block of $\ell$ layers generates a first sample from noise and $L-1$ \emph{shallow} blocks of two layers each refine it, written $\ell(L)$-$d$ for width $d$. Second, a \emph{guarded affine head}: TarFlow's $\exp(\alpha_i)$ overflows once $|\alpha_i|\approx88$, so StarFlow soft-clips the pre-activation and passes it through a softplus instead,
\begin{equation}
    \label{eq:starflow-affine}
    \tilde\alpha_i = s\tanh(\alpha_i/s), \qquad \sigma_i=\mathrm{softplus}\bigl(\tilde\alpha_i+\mathrm{softplus}^{-1}(1)\bigr), \qquad z_i=(x_i-\mu_i)/\sigma_i,
\end{equation}
with $s=4$, which we use throughout, including for conditional TarFlow. StarFlow's remaining two contributions (a pretrained-autoencoder latent space and a classifier-free guidance rule) are specific to text-to-image generation: an autoencoder would break the exactness of $\log p_\theta$ that our importance sampler needs, and there is no class or text label to guide on. We do not adopt either; see \citet{gu2026starflow} for both.

\paragraph{Molecular adaptation.} Following \citet{tan2025scalable}, we tokenise an $N$-atom configuration $x\in\mathbb{R}^{N\times3}$ as one token per atom, so StarFlow's 2-D patch grid becomes a degenerate $1\times N$ image, and handle the target's $\mathrm{SE}(3)$ symmetry outside the architecture through centering, a lifted centroid, and random rotation augmentation rather than through an equivariant backbone, \Cref{subsec:method:training} gives the details we share with the main text. Our molecular blocks use learned per-block positional embeddings rather than rotary ones, a choice \Cref{app:conditioning-cache} returns to, full multi-head attention, GELU feed-forwards, and the guarded affine head of \Cref{eq:starflow-affine}. Per-system depth and width are given in \Cref{sec:hyperparameters}.

\subsubsection{Making the flow conditional on $(x_t,t)$}
\label{app:conditional-mechanism}

Both backbones are made \emph{diffusion-ready} by turning the unconditional density $p_\theta(x_0)$ into the posterior $p^{\theta}_{0|t}(x_0|x_t)$ of \Cref{subsec:method:training}: a small continuous $t$ and an entire noisy configuration $x_t\in\mathbb{R}^{N\times3}$ are injected into every block without disturbing the autoregressive structure or the KV-cached sampling cost. \Cref{fig:conditional-pipeline} gives the resulting picture before the two paragraphs below detail the conditioning pathways themselves, and \Cref{fig:conditioning-mechanism} zooms into a single block.

\begin{figure}[t]
    \centering
    \includegraphics[width=\linewidth]{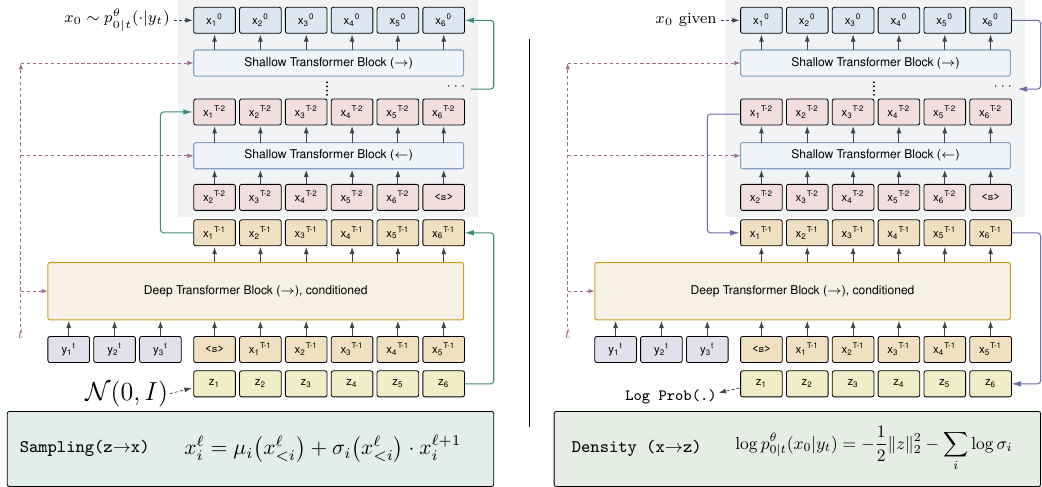}
    \caption{\textbf{Sampling and density evaluation share one stack.} The same $L$ blocks (a deep block conditioned on $y_t$, shallow blocks refining) run in two directions. \emph{(a)} Autoregressively, from $z\sim\mathcal{N}(0,\mathrm{I})$ up to a sample $x_0\sim p^{\theta}_{0|t}(\cdot|y_t)$: $N$ sequential per-atom steps per block, under a KV cache. \emph{(b)} In parallel, from a given $x_0$ down to $\log p^{\theta}_{0|t}(x_0|y_t)$: one causal pass per block. Here $y_t$ denotes the noisy configuration $x_t$ of \Cref{subsec:method:training}, relabelled so the conditioning is never mistaken for another stage of the flow's own $x^0,\ldots,x^{T-1}$ sequence. The green/purple arrows are the inverse/forward instance of the same per-atom update between every pair of adjacent levels, $x^{\ell}_i=\mu_i(x^{\ell}_{<i})+\sigma_i(x^{\ell}_{<i})\cdot x^{\ell+1}_i$ (bottom left),  $x^{\ell+1}_i$ is literally $z_i$ only at the very first level. $y_t$ is shown entering as a causal prefix at the deep block only. \Cref{fig:conditioning-mechanism} shows the cross-attention alternative and the every-block scope TarFlow uses instead (\Cref{app:conditional-mechanism}), plus the AdaLN-Zero time conditioning we add on top of both shown here only as the dashed $t$ arrows reaching every block, deep and shallow alike (the modulation itself is not drawn). Both directions read the same conditioning cache, built once from $(y_t,t)$ and shared across every particle drawn at that step (\Cref{app:conditioning-cache}). Layout follows \citet{gu2026starflow}'s own figure.}
    \label{fig:conditional-pipeline}
\end{figure}

\paragraph{Time, via AdaLN-Zero.} We follow DiT \citep{peebles2023scalable}: $t$ is embedded by a sinusoidal map and a two-layer MLP into a single vector $c\in\mathbb{R}^C$, which drives an AdaLN-Zero modulation in every attention layer of every block. A $\mathrm{SiLU}\to\mathrm{Linear}(C,9C)$ produces a (shift, scale, gate) triple for each of the self-attention, cross-attention (when present) and MLP sub-layers, and every one of these modulation linears is zero-initialised. The unconditional flow already starts at the identity because its affine head is zero-initialised, while attention and the MLP still run live. The conditional flow starts at a strictly stronger identity, since every gate is zero too, so attention, cross-attention and the MLP contribute nothing at all, on top of the zero affine head.

\paragraph{Space, via cross-attention or a causal prefix.} $x_t$ enters through one of two mechanisms, both shown in \Cref{fig:conditioning-mechanism}. \emph{Cross-attention} projects $x_t$ once per block through a shared $\mathbb{R}^3\to\mathbb{R}^C$ map, adds the same positional code used for the running $x_0$ representation and lets each layer, every layer or a strided subset every $m$-th one, project its own keys and values from that shared embedding and attend to it from a query built out of $x_0$. \emph{Prefix conditioning} instead concatenates the projected, positionally-coded $x_t$ ahead of $x_0$ along the sequence axis and lets the block's existing causal self-attention do the work: writing $\pi$ for the block's permutation and $W_{\mathrm{in}},W_{\mathrm{cond}}$ for the two input projections,
\begin{equation}
    \label{eq:prefix-conditioning}
    z = \bigl[\,W_{\mathrm{cond}}x_t^\pi+p^\pi+c\mathbf{1}^\top\,;\,W_{\mathrm{in}}x_0^\pi+p^\pi\,\bigr]\in\mathbb{R}^{2N\times C},
\end{equation}
reading $(\mu_i,\tilde\alpha_i)$ off the last $N$ positions only. Because attention is causal and the prefix leads, the first $N$ positions never depend on $x_0$, so $x_0\mapsto z$ is still autoregressive with the same lower-triangular Jacobian. Every conditioning atom is simply visible to every query atom through the same attention operator that carries the autoregressive chain, at the cost of attending over $2N$ rather than $N$ tokens and no parameters beyond $W_{\mathrm{cond}}$.

Both mechanisms are available to either backbone, but their scope differs. Conditional TarFlow conditions every block while conditional StarFlow restricts conditioning to the last (deep) block only, while every block still receives $t$ through its own AdaLN-Zero. In practice, cross-attention is what nearly every reported model uses: every TarFlow model, and StarFlow on every system but chignolin. Chignolin,the largest system with the deepest deep block, is the sole case where the prefix variant is selected instead, over its cross-attention twin at matched depth (\Cref{tab:hparam_nfm_arch_sweep,tab:hparam_nfm_selected}).

\begin{figure}[t]
    \centering
    \includegraphics[width=\linewidth]{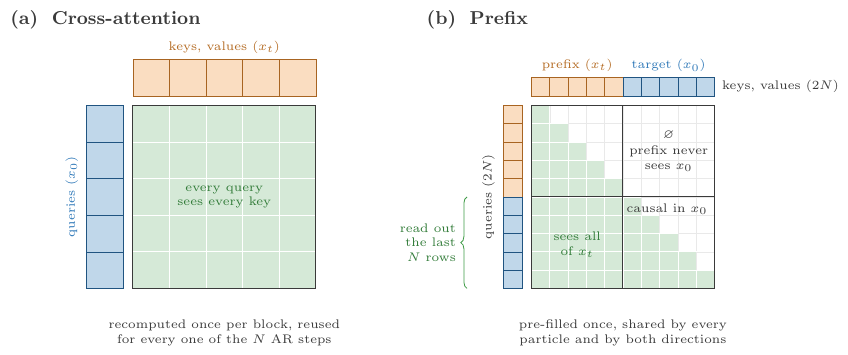}
    \caption{\textbf{The two conditioning mechanisms, inside one block.} \emph{(a) Cross-attention:} $x_t$ is embedded once per block, and each of the $K$ layers still projects its own keys and values from that shared embedding and attends to it from a query built out of $x_0$; every sub-layer also carries an AdaLN-Zero modulation driven by the time embedding $c$, with every modulation linear zero-initialised, so the block is the identity map at init. \emph{(b) Prefix:} $x_t$ instead joins the sequence as a causal prefix (\Cref{eq:prefix-conditioning}). A plain lower-triangular mask over the concatenated $2N$ tokens already gives every target atom access to every conditioning atom while keeping the prefix invisible to $x_0$, so the added cost is a longer attention span and $W_{\mathrm{cond}}$, not a new sub-layer. In both variants the conditioning state is fixed across every particle sharing $x_t$, hence cacheable (\Cref{app:conditioning-cache}).}
    \label{fig:conditioning-mechanism}
\end{figure}

\subsubsection{The conditioning cache}
\label{app:conditioning-cache}

$x_t$ is fixed across every particle drawn from the same posterior: one denoising step of \ourmethod{}-IS/MIS scores and draws $M$ (or $CM$) particles against a single $(x_t,t)$. Recomputing the conditioning pathway from scratch for each of them (the time embedding, the projected $x_t$ tokens, and every conditioned layer's cross-attention or prefix keys and values) would repeat identical work $M$ times over.

Instead, a dedicated preparation step builds this state once per distinct $(x_t,t)$, and every subsequent call to the flow takes it as an argument together with an index selecting each particle's row, so only the row lookup runs per particle. We have verified this reduction bit-for-bit against recomputing the conditioning independently for every particle. For StarFlow under prefix conditioning this goes one step further: the object the preparation step builds is not a separate cache that later gets copied in, but literally the same key/value buffer the autoregressive sampling loop appends to, so pre-filling and decoding share one piece of memory rather than two.

\subsubsection{Computational complexity}
\label{app:tarflow-complexity}
We summarise the per-evaluation cost of the molecular TarFlow and StarFlow variants. Let $N$ be the number of atoms, $C$ the Transformer channel width and $L$ the number of blocks, with block $\ell$ carrying $K_\ell$ attention layers, so that the full flow contains $K_{\mathrm{tot}} = \sum_{\ell=1}^{L} K_\ell$ layers. The TarFlow models set $K_\ell = K$ for every block, giving $K_{\mathrm{tot}} = LK$ and the StarFlow models set $K_\ell = 2$ for the shallow blocks and $K_\ell = K_{\mathrm{deep}}$ for the deep ones. We write $K_{\mathrm{cond}}$ for the number of layers that carry the $x_t$ conditioning and $m$ for the cross-attention period, so that $m = 1$ places cross-attention on every layer. \Cref{tab:tarflow-complexity} gives the cost of a single call into the flow, \Cref{tab:conditioning-amortisation} the cost once the conditioning cache of \Cref{app:conditioning-cache} is in place.

\begin{table}[t]
\centering
\caption{Cost of one call into the flow, dropping constants, with nothing precomputed: every
call rebuilds the conditioning state from $x_t$. Notation as in the text. The \emph{flow
backbone} column is what every variant pays for the flow itself; the other two are what
conditioning adds on top of it.}
\label{tab:tarflow-complexity}
\begin{tabular}{@{}cccc@{}}
\toprule
 & & \multicolumn{2}{c}{Added by conditioning} \\
\cmidrule(lr){3-4}
 & Flow backbone & cross-attention & prefix \\
\midrule
$\log p$                      & $K_{\mathrm{tot}}(N^2 C + N C^2)$              & $\tfrac{1}{m} K_{\mathrm{cond}}(N^2 C + N C^2)$ & $K_{\mathrm{cond}}(3 N^2 C + N C^2)$ \\
\addlinespace
$x\sim p$                                & $K_{\mathrm{tot}}(\tfrac{1}{2} N^2 C + N C^2)$ & $\tfrac{1}{m} K_{\mathrm{cond}}(N^2 C + N C^2)$ & $K_{\mathrm{cond}}(2 N^2 C + N C^2)$ \\
\addlinespace
additional params.                        & ---                                            & $\tfrac{1}{m} K_{\mathrm{cond}} \cdot 4 C^2$    & $0$ \\
\bottomrule
\end{tabular}
\end{table}
One caveat comes first, because it changes which term matters. Every Transformer layer costs $\mathcal{O}(N^2 C + N C^2)$: the attention matrix, plus the $\mathrm{Q}/\mathrm{K}/\mathrm{V}$ and feed-forward linears. At the sizes we run, $N$ between $22$ (alanine dipeptide) and $138$ (chignolin), $C$ between $256$ and $512$, the ratio of the two terms is $C/N$, so the linear term dominates by an order of magnitude on the smallest system and still by roughly $4\times$ on the largest. The quadratic-in-$N$ attention cost is therefore \emph{not} the binding constraint at this scale, and a forward pass is close to linear in the number of atoms.

\paragraph{Forward pass (density evaluation).} All variants compute the entire collection $\{(\mu_i,\alpha_i)\}_{i=1}^{N}$ in a single parallel call with a causal attention mask, at a total cost of $\mathcal{O}(K_{\mathrm{tot}} (N^2 C + N C^2))$. Cross-attention conditioning adds one further attention against $x_t$, also $\mathcal{O}(N^2 C + N C^2)$, on every $m$-th layer of the conditioned blocks, placing it on every layer ($m = 1$) roughly doubles the per-layer constant, and larger $m$ scales that addition by $1/m$. Prefix conditioning instead runs the conditioned block over a sequence of length $2N$, costing $\mathcal{O}(K_{\mathrm{cond}}(4 N^2 C + 2 N C^2))$; once the prefix keys and values have been pre-filled, only the $N$ target positions need to be projected and attended, which reduces this to $\mathcal{O}(K_{\mathrm{cond}}(2 N^2 C + N C^2))$. Neither form of conditioning changes the asymptotics, and prefix conditioning adds no parameters beyond the shared $\mathbb{R}^3 \to \mathbb{R}^C$ projection.

\paragraph{Reverse pass (sampling).} Sampling is inherently sequential along the autoregressive axis: producing token $i+1$ requires the predictions $(\mu_{i+1}, \alpha_{i+1})$, which depend on the prefix $z_{1:i+1}$. With KV caching, each step extends the cached keys and values by one token and computes a query of length one against a prefix of length $i+1$, costing $\mathcal{O}((i{+}1) C + C^2)$ per attention layer. Summing over $i = 0, \ldots, N{-}1$ gives the same total $\mathcal{O}(K_{\mathrm{tot}} (N^2 C + N C^2))$ as the forward pass, but with a substantially larger constant due to the $N$ sequential kernel launches.

Conditioning does not change that order, and neither mode lengthens the autoregressive chain: the conditioning is either attended in parallel with the decode step or pre-filled in a single parallel pass, so a draw is $N$ sequential steps whether or not the flow is conditioned. For cross-attention, the projection of $x_t$ and its keys and values are computed once per block, outside the autoregressive loop, and reused for all $N$ steps; each step then costs $\mathcal{O}(N C + C^2)$, contributing $N^2 C$ per conditioned layer against the $N^2 C / 2$ of the causal loop. For prefix conditioning, a single pre-fill of $\mathcal{O}(K_{\mathrm{cond}}(N^2 C + N C^2))$ populates the cache, after which step $i$ attends over $N + i$ keys rather than $i$; the sum is $3 N^2 / 2$ against $N^2 / 2$, i.e. three times the attention term of the bare causal loop, with the $N C^2$ term unchanged. The real price is memory: the KV cache doubles in length, and its prefix half must be retained for as long as its conditioning row is alive. For the deep block of our chignolin model ($20$ layers, $138$ conditioning tokens, $8$ key/value heads of dimension $64$, in bfloat16) that is roughly $5.5$ MiB per conditioning row, which is why the forward and reverse consumers read one shared store rather than each building its own (\Cref{app:conditioning-cache}).

The two conditioning modes therefore cost the same per decode step, $K_{\mathrm{cond}} N^2 C$ of extra attention, and differ only in what surrounds it: cross-attention pays for it in parameters, prefix conditioning pays for it in a pre-fill and a doubled cache, and the pre-fill is amortised over every particle that shares the conditioning row.

Finally, the deep--shallow allocation reshapes where the sequential cost sits. Each of the $L-1$ shallow blocks contributes only two layers, so the reverse pass is dominated by $K_{\mathrm{deep}}$; at a fixed $K_{\mathrm{tot}}$ the StarFlow allocation concentrates the sequential work in the same block that carries the conditioning, which is what makes the ``condition the deep block only'' choice of \Cref{app:conditional-mechanism} nearly free.

\begin{table}[t]
\centering
\caption{Amortising the conditioning. $x_t$ is fixed across every $x_0$ drawn from the same
posterior, so the conditioning state is built once per distinct $x_t$ and read by every
particle that conditions on it, in both directions of the flow. Entries are the cost
\emph{added} to the flow backbone; notation as in \Cref{tab:tarflow-complexity}. Amortising
deletes the projection terms from the per-particle cost, leaving attention against the cached
state; what remains is the state itself, which scales with the number of conditioning rows
held live rather than with the number of particles.}
\label{tab:conditioning-amortisation}
\small
\setlength{\tabcolsep}{4pt}
\resizebox{\linewidth}{!}{%
\begin{tabular}{@{}lcccc@{}}
\toprule
 & \multicolumn{2}{c}{cross-attention} & \multicolumn{2}{c}{prefix} \\
\cmidrule(lr){2-3} \cmidrule(lr){4-5}
 & uncached & amortised & uncached & amortised \\
\midrule
\multicolumn{5}{@{}l}{\emph{Per particle}} \\
\quad density evaluation & $\tfrac{1}{m} K_{\mathrm{cond}}(N^2 C + N C^2)$ & $\tfrac{1}{m} K_{\mathrm{cond}} N^2 C$ & $K_{\mathrm{cond}}(3 N^2 C + N C^2)$ & $K_{\mathrm{cond}} N^2 C$ \\
\quad sampling           & $\tfrac{1}{m} K_{\mathrm{cond}}(N^2 C + N C^2)$ & $\tfrac{1}{m} K_{\mathrm{cond}} N^2 C$ & $K_{\mathrm{cond}}(2 N^2 C + N C^2)$ & $K_{\mathrm{cond}} N^2 C$ \\
\addlinespace
\multicolumn{5}{@{}l}{\emph{Once per distinct $x_t$}} \\
\quad build              & ---                                             & $\tfrac{1}{m} K_{\mathrm{cond}} N C^2$ & ---                                  & $K_{\mathrm{cond}}(N^2 C + N C^2)$ \\
\quad state retained     & ---                                             & $2 K_{\mathrm{cond}} N C$              & ---                                  & $2 K_{\mathrm{cond}} N C$ \\
\bottomrule
\end{tabular}%
}
\end{table}

\section{Experimental details}
\subsection{Datasets and Metrics}
\label{app:datasets}

\paragraph{Datasets.}
We reuse the molecular dynamics datasets of \cite{tan2025scalable}; we briefly summarise them here for completeness and refer the reader to their Appendix D for full details. All datasets consist of all-atom Cartesian configurations sampled from MD trajectories generated with classical force fields in implicit solvent using OpenMM \citep{eastman2017openmm}. For each system, the first $1\mu s$ of trajectory is used for training, the next $0.2\mu s$ for validation, and the remainder for test, except for alanine dipeptide which follows the split of \cite{klein2024transferable}. Some metastable states may be missing from the training set, which is the realistic regime in which we evaluate our models.

The systems considered, in increasing order of size, are:
\begin{itemize}[leftmargin=*]
    \item \textbf{Alanine dipeptide} (Amber ff99SBildn, $300\mathrm{K}$): a single alanine residue capped by an acetyl group and an N-methyl group. We use the dataset and split of \cite{klein2024transferable}, in which the lower-probability $\varphi$ state is intentionally over-represented in training to provide a tractable but biased setting.
    \item \textbf{Trialanine} (Amber 14, $310\mathrm{K}$): three alanine residues without capping groups, generated as in \cite{klein2023timewarp}.
    \item \textbf{Alanine tetrapeptide} (Amber ff99SBildn, $300\mathrm{K}$): three alanines with acetyl and N-methyl caps. We use the system setup of \cite{dibak2021temperature} but treat all bonds as flexible (the original setup constrained hydrogen bonds for use with internal-coordinate flows).
    \item \textbf{Hexa-alanine} (Amber 14, $310\mathrm{K}$): six alanine residues without capping groups.
    \item \textbf{Chignolin} (Amber 14, $310\mathrm{K}$): the 10-residue protein \texttt{GYDPETGTWG}, $138$ atoms, simulated for $4.5\mu s$. Notable for forming a $\beta$-hairpin in solvent \citep{honda200410}.
\end{itemize}
All MD simulations use a $1\mathrm{fs}$ integration time step. We use the same train/val/test trajectory splits, force fields, and evaluation subsets ($10^4$ uniformly strided samples from the held-out portion) as \cite{tan2025scalable} to ensure direct comparability with their reported numbers.

\paragraph{Target energy evaluation.} While the datasets themselves are generated by OpenMM MD, we do not call OpenMM to evaluate the target energy and its gradient at inference time: an OpenMM \texttt{Context} evaluates one configuration at a time with no batched interface, making it the dominant cost of importance-weighted and MCMC-refined evaluation. We instead reimplement the same potential (bonded, Lennard-Jones/Coulomb, and Generalized-Born terms, OBC1 or OBC2 depending on the system) as a batched PyTorch module whose parameters are read directly from the \texttt{openmm.System}, so OpenMM remains the sole authority on the force field, atom typing, and topology; only the summation, evaluated as vectorized tensor operations with exact autograd gradients in float32, is reimplemented. Validated against OpenMM's double-precision Reference platform on $3.4\times10^{6}$ configurations spanning all splits plus perturbed and clashing geometries, the worst-case deviation below $10^{2}\,k_BT$ is $1.2\times10^{-5}\,k_BT$ in float64 and $7.7\times10^{-4}\,k_BT$ in float32 (more accurate than OpenMM's own CPU platform), whose internal single-precision nonbonded and GB evaluation deviates from the Reference platform by up to $3.3\times10^{-4}\,k_BT$ on typical frames and $126\,k_BT$ near atomic clashes. This batched evaluation is $30$--$400\times$ faster per configuration than OpenMM's serial loop and, unlike OpenMM's hard failure on non-finite energies, returns a large but finite, differentiable energy at clashing configurations, avoiding sampler crashes during exploration.
  
\paragraph{Metrics.}
For computational efficiency we subsample $10^4$ reference configurations from the held-out trajectory to serve as ground truth, and likewise restrict generated samples to a random subset of size $10^4$ when more are produced. We quantify distributional similarity using empirical Wasserstein-2 distances between generated and reference samples. Given two empirical distributions $p = \tfrac{1}{n}\sum_i \delta_{x_i}$ and $q = \tfrac{1}{m}\sum_j \delta_{y_j}$,
\begin{align*}
    W_2(p, q) = \min_{\pi \in \Pi(p, q)} \sqrt{\sum_{i,j} \pi_{ij}  c(x_i, y_j)^2},
\end{align*}
where the optimal coupling is computed with the POT library \citep{flamary2021pot}. We use two cost functions:
\begin{itemize}[leftmargin=*]
    \item \textbf{Energy distance ($\mathcal{E}$-$\mathcal{W}_2$):} the squared cost is $c_{\mathcal{E}}(x, y)^2 = |E(x) - E(y)|^2$ where $E$ is the force-field potential energy. This metric is highly sensitive to fine-grained structural details (bond lengths, angles, clashes).
    \item \textbf{Dihedral torus distance ($\mathbb{T}$-$\mathcal{W}_2$):} for a peptide with $L$ residues we extract the backbone dihedral angles $\mathrm{Dihedrals}(x) = (\varphi_1, \psi_1, \ldots, \varphi_{L-1}, \psi_{L-1})$, and use the wrapped squared cost on the torus
    \begin{align*}
    c_{\mathbb{T}}(x, y)^2 = \sum_{i=1}^{2(L-1)} \big[(\mathrm{Dihedrals}(x)_i - \mathrm{Dihedrals}(y)_i + \pi) \bmod 2\pi - \pi\big]^2.
    \end{align*}
    This captures macrostructural information such as metastable-state occupancy and is robust to fine-grained noise.
\end{itemize}

\subsection{Hyperparameters}
\label{sec:hyperparameters}

\subsubsection{Training}

The unconditional flows (TarFlow/StarFlow) and \ourmethod{} share one training recipe: AdamW ($\beta_1=0.9$, $\beta_2=0.95$) under bf16 mixed precision, gradient-norm clipping at $1.0$, and an EMA of the weights with decay $0.999$. Each model trains for 1000 epochs (195k steps) at batch size 512 over the $10^5$ training frames of the corresponding single-peptide system, on a single GPU, or with standard data-parallel DDP when more than one is used.

\paragraph{Unconditional TarFlow/StarFlow.} Following \cite{tan2025scalable}, the flows are trained by maximum likelihood with learning rate $10^{-4}$ under a one-cycle schedule (5\% warmup, division factor 500, final division factor 1) and weight decay $4\times10^{-4}$. \Cref{tab:hparam_tarflow_arch} gives the per-system TarFlow architecture, and \Cref{tab:hparam_starflow_arch} the alternative StarFlow architecture.

\begin{table}[t]
    \centering
    \caption{Unconditional TarFlow architecture.}
    \label{tab:hparam_tarflow_arch}
    \small
    \begin{tabular}{lccccc}
        \toprule
         & ALA-2 & ALA-3 & ALA-4 & ALA-6 & Chignolin \\
        \midrule
        Channels          & 256 & 256 & 256 & 384 & 512 \\
        Blocks            & 4   & 6   & 6   & 6   & 8   \\
        Layers / block    & 4   & 6   & 6   & 6   & 8   \\
        \bottomrule
    \end{tabular}
\end{table}

\begin{table}[t]
    \centering
    \caption{Unconditional StarFlow architecture. Every variant (here and in \Cref{tab:hparam_nfm_arch_sweep}) follows the pattern layers/block $=[2,2,2,2,n,n]$: the first four blocks are fixed at 2 layers, and only the last two (``top-block depth'' $n$) vary.}
    \label{tab:hparam_starflow_arch}
    \small
    \begin{tabular}{lccccc}
        \toprule
         & ALA-2 & ALA-3 & ALA-4 & ALA-6 & Chignolin \\
        \midrule
        Channels                & 256 & 256 & 256 & 384 & 512 \\
        Blocks                  & 6   & 6   & 6   & 6   & 6   \\
        Head dim                & 64  & 64  & 64  & 64  & 64  \\
        Top-block depth $n$     & 4   & 14  & 14  & 14  & 28  \\
        \bottomrule
    \end{tabular}
\end{table}

\paragraph{\ourmethod{}.} \ourmethod{} trains with the loss of \Cref{subsec:method:training}, under an EDM noise schedule ($\sigma_{\min}=10^{-3}$, $\sigma_{\max}=40$) \citep{karras2022elucidating} and uniform weighting of the loss across noise levels. Every run uses the same optimizer recipe as the unconditional flows: learning rate $10^{-4}$, one-cycle schedule, weight decay $4\times10^{-4}$, 1000 epochs / 195k steps at batch 512.

The conditional backbone is one of six variants, each defined relative to the corresponding unconditional preset (Tables~\ref{tab:hparam_tarflow_arch} and~\ref{tab:hparam_starflow_arch}):
\begin{itemize}
    \item \textbf{T1} -- a shallow TarFlow backbone, conditioned by cross-attention injected every 4th block;
    \item \textbf{K1} -- the same shallow backbone as T1, but with cross-attention injected at every block;
    \item \textbf{W1} -- the same shallow backbone as T1, widened rather than left narrow (channels $+64$);
    \item \textbf{S1} -- a shallow-top-block StarFlow, conditioned through a single prefix token rather than cross-attention;
    \item \textbf{S1k} -- the same shallow-top-block StarFlow as S1, but conditioned by cross-attention rather than a prefix token;
    \item \textbf{S2} -- StarFlow with no depth cut at all, conditioned by cross-attention.
\end{itemize}
Cross-attention stride is $\min(4, l)$, where $l$ is the variant's (top-block) depth, so it falls back from every 4th block to every block on the shallowest variants. The three TarFlow variants (T1, K1, W1) fail to train on chignolin and are not reported there; only S1, S1k, and S2 are available for that system. \Cref{tab:hparam_nfm_arch_sweep} gives the resulting shape of every variant on every system.

For each system, we choose the architecture variant and number of steps ($K=5$ or $K=8$) according to the validation performance of \ourmethod{}-IS. For \ourmethod{}-MIS, selection instead uses \ourmethod{}-IS performance, restricted to $K=8$ since the mixture is only efficient at that setting. \Cref{tab:hparam_nfm_selected} reports both selected configurations, together with the wall-clock training time of the corresponding checkpoint.

\begin{table*}[t]
    \centering
    \caption{\ourmethod{} architecture sweep. TarFlow variants report channels / blocks $\times$ layers-per-block / conditioning stride; StarFlow variants report channels / top-block depth $n$ (layers/block $=[2,2,2,2,n,n]$) / conditioning stride. ``--'' marks variants that failed to train.}
    \label{tab:hparam_nfm_arch_sweep}
    \small
    \begin{tabular}{llccccc}
        \toprule
        Variant & Conditioning & ALA-2 & ALA-3 & ALA-4 & ALA-6 & Chignolin \\
        \midrule
        T1  & cross-attn (stride 3--4) & 256/4$\times$3/3 & 256/6$\times$4/4 & 256/6$\times$4/4 & 384/6$\times$4/4 & -- \\
        K1  & cross-attn (stride 1)    & 256/4$\times$3/1 & 256/6$\times$4/1 & 256/6$\times$4/1 & 384/6$\times$4/1 & -- \\
        W1  & cross-attn (stride 3--4) & 320/4$\times$3/3 & 320/6$\times$4/4 & 320/6$\times$4/4 & 448/6$\times$4/4 & -- \\
        S1  & prefix                   & 256/3            & 256/10           & 256/10           & 384/10           & 512/20 \\
        S1k & cross-attn (stride 3--4) & 256/3/3          & 256/10/4         & 256/10/4         & 384/10/4         & 512/20/4 \\
        S2  & cross-attn (stride 4)    & 256/4/4          & 256/14/4         & 256/14/4         & 384/14/4         & 512/28/4 \\
        \bottomrule
    \end{tabular}
\end{table*}

\begin{table*}[t]
    \centering
    \caption{Selected NFM checkpoint per system, with its architecture (same notation as \Cref{tab:hparam_nfm_arch_sweep}: TarFlow variants as channels / blocks $\times$ layers-per-block / conditioning stride, StarFlow variants as channels / top-block depth / conditioning stride or ``prefix''). $K$ is the number of inference-time levels; training time is the wall-clock of the 1000-epoch run of the checkpoint actually loaded.}
    \label{tab:hparam_nfm_selected}
    \small
    \begin{tabular}{lccccc}
        \toprule
         & ALA-2 & ALA-3 & ALA-4 & ALA-6 & Chignolin \\
        \midrule
        \multicolumn{6}{l}{\ourmethod{}-IS} \\
        Variant             & W1               & W1               & T1               & S2               & S1              \\
        Architecture    & 320/4$\times$3/3 & 320/6$\times$4/4 & 256/6$\times$4/4 & 384/14/4         & 512/20 (prefix) \\
        $K$             & 5   & 8   & 8   & 5    & 5    \\
        Training (h)    & 4.3 & 6.0 & 5.3 & 12.8 & 21.7 \\
        \midrule
        \multicolumn{6}{l}{\ourmethod{}-MIS} \\
        Variant             & W1               & W1               & T1               & S1k              & S1              \\
        Architecture    & 320/4$\times$3/3 & 320/6$\times$4/4 & 256/6$\times$4/4 & 384/10/4         & 512/20 (prefix) \\
        $K$             & 8  & 8  & 8  & 8   & 8  \\
        Training (h)    & 4.3 & 6.0 & 5.3 & 10.2 & 21.7 \\
        \bottomrule
    \end{tabular}
\end{table*}

\paragraph{Velocity teacher.} Before turning to the flow-map baselines (FALCON, FALCON-A, F2D2, SCALLOP), we describe the velocity field they all warm-start from and which we also evaluate as a standalone flow-matching baseline in the style of ECNF++ \citep{tan2025scalable}. A velocity field $v_\phi$ is trained by flow matching on the linear interpolant $x_t = (1-t)\,x_0 + t\,\varepsilon$, with $t \in [10^{-4}, 1-10^{-4}]$: $t=0$ is data and $t=1$ is the prior. The network is the DiT backbone \citep{ying2021graphormer, peebles2023scalable} of \Cref{tab:hparam_dit_arch}. We use the antithetic variance reduction trick for the loss \citep{albergo2025interpolant}. One teacher is trained per system, shared by all four flow-map methods and both of their pretraining objectives (below), and is always read at its EMA weights. Unlike every phase that reads it, the teacher trains against an \emph{ambient} $\mathcal{N}(0, I)$ prior with centre-of-mass augmentation on the data. Its own network removes the mean from input and output, so what it supplies downstream is the mean-free component of the velocity. Its optimizer settings, step count, and per-step cost are given alongside the flow-map phases that follow it, in \Cref{tab:hparam_flowmap_optim} (``Velocity, phase 1''), \Cref{tab:hparam_flowmap_steps}, and \Cref{tab:hparam_flowmap_stepcost} respectively, since its own recorded wall-clock is what the flow-map training budget below is computed from.

\begin{table}[t]
    \centering
    \caption{DiT backbone for the phase-1 velocity teacher.}
    \label{tab:hparam_dit_arch}
    \small
    \begin{tabular}{lccccc}
        \toprule
         & ALA-2 & ALA-3 & ALA-4 & ALA-6 & Chignolin \\
        \midrule
        $d_{\text{model}}$        & 192 & 192 & 192 & 192 & 384 \\
        Layers                    & 6   & 6   & 6   & 6   & 12  \\
        Heads                     & 6   & 6   & 6   & 6   & 12  \\
        MLP ratio                 & 4.0 & 4.0 & 4.0 & 4.0 & 4.0 \\
        Time-embedding dim        & 64  & 64  & 64  & 64  & 128 \\
        \bottomrule
    \end{tabular}
\end{table}

\paragraph{FALCON, FALCON-A, F2D2 and SCALLOP.} We follow the original papers for the architecture and optimizer hyperparameters of these four flow-map baselines (FALCON/FALCON-A: \citet{rehman2025falconfewstepaccuratelikelihoods}; F2D2: \citet{ai2026jointdistillationfastlikelihood}; SCALLOP: \citet{ouyang2026fewstep}). All four continue from the velocity teacher described above, through a two-phase pipeline : flow-map pretraining (Phase 1.5) and the distillation objective itself (Phase 2). Each phase frozen before the next one reads it. The original FALCON paper does not use a pretrain-then-distill procedure; we adopt it uniformly across all four baselines for consistency with F2D2 and SCALLOP, and since it starts from an independently pretrained velocity field, it can only match or improve on FALCON's original recipe.

The flow map itself extends the teacher's time convention: $X(x_t,t,s) = x_t + (s-t)\,u_\theta(x_t,t,s)$ with $s < t$. All three papers run the opposite direction; each identity below is the reversal of the printed one, and each is invariant under it.

\textit{Phase 1.5 --- flow-map pretraining.} One self-consistency identity for the average-velocity map $u_\theta(x_t,t,s)$, trained against the frozen teacher on the same single-head backbone Phase~2 uses. Two objectives, neither of them new mathematics:
\begin{itemize}
    \item \textbf{ESD} (Eulerian), the MeanFlow identity $u_\theta = v + (s-t)\,(v \cdot \nabla_x u_\theta + \partial_t u_\theta)$, which is FALCON's average-velocity term standing alone;
    \item \textbf{LSD} (Lagrangian), $u_\theta(x_t,t,s) + (s-t)\,\partial_s u_\theta(x_t,t,s) = \mathrm{sg}[u_\theta](x_s,s,s)$ at the mapped point $x_s$, which is F2D2's and SCALLOP's shared self-distillation term with no divergence head attached.
\end{itemize}
We encourage to look at \cite{boffi2025buildconsistencymodellearning} for more details. ESD warm-starts FALCON and FALCON-A, LSD warm-starts F2D2 and SCALLOP, so there are two pretraining runs per system rather than four. The student loads the pretraining checkpoint's EMA weights only: fresh optimizer, fresh schedule, step counter back to zero. For F2D2/SCALLOP the single-head parameters are remapped onto the dual-head layout (blocks $\to$ trunk, head $\to$ velocity head) and the divergence head is left at its zero initialization.

\textit{Phase 2 --- distillation.} The four methods proper, on the single-head backbone (FALCON, FALCON-A) or the dual-head backbone with its separate divergence-readout head (F2D2, SCALLOP); \Cref{tab:hparam_flowmap_arch} gives the sizes. \Cref{tab:hparam_flowmap_optim} gives the optimizer settings of all three phases and Tables~\ref{tab:hparam_flowmap_loss} and~\ref{tab:hparam_flowmap_loss_lsd} the per-objective loss hyperparameters. The teacher enters phase 2 in three places: it is the regression target of the instantaneous/diagonal term, the tangent of the average-velocity JVP (FALCON, ESD), and the field whose divergence F2D2 estimates.

\textit{Common to every phase.} The whole problem is posed in the mean-free subspace so the map transports the centroid by the identity. Both times reach the network through a Fourier embedding of \emph{raw} $t$. F2D2, SCALLOP and LSD condition on the anchor and the step $s-t$; FALCON, FALCON-A and ESD on the two times independently, as MeanFlow \citep{geng2025mean} does. The choice has to agree across a warm start, which is why the pretraining rows match the students they initialize. Training runs under bf16 mixed precision but the log-density accumulator is in fp32. Each phase keeps an EMA of decay $0.999$ and skips the optimizer step on a non-finite loss or gradient norm. Chignolin trains 4-way DDP at the same global batch of 1024, every other system on a single GPU.

\paragraph{Training budget.} \citet[Appendix~C.1]{ouyang2026fewstep} reports a budget of $4{\times}10^6$ update steps at batch 1000, but this figure is inconsistent with the paper's own wall-clock numbers: at our measured per-step time on our GPU class, the reported step count would take an order of magnitude longer than the $\sim$7--8\,h the paper reports for its two phases combined. We therefore size the pipeline by wall-clock instead: the whole of it (all three phases) is given the recorded training time of the \ourmethod{} actually selected for that system (\Cref{tab:hparam_nfm_selected}). \Cref{tab:hparam_flowmap_stepcost} gives all three quantities and \Cref{tab:hparam_flowmap_steps} the resulting step counts. Matching time is not matching capacity or data: these backbones are 4.1M (single-head) and 6.1M (dual-head) parameters on the ALA systems against 18--87M for the \ourmethod{} flows, so a flow-map step is much cheaper and a time-matched pipeline takes many more of them. Summed over the three phases, the baselines see 2.7--9.0$\times$ the number of passes over the training set that \ourmethod{}'s own 1000 epochs give.

\paragraph{Departures from a literal reading of the papers.} Of the three, only \citet{ai2026jointdistillationfastlikelihood} released code, and the molecular code the F2D2/SCALLOP comparison is reproduced from isn't available. Where code and paper disagree, we follow the code; where a value is left unstated, we flag our choice as unswept. The departures below fall into three kinds: the released code overriding a paper's written equation, a stability or correctness fix we made and verified against a diagnostic (synthetic or molecular), and a value neither paper states at all.

\textit{Code overrides the papers' prose.}
\begin{itemize}
    \item \textbf{Self-distillation stop-gradient (F2D2, SCALLOP).} Both papers write the correction term with the whole derivative detached. The released code detaches only the divergence-head correction and lets the velocity term's derivative backpropagate. The literal, fully-detached version is unstable in our hands: on our synthetic diagnostic, the weight norm drifts unboundedly until a batch produces $|x_s|\!\sim\!10^{10}$ and training diverges.
    \item \textbf{Time-pair sampling.} SCALLOP's prose restricts $(s,t)$ to the upper triangle, but the F2D2 code samples both independently and uniformly over the full square and its own inverse sampler at evaluation time relies on having seen the reversed ordering during training. We sample the full square for all six objectives, so every pretraining checkpoint covers the same measure as the student that reads it.
    \item \textbf{Divergence head and target scale (F2D2).} SCALLOP's stated contribution is a vectorized divergence head (F2D2's released code has a scalar one). Vectorizing F2D2 looks at first like a free upgrade so we did it. Targets are then scaled by the reference's own head-shape rule ($\div 1$ vectorized, $\div d$ scalar) rather than the factor of 10 transcribed from the papers' 2D toy, which has no molecular support: their other published anchor is ${\sim}10^4$ at $d=12288$, four orders away, with the peptides ($d=66$--$414$) in between.
\end{itemize}

\textit{Stability and correctness fixes.} They were all obtained by doing ablations on synthetic or molecular targets.
\begin{itemize}
    \item \textbf{Unsigned average-velocity head (FALCON).} The paper's $u_\theta = \mathrm{sign}(t-s)\,h_\theta$ forces $h_\theta$ to jump by $2v$ at the diagonal, so only the branch the tie-forcing sits on ever learns the correct side; we leave the head unsigned.
    \item \textbf{Stopped average-velocity derivative.} Left live, as in the reference, the residual becomes second-order in the network for both FALCON's $L_{\mathrm{avg}}$ and the ESD identity, since prediction and target come out of the same JVP call. We stop it.
    \item \textbf{Floored, projected SCALLOP divergence target.} $1/\sigma_t$ with $\sigma_t=t$ reaches $10^4$ near the data end, so a single sample would dominate the batch; we floor it at $10^{-3}$, at the cost of a small bias over the innermost decade of times, and state the target in the mean-free subspace rather than the ambient one.
    \item \textbf{FALCON-A's sub-jump weights.} The paper prints the two shares swapped relative to the identity's own derivative; they only agree at the paper's fixed $\lambda=0.5$, which is why the error never surfaces in its results.
    \item \textbf{SCALLOP's loss-uncertainty weighting.} Implemented but left off: as specified, $\exp(-\psi)L+\psi$ is unbounded below wherever the (self-distillation) loss it weights can reach zero, stationary at $\psi=\log L$ with value $1+\log L$. Clamping $\psi$ bounds the weight but not the outcome; we found no stated fix and record this as an open parity gap rather than paper over it.
\end{itemize}

\textit{Left unstated by both papers.}
\begin{itemize}
    \item \textbf{Diagonal batch share.} $0.75$ for FALCON/ESD (MeanFlow's \citep{geng2025mean} own ratio, the only published value for an average-velocity objective) and $0.5$ for the Lagrangian family (the complement of F2D2's stated self-distillation split). Never swept.
    \item \textbf{Chignolin's flow-map backbone and NFE.} Neither paper reports chignolin; both are extrapolated with the same channel/block-doubling rule already used for the DiT presets (\Cref{tab:hparam_dit_arch}), giving the backbone in \Cref{tab:hparam_flowmap_arch} and the step count in \Cref{tab:hparam_flowmap_nfe}.
\end{itemize}

\begin{table}[t]
    \centering
    \caption{Flow-map backbone architecture. The single-head network serves the ESD pretraining objective, FALCON and FALCON-A; the dual-head network serves LSD's students, F2D2 and SCALLOP, and counts its divergence-readout blocks only (the velocity reads straight off the shared trunk). Both carry a learned atom-type embedding, a bond-distance attention bias and centre-of-mass removal.}
    \label{tab:hparam_flowmap_arch}
    \small
    \begin{tabular}{lccccc}
        \toprule
         & ALA-2 & ALA-3 & ALA-4 & ALA-6 & Chignolin \\
        \midrule
        \multicolumn{6}{l}{\textit{Single-head (ESD, FALCON, FALCON-A)}} \\
        $d_{\text{model}}$     & 192 & 192 & 192 & 192 & 384 \\
        Blocks                 & 6   & 6   & 6   & 6   & 12  \\
        Heads                  & 6   & 6   & 6   & 6   & 12  \\
        Parameters             & \multicolumn{4}{c}{4.1M} & 32M \\
        \midrule
        \multicolumn{6}{l}{\textit{Dual-head (LSD, F2D2, SCALLOP)}} \\
        $d_{\text{model}}$     & 192 & 192 & 192 & 192 & 384 \\
        Shared blocks          & 6   & 6   & 6   & 6   & 12  \\
        Divergence-head blocks & 3   & 3   & 3   & 3   & 6   \\
        Heads                  & 6   & 6   & 6   & 6   & 12  \\
        Parameters             & \multicolumn{4}{c}{6.1M} & 48M \\
        \midrule
        MLP ratio              & 4.0 & 4.0 & 4.0 & 4.0 & 4.0 \\
        Time-embedding dim     & 64  & 64  & 64  & 64  & 128 \\
        \bottomrule
    \end{tabular}
\end{table}

\begin{table}[t]
    \centering
    \caption{Flow-map optimizer hyperparameters, per phase. Each pretraining objective takes the optimizer settings of the phase it initializes, so a warm start lands its student in a basin that student's own optimizer is already tuned for. All three columns use $\beta = (0.9, 0.999)$, $\varepsilon = 10^{-8}$, an EMA of decay $0.999$ and batch size 1024. ``Flat-then-sqrt'' holds the rate at its maximum for an absolute number of steps and then decays as $1/\sqrt{\text{step}}$; the onset is 10,000 steps on ALA-2 and ALA-4 and 20,000 elsewhere, read off the reference's molecular configurations. RAdam's variance rectification is why that schedule carries no warmup.}
    \label{tab:hparam_flowmap_optim}
    \small
    \begin{tabular}{lccc}
        \toprule
         & FM & ESD / FALCON / FALCON-A & LSD / F2D2 / SCALLOP \\
        \midrule
        Optimizer         & AdamW & AdamW & RAdam \\
        Learning rate     & $10^{-4}$ & $5{\times}10^{-4}$ & $10^{-4}$ \\
        Weight decay      & $10^{-4}$ & $10^{-4}$ & 0 \\
        Schedule          & sqrt-anneal & cosine & flat-then-sqrt \\
        Warmup            & 2.5\% & 5\% & none \\
        Grad-norm clip    & 10.0 & 10.0 & 10.0 \\
        \bottomrule
    \end{tabular}
\end{table}

\begin{table}[t]
    \centering
    \caption{ESD / FALCON / FALCON-A loss hyperparameters, the single-head family. ``Diagonal share'' is the fraction of each batch drawn at $s=t$ and given to the instantaneous term, its complement carrying the self-consistency term; it is an exact-count split of every batch rather than a Bernoulli mask. FALCON-A is the exception at 1.0: forcing $s=t$ degenerates its additivity target to $0=0$ rather than to flow matching, so it keeps an explicit instantaneous term and evaluates both on every sample. Values marked $\dagger$ are not stated numerically by any of the three papers and were never swept. ``--'' marks a term the objective does not have.}
    \label{tab:hparam_flowmap_loss}
    \small
    \begin{tabular}{lccc}
        \toprule
         & ESD & FALCON & FALCON-A \\
        \midrule
        Self-consistency weight $\lambda_{\text{avg}}$ & 1.0$^\dagger$ & 1.0$^\dagger$ & 1.0$^\dagger$ \\
        Cycle-consistency weight $\lambda_r$           & --             & 10.0           & 10.0           \\
        Split point $\lambda$                          & --             & --             & 0.5            \\
        Diagonal share                                 & 0.75$^\dagger$ & 0.75$^\dagger$ & 1.0            \\
        Time pair                                      & \multicolumn{3}{c}{unordered square} \\
        Time conditioning                               & pair & pair & pair \\
        Average-velocity derivative                    & stopped & stopped & -- \\
        \bottomrule
    \end{tabular}
\end{table}

\begin{table}[t]
    \centering
    \caption{LSD / F2D2 / SCALLOP loss hyperparameters, the dual-head family, which alone carries a divergence head. Same conventions as \Cref{tab:hparam_flowmap_loss}: $\dagger$ marks values neither paper states numerically, ``--'' marks a term the objective does not have.}
    \label{tab:hparam_flowmap_loss_lsd}
    \small
    \begin{tabular}{lccc}
        \toprule
         & LSD & F2D2 & SCALLOP \\
        \midrule
        Self-consistency weight $\lambda_{\text{avg}}$ & 1.0$^\dagger$ & 1.0 & 1.0 \\
        Diagonal share                                 & 0.5           & 0.5 & 0.5 \\
        Time pair                                      & \multicolumn{3}{c}{unordered square} \\
        Time conditioning                               & anchor + step & anchor + step & anchor + step \\
        Average-velocity derivative                    & live & live & live \\
        Divergence derivative                          & --   & stopped & stopped \\
        Divergence head                                & --   & vectorized & vectorized \\
        Divergence target scale                        & --   & 1.0 & 1.0 \\
        Divergence estimator                           & --   & Hutchinson, 1 probe$^\dagger$ & closed-form \\
        CDM-v time weighting                           & --   & --  & linear \\
        CDM-v denominator floor                        & --   & --  & $10^{-3}$ \\
        \bottomrule
    \end{tabular}
\end{table}

\begin{table}[t]
    \centering
    \caption{Training steps per system and phase, at batch size 1024. Phase 1 is one velocity teacher per system, shared by all four methods; phase 1.5 is one pretraining run per objective, shared by the two students that read it (ESD $\to$ FALCON, FALCON-A; LSD $\to$ F2D2, SCALLOP). Each entry is what the reported checkpoint was actually trained for.}
    \label{tab:hparam_flowmap_steps}
    \small
    \begin{tabular}{lccccc}
        \toprule
         & ALA-2 & ALA-3 & ALA-4 & ALA-6 & Chignolin \\
        \midrule
        \multicolumn{6}{l}{\textit{Phase 1 --- velocity teacher}} \\
        Velocity           & 177{,}510 & 162{,}378 & 139{,}389 & 206{,}125 & 117{,}661 \\
        \midrule
        \multicolumn{6}{l}{\textit{Phase 1.5 --- flow-map pretraining}} \\
        ESD                & 130{,}768 & 226{,}158 & 208{,}892 & 362{,}830 & 293{,}385 \\
        LSD                &  68{,}937 &  70{,}578 &  55{,}023 &  77{,}465 & 270{,}018 \\
        \midrule
        \multicolumn{6}{l}{\textit{Phase 2 --- distillation}} \\
        FALCON             & 111{,}321 & 179{,}349 & 163{,}935 & 304{,}518 & 258{,}576 \\
        FALCON-A           & 166{,}981 & 171{,}378 & 147{,}863 & 215{,}861 & 161{,}439 \\
        F2D2               &  76{,}066 &  77{,}869 &  66{,}004 &  97{,}224 & 119{,}655 \\
        SCALLOP            & 109{,}927 & 104{,}984 &  91{,}525 & 137{,}399 & 189{,}516 \\
        \bottomrule
    \end{tabular}
\end{table}

\begin{table}[t]
    \centering
    \caption{Measured per-step cost of every flow-map objective, and the two wall-clock quantities the step budget is solved against. Costs are seconds per optimizer step at batch size 1024 on one H100, each measured at the compilation mode its production run uses; chignolin is measured on its own 4-rank configuration at 256 rows per rank, so its column is directly comparable to the others per step.}
    \label{tab:hparam_flowmap_stepcost}
    \small
    \begin{tabular}{lccccc}
        \toprule
         & ALA-2 & ALA-3 & ALA-4 & ALA-6 & Chignolin \\
        \midrule
        \multicolumn{6}{l}{\textit{Seconds per step at batch 1024 (one H100)}} \\
        FM  & 0.0383 & 0.0380 & 0.0416 & 0.0581 & 0.1453 \\
        ESD     & 0.0332 & 0.0341 & 0.0361 & 0.0470 & 0.1040 \\
        LSD     & 0.0284 & 0.0320 & 0.0339 & 0.0472 & 0.1130 \\
        FALCON              & 0.0390 & 0.0430 & 0.0460 & 0.0560 & 0.1180 \\
        FALCON-A            & 0.0260 & 0.0450 & 0.0510 & 0.0790 & 0.1890 \\
        F2D2                & 0.0970 & 0.1020 & 0.1060 & 0.1230 & 0.2550 \\
        SCALLOP             & 0.0410 & 0.0470 & 0.0500 & 0.0690 & 0.1610 \\
        \bottomrule
    \end{tabular}
\end{table}

\subsubsection{Inference}
\label{subsec:hparam_inference}

\paragraph{\ourmethod{}.} Both \ourmethod{}-IS and \ourmethod{}-MIS sample a checkpoint from \Cref{tab:hparam_nfm_selected}, using the time grid of \citep{karras2022elucidating} with $\rho=7$, run in bf16 precision with $N=10{,}000$ particles.

\textit{Shared initialization of multiple trajectories.} At terminal noise, $X_T$ and $X_0$ are nearly independent, so $p_{0|T}\approx p_0$: the first denoising kernel, from $t_K$ to $t_{K-1}$, is essentially independent of the value of $x_T$. We exploit this when drawing $N$ trajectories in parallel: rather than sampling $N$ independent terminal states $x_T^n\sim\mathcal{N}(0,\sigma_T^2I)$ and running $N$ separate SNIS corrections \eqref{eq:denoising_snis} with $M$ particles each, we draw a single $x_T$ and build one SNIS approximation with $NM$ candidates, which we then reuse by sampling from it $N$ times. Since $p_{0|T}\approx p_0$ means the denoising posterior at this step is as hard to approximate as the target itself, pooling all $NM$ candidates concentrates statistical power precisely where the conditional flow $q^\theta_{0|t}$ faces its hardest task. This sharing introduces a slight correlation between the $N$ trajectories, which were otherwise independent; the correlation stays weak, however, precisely because the denoising kernel depends only weakly on $x_T$ at this step.

\textit{Candidate budget.} For \ourmethod{}-IS, the candidate count $M$ is solved per (system, variant, $K$) for the largest value whose measured wall-clock matches the corresponding unconditional baseline's own wall-clock; the selected checkpoints use the values in \Cref{tab:hparam_nfm_is_budget}. For \ourmethod{}-MIS, the architecture is fixed to the single variant with the best validation \ourmethod{}-IS score at $K=8$ per system (\Cref{tab:hparam_nfm_selected}, MIS row); the values of $M$ and the mixture size are given in \Cref{tab:hparam_nfm_mis_budget}.

\textit{Refinement.} At every level, each of the $N$ particles is refined in-loop by 512 steps of MAExL at a per-system calibrated initial step size. After the last level, the returned samples are polished by a post-hoc MALA chain targeting the clean Boltzmann density $p(x_0) \propto \exp(-\mathcal{E}(x_0))$ (geometric step-size schedule, per-system calibrated initial step size $\tau$) for 128 steps. \Cref{tab:hparam_nfm_stepsizes} gives the initial step sizes used for MAExL and MALA.

\begin{table}[t]
    \centering
    \caption{\ourmethod{}-IS candidate budget $M$ for the selected (variant, $K$) of \Cref{tab:hparam_nfm_selected}.}
    \label{tab:hparam_nfm_is_budget}
    \small
    \begin{tabular}{lccccc}
        \toprule
         & ALA-2 & ALA-3 & ALA-4 & ALA-6 & Chignolin \\
        \midrule
        $M$ & 14 & 12 & 14 & 20 & 20 \\
        \bottomrule
    \end{tabular}
\end{table}

\begin{table}[t]
    \centering
    \caption{\ourmethod{}-MIS selected mixture configuration. $C$ and $M$ are the validation-selected mixture size and per-particle candidate count in every system.}
    \label{tab:hparam_nfm_mis_budget}
    \small
    \begin{tabular}{lccccc}
        \toprule
         & ALA-2 & ALA-3 & ALA-4 & ALA-6 & Chignolin \\
        \midrule
        $C$ & 4 & 4  & 2  & 3  & 2  \\
        $M$                  & 2 & 3  & 7  & 5  & 6  \\
        \bottomrule
    \end{tabular}
\end{table}

\paragraph{Velocity teacher.}
\label{subsec:hparam_inference_velocity}
The phase-1 velocity field is also evaluated on its own, as a flow-matching baseline in the style of ECNF++ \citep{tan2025scalable}, on the same protocol as everything else so its row sits beside theirs. Samples are the probability-flow ODE's push-forward of an ambient $\mathcal{N}(0, I)$ prior, integrated from the prior end down to the data end by adaptive Dormand--Prince at $\texttt{rtol} = \texttt{atol} = 10^{-4}$ with the data end inset at $t_{\min} = 10^{-5}$, where the linear schedule is singular. That comes to roughly 130--165 function evaluations per solve. The log-density is the instantaneous change of variables accumulated along the same trajectory, with the \emph{exact} divergence: one JVP per input coordinate, $d = 66$--$414$. There is no probe count to tune. This run is fp32 rather than bf16: the log-determinant is an integral accumulated over ${\sim}160$ solver steps and is the whole point of the evaluation. 
We use $N = 10^4$ proposals per seed over 3 seeds then post-hoc MALA chain as every other method to match the energy evaluation budget (see \Cref{tab:posthoc_mala_steps}). $10^4$ rather than the $10^6$ the flow-map and TarFlow cells use is what the exact likelihood affords .

\paragraph{FALCON, FALCON-A, F2D2 and SCALLOP.} All four sample the phase-2 checkpoint at its EMA weights, at a fixed per-system number of flow-map steps (\Cref{tab:hparam_flowmap_nfe}), in bf16 with the prior, the log-determinant, and the log-density accumulator held in fp32. FALCON and FALCON-A step on FALCON's own EDM-shaped grid ($\sigma_{\min}=10^{-3}$, $\sigma_{\max}=1$, $\rho=7$, mapped affinely onto $[10^{-4}, 1-10^{-4}]$, which puts the fine steps at the data end); F2D2 and SCALLOP on the uniform grid in $t$ that their paper states.

\textit{Density evaluation.} The two families read their density differently, and the difference is what the evaluation costs. FALCON and FALCON-A have no divergence head, so $\log q$ is a sum of per-step Jacobian log-determinants,  a dense $d \times d$ Jacobian per step, evaluated in fp32 whatever precision the network itself ran in, since a determinant amplifies rounding error with matrix size. F2D2 and SCALLOP read their divergence head and accumulate $\sum_i (s_i - t_i) \sum_j D_\theta(x_{t_i}, t_i, s_i)_j$.

\textit{Evaluation protocol.} Evaluation is self-normalized importance sampling against the Boltzmann target, with $10^6$ proposals per seed over 3 seeds, scored on the test split. The one exception is FALCON and FALCON-A on chignolin, where the exact per-step Jacobian costs ${\sim}500$ GPU-hours per seed at $10^6$ proposals, and the pool is $10^5$ instead.  Importance weights are computed on the original proposals using their proposal densities and Boltzmann energies. We then resample $10^4$ configurations according to the normalized weights and apply post-hoc MALA steps to each selected configuration (see \Cref{tab:posthoc_mala_steps} for the number of steps). The refined configurations retain equal weights, with no post-MALA importance-weight computation, and are compared against $10^4$ reference configurations. A MALA kernel that leaves the Boltzmann target invariant can be applied before resampling while retaining the pre-move importance weights, or after resampling while retaining equal weights. Both constructions preserve the target in the importance-sampling limit, although their finite-sample dependence and variance can differ. The unconditional SNIS baseline follows the same order. No centre-of-mass correction is applied, unlike the TarFlow/StarFlow baselines: these models are mean-free end to end and no centre-of-mass augmentation enters their training, so their proposal density carries no centroid factor to correct.

\begin{table}[t]
    \centering
    \caption{Flow-map inference steps (NFEs), shared by all four methods. The four ALA values are the ones both papers report; chignolin appears in neither and reuses the largest.}
    \label{tab:hparam_flowmap_nfe}
    \small
    \begin{tabular}{lccccc}
        \toprule
         & ALA-2 & ALA-3 & ALA-4 & ALA-6 & Chignolin \\
        \midrule
        NFE & 4 & 8 & 8 & 16 & 16 \\
        \bottomrule
    \end{tabular}
\end{table}

\paragraph{Batched evaluation and chunk sizes.} Every sampler above draws and scores its pool in chunks rather than at once, and the chunk size is a throughput and memory setting that no reported number depends on. \Cref{tab:hparam_chunking} collects all of them.

\begin{table*}[t]
    \centering
    \caption{Evaluation chunk sizes, in rows, on an A100-80GB. The SNIS and \ourmethod{} architectures are selected per system based on validation metrics. \ourmethod{} variant labels (T1, W1, S1, S1k, S2) are defined in \Cref{tab:hparam_nfm_arch_sweep}.}
    \label{tab:hparam_chunking}
    \small
    \begin{tabular}{lccccc}
        \toprule
         & ALA-2 & ALA-3 & ALA-4 & ALA-6 & Chignolin \\
        \midrule
        \multicolumn{6}{l}{\textit{SNIS: Unconditional TarFlow/StarFlow}} \\
        Selected flow            & TarFlow & TarFlow & StarFlow & StarFlow & TarFlow \\
        Sample                   & 131072 & 131072 & 65536 & 32768 & 16384 \\
        Log-prob                 & 131072 & 131072 & 32768 & 32768 & 32768 \\
        \midrule
        \multicolumn{6}{l}{\textit{\ourmethod{} (measured knee, bf16)}} \\
        Selected variant (IS / MIS)  & W1 & W1 & T1 & S2 / S1k & S1 \\
        Sample              & 32768 & 65536 & 65536 & 16384 & 4096 \\
        Log-prob            & 4096 & 8192 & 8192 & 4096 & 256 \\
        \ourmethod{}-IS chunk         & 32768 & 65536 & 65536 & 16384 & 4096 \\
        \ourmethod{}-MIS chunk        & 32768 & 65536 & 65536 & 16384 & 1024 \\
        \midrule
        \multicolumn{6}{l}{\textit{Flow maps (sample with log-prob, bf16)}} \\
        FALCON / FALCON-A        & 4096 & 1024 & 1024 & 256 & 16 \\
        F2D2 / SCALLOP           & 8192 & 8192 & 8192 & 8192 & 8192 \\
        \midrule
        \multicolumn{6}{l}{\textit{Velocity teacher (exact divergence, fp32)}} \\
        Sample with log-prob     & 64 & 48 & 32 & 32 & -- \\
        \bottomrule
    \end{tabular}
\end{table*}

\paragraph{Comparing the distilled to their teacher.}

\begin{table*}[t]
    \centering
    \caption{Flow-map samplers against the velocity field they are distilled from, on ALA-$N$ systems with $N=2,3,4,6$. Results are obtained from three random seeds, reported as \texttt{mean} $\pm$ \texttt{std}. Energy and torus Wasserstein distances are computed on the test split against $10^4$ reference samples.}
    \scriptsize
    \label{tab:flow_map_vs_velocity}

    \renewcommand{\arraystretch}{1.5}
    \setlength{\tabcolsep}{1.5pt}

    \begin{tabular}{c l c c c c c}
        \cmidrule[0.8pt]{1-7}
        \textbf{} & & {\footnotesize \textbf{Velocity}} & {\footnotesize FALCON} & {\footnotesize FALCON-A} & {\footnotesize F2D2} & {\footnotesize SCALLOP} \\
        \cmidrule[0.5pt]{1-7}

        \multirow{4}{*}{\begin{tabular}[c]{@{}c@{}}\textbf{ALA-2} \\ ($d=66$)\end{tabular}}
        & $\mathcal{E}$-$\mathcal{W}_2$ $\downarrow$ & $0.765 \err{0.097}$ & $0.787 \err{0.091}$ & $0.966 \err{0.285}$ & $1.132 \err{0.018}$ & $1.110 \err{0.014}$ \\
        & $\mathbb{T}$-$\mathcal{W}_2$ $\downarrow$ & $0.274 \err{0.032}$ & $0.524 \err{0.212}$ & $0.892 \err{0.932}$ & $0.774 \err{0.059}$ & $0.391 \err{0.029}$ \\
        & Inference (s) $\downarrow$ & $1375.3 \err{10.4}$ & $1477.6 \err{2.0}$ & $1461.3 \err{11.9}$ & $70.6 \err{0.1}$ & $71.5 \err{0.6}$ \\
        & Training (h) $\downarrow$ & $1.9$ & $4.8$ & $5.0$ & $6.3$ & $5.6$ \\
        \cmidrule{1-7}

        \multirow{4}{*}{\begin{tabular}[c]{@{}c@{}}\textbf{ALA-3} \\ ($d=99$)\end{tabular}}
        & $\mathcal{E}$-$\mathcal{W}_2$ $\downarrow$ & $0.410 \err{0.219}$ & $0.250 \err{0.031}$ & $0.445 \err{0.410}$ & $0.648 \err{0.211}$ & $0.596 \err{0.448}$ \\
        & $\mathbb{T}$-$\mathcal{W}_2$ $\downarrow$ & $0.677 \err{0.168}$ & $0.468 \err{0.144}$ & $1.102 \err{0.540}$ & $1.072 \err{0.370}$ & $0.946 \err{0.117}$ \\
        & Inference (s) $\downarrow$ & $2956.8 \err{3.0}$ & $7832.2 \err{7.4}$ & $7813.1 \err{5.2}$ & $158.1 \err{0.1}$ & $158.9 \err{0.9}$ \\
        & Training (h) $\downarrow$ & $1.7$ & $7.1$ & $7.2$ & $6.5$ & $5.8$ \\
        \cmidrule{1-7}

        \multirow{4}{*}{\begin{tabular}[c]{@{}c@{}}\textbf{ALA-4} \\ ($d=126$)\end{tabular}}
        & $\mathcal{E}$-$\mathcal{W}_2$ $\downarrow$ & $1.410 \err{0.380}$ & $1.556 \err{0.164}$ & $2.375 \err{1.573}$ & $3.477 \err{2.242}$ & $3.306 \err{1.521}$ \\
        & $\mathbb{T}$-$\mathcal{W}_2$ $\downarrow$ & $2.133 \err{0.370}$ & $0.948 \err{0.125}$ & $1.532 \err{0.929}$ & $2.564 \err{0.060}$ & $2.490 \err{0.142}$ \\
        & Inference (s) $\downarrow$ & $4299.4 \err{7.3}$ & $12172.3 \err{0.2}$ & $12155.4 \err{11.9}$ & $191.4 \err{0.1}$ & $192.5 \err{0.7}$ \\
        & Training (h) $\downarrow$ & $1.6$ & $7.1$ & $7.1$ & $5.6$ & $5.1$ \\
        \cmidrule{1-7}

        \multirow{4}{*}{\begin{tabular}[c]{@{}c@{}}\textbf{ALA-6} \\ ($d=189$)\end{tabular}}
        & $\mathcal{E}$-$\mathcal{W}_2$ $\downarrow$ & $2.339 \err{1.352}$ & $0.652 \err{0.137}$ & $0.980 \err{0.488}$ & $2.749 \err{2.219}$ & $2.216 \err{0.248}$ \\
        & $\mathbb{T}$-$\mathcal{W}_2$ $\downarrow$ & $2.696 \err{0.557}$ & $1.829 \err{0.291}$ & $1.659 \err{0.008}$ & $3.111 \err{0.261}$ & $3.300 \err{0.410}$ \\
        & Inference (s) $\downarrow$ & $9489.6 \err{73.7}$ & $59606.9 \err{40.4}$ & $59610.4 \err{50.4}$ & $492.3 \err{0.6}$ & $484.4 \err{13.2}$ \\
        & Training (h) $\downarrow$ & $3.3$ & $15.1$ & $15.1$ & $10.3$ & $9.8$ \\
        \cmidrule[0.8pt]{1-7}
    \end{tabular}
\end{table*}

\Cref{tab:flow_map_vs_velocity} compares SNIS performed directly against the velocity teacher with SNIS performed against each of its four distilled students, on the four ALA systems. The teacher itself is a strong sampler in its own right (competitive with the global comparison of \Cref{tab:wasserstein}) and on ALA-2 it beats all four flow maps outright. On the larger systems the picture is mixed rather than uniformly closing: it is specifically the single-head family, FALCON and FALCON-A, that occasionally overtakes the teacher, while the dual-head family essentially never does. Since the teacher is already a competitive sampler, these occasional wins are not evidence that it was undertrained.

The inference-time column looks paradoxical: the flow maps draw $100\times$ more particles than the teacher ($10^6$ against $10^4$ as $10^6$ was unfeasible with the velocity teacher) while taking roughly $10\times$ fewer steps (4--16 NFE against $\sim$130--165, \Cref{tab:hparam_flowmap_nfe}). For FALCON and FALCON-A these two factors largely cancel: their dense $d\times d$ Jacobian log-determinant, paid at every step, erases the step-count saving, leaving wall-clock inference never faster than the teacher and up to $6.3\times$ slower on ALA-6. F2D2 and SCALLOP avoid that cost by reading a divergence head instead, so the step-count saving survives the extra particles: both are consistently $19$--$22\times$ faster than the teacher across all four systems.

\paragraph{Cost of the Langevin refinements.} Every sampler reported here spends part of its inference budget on Langevin dynamics, and at a matched target-energy budget the two kernels enter at the same scale. \ourmethod{} pays both: 512 MAExL steps at every level, over the full particle set, plus 128 steps of post-hoc MALA on its (comparatively small) returned sample. Every other method pays only the post-hoc chain, but at the step count that equalises the budget (\Cref{tab:posthoc_mala_steps}). \Cref{tab:hparam_langevin_cost} gives the measured wall-clock and its share of each sampler's inference time.

\begin{table*}[t]
    \centering
    \caption{Langevin refinement cost at a matched target-energy budget: 512 MAExL steps for \ourmethod{} and the budget-matched post-hoc MALA ladder for all other methods. Each entry gives the wall-clock time spent in the Langevin kernels and its share of the sampler's total inference time (means over 3 seeds, A100-80GB).}
    \label{tab:hparam_langevin_cost}
    \small
    \begin{tabular}{lccccc}
        \toprule
         & ALA-2 & ALA-3 & ALA-4 & ALA-6 & Chignolin \\
        \midrule
        \multicolumn{6}{l}{\textit{In-loop MAExL (512 steps) and post-hoc MALA (128 steps)}} \\
        NF$^2$M-IS           & 6 (4.1\%) & 14 (8.6\%) & 17 (9.4\%) & 23 (5.0\%) & 108 (3.8\%) \\
        NF$^2$M-MIS          & 9 (7.0\%) & 14 (9.5\%) & 17 (10.7\%) & 36 (8.8\%) & 112 (5.0\%) \\
        \midrule
        \multicolumn{6}{l}{\textit{Budget-matched post-hoc MALA only (\Cref{tab:posthoc_mala_steps})}} \\
        TarFlow/StarFlow     & 8 (7.2\%) & 14 (10.6\%) & 17 (10.1\%) & 35 (9.3\%) & 166 (6.8\%) \\
        Velocity             & 435 (24.2\%) & 322 (9.8\%) & 470 (9.9\%) & 499 (5.0\%) & -- \\
        FALCON               & 8 (0.6\%) & 14 (0.2\%) & 17 (0.1\%) & 35 (0.1\%) & 168 (0.1\%) \\
        FALCON-A             & 8 (0.6\%) & 13 (0.2\%) & 17 (0.1\%) & 35 (0.1\%) & 168 (0.1\%) \\
        F2D2                 & 8 (18.5\%) & 13 (11.5\%) & 17 (12.0\%) & 35 (8.6\%) & 167 (3.8\%) \\
        SCALLOP              & 8 (18.3\%) & 13 (11.5\%) & 17 (11.9\%) & 35 (8.6\%) & 167 (3.8\%) \\
        \bottomrule
    \end{tabular}
\end{table*}

\paragraph{Skipping post-hoc MALA steps.}

\begin{table*}[t]
    \centering
    \caption{Results on ALA-$N$ systems, with $N=2,3,4,6$, and on chignolin, with and without post-hoc MALA refinement, at a matched target-energy budget \Cref{tab:posthoc_mala_steps}. Results are obtained from three random seeds, reported as \texttt{mean} $\pm$ \texttt{std}. Energy and torus Wasserstein distances are computed on the test split against $10^4$ reference samples. The proposal budget differs by method: $10^4$ samples for \ourmethod{}, $10^5$ for FALCON and FALCON-A on chignolin, and $10^6$ for every other column. Inference time covers sampling, plus the post-hoc ladder for the \textsc{MALA} rows. Best in \textbf{bold}, second best \underline{underlined}, within each row.}
    \label{tab:post_hoc_mala}
    \tiny
    \renewcommand{\arraystretch}{1.3}
    \setlength{\tabcolsep}{1.3pt}

    \resizebox{\linewidth}{!}{%
    \begin{tabular}{c l c H H c c c c}
        \cmidrule[0.8pt]{1-9}
        \textbf{} & & {\footnotesize SNIS} & {\footnotesize \ourmethod{}-IS} & {\footnotesize \ourmethod{}-MIS} & {\footnotesize FALCON} & {\footnotesize FALCON-A} & {\footnotesize F2D2} & {\footnotesize SCALLOP} \\
        \cmidrule[0.5pt]{1-9}

        \multirow{6}{*}{\begin{tabular}[c]{@{}c@{}}\textbf{ALA-2} \\ ($d=66$)\end{tabular}}
        & $\mathcal{E}$-$\mathcal{W}_2$ MALA $\downarrow$ & $0.77 \err{0.03}$ & $\mathbf{0.13} \err{0.01}$ & $\underline{0.24} \err{0.08}$ & $0.76 \err{0.05}$ & $0.60 \err{0.33}$ & $0.89 \err{0.03}$ & $0.87 \err{0.03}$ \\
        & $\mathcal{E}$-$\mathcal{W}_2$ raw $\downarrow$  & $1.07 \err{0.15}$ & $\mathbf{0.20} \err{0.05}$ & $\underline{0.24} \err{0.07}$ & $0.64 \err{0.07}$ & $1.50 \err{1.05}$ & $2.84 \err{0.42}$ & $2.77 \err{0.36}$ \\
        & $\mathbb{T}$-$\mathcal{W}_2$ MALA $\downarrow$ & $\mathbf{0.24} \err{0.01}$ & $1.05 \err{0.01}$ & $1.24 \err{0.01}$ & $0.52 \err{0.25}$ & $0.86 \err{1.09}$ & $0.75 \err{0.07}$ & $\underline{0.37} \err{0.04}$ \\
        & $\mathbb{T}$-$\mathcal{W}_2$ raw $\downarrow$  & $\mathbf{0.25} \err{0.01}$ & $1.05 \err{0.01}$ & $1.24 \err{0.01}$ & $0.52 \err{0.20}$ & $0.92 \err{0.97}$ & $0.78 \err{0.06}$ & $\underline{0.40} \err{0.03}$ \\
        & Inference MALA (s) $\downarrow$ & $117 \err{0}$ & $137 \err{15}$ & $126 \err{14}$ & $1453 \err{2}$ & $1448 \err{5}$ & $46 \err{0}$ & $46 \err{0}$ \\
        & Inference raw (s) $\downarrow$  & $109 \err{0}$ & $137 \err{15}$ & $126 \err{14}$ & $1444 \err{2}$ & $1440 \err{4}$ & $37 \err{0}$ & $38 \err{0}$ \\
        \cmidrule{1-9}

        \multirow{6}{*}{\begin{tabular}[c]{@{}c@{}}\textbf{ALA-3} \\ ($d=99$)\end{tabular}}
        & $\mathcal{E}$-$\mathcal{W}_2$ MALA $\downarrow$ & $0.27 \err{0.07}$ & $\underline{0.16} \err{0.01}$ & $\mathbf{0.16} \err{0.01}$ & $0.18 \err{0.05}$ & $0.24 \err{0.14}$ & $0.55 \err{0.13}$ & $0.70 \err{0.19}$ \\
        & $\mathcal{E}$-$\mathcal{W}_2$ raw $\downarrow$  & $0.30 \err{0.07}$ & $\underline{0.17} \err{0.03}$ & $\mathbf{0.14} \err{0.02}$ & $0.65 \err{0.21}$ & $3.87 \err{2.64}$ & $2.30 \err{0.92}$ & $1.73 \err{0.53}$ \\
        & $\mathbb{T}$-$\mathcal{W}_2$ MALA $\downarrow$ & $\mathbf{0.36} \err{0.02}$ & $0.49 \err{0.02}$ & $0.47 \err{0.01}$ & $\underline{0.46} \err{0.17}$ & $1.10 \err{0.71}$ & $1.05 \err{0.48}$ & $0.85 \err{0.20}$ \\
        & $\mathbb{T}$-$\mathcal{W}_2$ raw $\downarrow$  & $\mathbf{0.38} \err{0.01}$ & $0.49 \err{0.02}$ & $\underline{0.47} \err{0.01}$ & $0.48 \err{0.15}$ & $1.13 \err{0.55}$ & $1.12 \err{0.38}$ & $1.00 \err{0.10}$ \\
        & Inference MALA (s) $\downarrow$ & $128 \err{0}$ & $164 \err{0}$ & $147 \err{13}$ & $7792 \err{9}$ & $7789 \err{6}$ & $117 \err{0}$ & $117 \err{0}$ \\
        & Inference raw (s) $\downarrow$  & $115 \err{0}$ & $164 \err{0}$ & $147 \err{13}$ & $7778 \err{7}$ & $7775 \err{5}$ & $104 \err{0}$ & $104 \err{0}$ \\
        \cmidrule{1-9}

        \multirow{6}{*}{\begin{tabular}[c]{@{}c@{}}\textbf{ALA-4} \\ ($d=126$)\end{tabular}}
        & $\mathcal{E}$-$\mathcal{W}_2$ MALA $\downarrow$ & $1.52 \err{0.35}$ & $\mathbf{0.99} \err{0.03}$ & $\underline{1.11} \err{0.01}$ & $1.76 \err{0.19}$ & $1.99 \err{0.74}$ & $2.75 \err{2.10}$ & $2.98 \err{0.97}$ \\
        & $\mathcal{E}$-$\mathcal{W}_2$ raw $\downarrow$  & $2.25 \err{1.26}$ & $\mathbf{0.95} \err{0.14}$ & $\underline{1.00} \err{0.06}$ & $2.09 \err{0.33}$ & $4.40 \err{4.38}$ & $6.37 \err{1.22}$ & $5.36 \err{1.95}$ \\
        & $\mathbb{T}$-$\mathcal{W}_2$ MALA $\downarrow$ & $\mathbf{0.78} \err{0.13}$ & $\underline{0.82} \err{0.04}$ & $0.88 \err{0.05}$ & $0.89 \err{0.14}$ & $1.46 \err{1.09}$ & $2.43 \err{0.12}$ & $2.37 \err{0.19}$ \\
        & $\mathbb{T}$-$\mathcal{W}_2$ raw $\downarrow$  & $\underline{0.86} \err{0.13}$ & $\mathbf{0.83} \err{0.04}$ & $0.88 \err{0.05}$ & $1.00 \err{0.13}$ & $1.59 \err{0.96}$ & $2.66 \err{0.04}$ & $2.59 \err{0.13}$ \\
        & Inference MALA (s) $\downarrow$ & $167 \err{5}$ & $184 \err{14}$ & $162 \err{14}$ & $12124 \err{1}$ & $12118 \err{5}$ & $143 \err{0}$ & $143 \err{0}$ \\
        & Inference raw (s) $\downarrow$  & $150 \err{5}$ & $184 \err{14}$ & $162 \err{14}$ & $12107 \err{0}$ & $12101 \err{4}$ & $126 \err{0}$ & $126 \err{0}$ \\
        \cmidrule{1-9}

        \multirow{6}{*}{\begin{tabular}[c]{@{}c@{}}\textbf{ALA-6} \\ ($d=189$)\end{tabular}}
        & $\mathcal{E}$-$\mathcal{W}_2$ MALA $\downarrow$ & $0.92 \err{1.02}$ & $1.02 \err{0.06}$ & $1.30 \err{0.09}$ & $\mathbf{0.61} \err{0.41}$ & $\underline{0.68} \err{0.49}$ & $2.16 \err{1.72}$ & $3.77 \err{1.72}$ \\
        & $\mathcal{E}$-$\mathcal{W}_2$ raw $\downarrow$  & $2.66 \err{1.97}$ & $\mathbf{1.05} \err{0.13}$ & $\underline{1.36} \err{0.09}$ & $2.43 \err{0.88}$ & $1.70 \err{0.23}$ & $13.33 \err{1.61}$ & $11.58 \err{1.46}$ \\
        & $\mathbb{T}$-$\mathcal{W}_2$ MALA $\downarrow$ & $1.77 \err{0.68}$ & $\mathbf{0.99} \err{0.03}$ & $\underline{1.03} \err{0.02}$ & $1.72 \err{0.33}$ & $1.56 \err{0.03}$ & $2.93 \err{0.29}$ & $3.13 \err{0.46}$ \\
        & $\mathbb{T}$-$\mathcal{W}_2$ raw $\downarrow$  & $1.95 \err{0.74}$ & $\mathbf{0.99} \err{0.03}$ & $\underline{1.03} \err{0.02}$ & $1.91 \err{0.29}$ & $1.74 \err{0.01}$ & $3.24 \err{0.26}$ & $3.44 \err{0.43}$ \\
        & Inference MALA (s) $\downarrow$ & $373 \err{0}$ & $453 \err{14}$ & $406 \err{14}$ & $59502 \err{50}$ & $59528 \err{58}$ & $405 \err{1}$ & $402 \err{5}$ \\
        & Inference raw (s) $\downarrow$  & $338 \err{0}$ & $452 \err{14}$ & $405 \err{14}$ & $59467 \err{40}$ & $59493 \err{47}$ & $370 \err{1}$ & $367 \err{4}$ \\
        \cmidrule{1-9}

        \multirow{6}{*}{\begin{tabular}[c]{@{}c@{}}\textbf{Chignolin} \\ ($d=414$)\end{tabular}}
        & $\mathcal{E}$-$\mathcal{W}_2$ MALA $\downarrow$ & $7.75 \err{5.29}$ & $\mathbf{4.33} \err{0.26}$ & $6.11 \err{0.13}$ & $4.81 \err{1.07}$ & $5.03 \err{3.62}$ & $4.79 \err{2.34}$ & $\underline{4.77} \err{1.08}$ \\
        & $\mathcal{E}$-$\mathcal{W}_2$ raw $\downarrow$  & $27.76 \err{19.91}$ & $\mathbf{4.05} \err{0.27}$ & $\underline{6.11} \err{0.23}$ & $8.69 \err{2.25}$ & $12.00 \err{3.43}$ & $26.12 \err{11.05}$ & $46.48 \err{16.13}$ \\
        & $\mathbb{T}$-$\mathcal{W}_2$ MALA $\downarrow$ & $3.86 \err{0.17}$ & $\underline{2.60} \err{0.05}$ & $\mathbf{2.58} \err{0.02}$ & $2.99 \err{0.09}$ & $3.45 \err{0.62}$ & $3.93 \err{0.45}$ & $3.57 \err{0.20}$ \\
        & $\mathbb{T}$-$\mathcal{W}_2$ raw $\downarrow$  & $4.22 \err{0.14}$ & $\underline{2.60} \err{0.05}$ & $\mathbf{2.58} \err{0.02}$ & $3.33 \err{0.08}$ & $3.76 \err{0.53}$ & $4.29 \err{0.37}$ & $3.94 \err{0.22}$ \\
        & Inference MALA (s) $\downarrow$ & $2443 \err{2}$ & $2831 \err{17}$ & $2265 \err{1}$ & $169239 \err{171}$ & $169131 \err{151}$ & $4380 \err{4}$ & $4384 \err{1}$ \\
        & Inference raw (s) $\downarrow$  & $2277 \err{2}$ & $2826 \err{17}$ & $2259 \err{1}$ & $169071 \err{140}$ & $168963 \err{124}$ & $4214 \err{3}$ & $4218 \err{1}$ \\
        \cmidrule[0.8pt]{1-9}
    \end{tabular}%
    }
\end{table*}

\Cref{tab:post_hoc_mala} reports every sampler at matched budget, both with its post-hoc MALA ladder ($128$ steps for \ourmethod{}, \Cref{tab:posthoc_mala_steps} for the others) and on its raw output. With refinement, \ourmethod{} is best in energy on every system except ALA-6, and best in torsion on ALA-6 and chignolin. Without it, \ourmethod{} is best in energy on every system and in torsion on the three largest. Its own numbers barely depend on the ladder: torsion is unchanged to within $0.001$ and energy moves by small amounts in both directions. The baselines behave in the opposite way. Most of their energy accuracy comes from the ladder, their raw distances are up to an order of magnitude worse on the larger systems, and for several of them the raw seed standard deviation is comparable to the mean. FALCON's refined energy win on ALA-6 in particular does not survive: its raw distance is more than twice that of \ourmethod{}-IS, even though FALCON runs over a hundred times longer. In torsion space, the ladder improves the baselines by at most about $15\%$ and leaves their order unchanged, so the shape of that comparison is a property of the proposals themselves.

\paragraph{Matching the target-energy budget.}

\begin{figure*}[t]
    \centering
    \includegraphics[width=\linewidth]{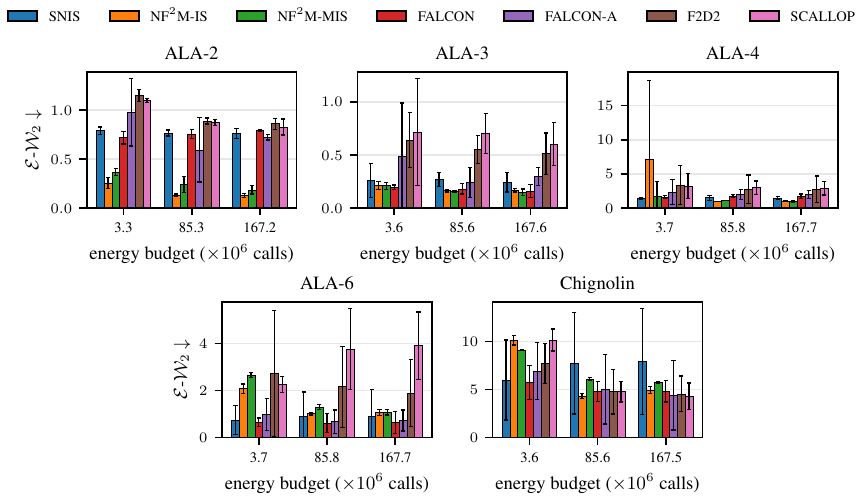}
    \caption{Energy Wasserstein distance against the matched target-energy budget, on the four ALA systems and chignolin. Every point on a panel spends the same number of target-energy calls. The three budgets are the 0-, 512- and 1024-MAExL-step settings of \Cref{tab:maexl_ablation} for \ourmethod{}, and the equalizing post-hoc MALA ladders of \Cref{tab:posthoc_mala_steps} for the SNIS-based samplers. Means over three seeds.}
    \label{fig:budget_match_energy}
\end{figure*}

\begin{figure*}[t]
    \centering
    \includegraphics[width=\linewidth]{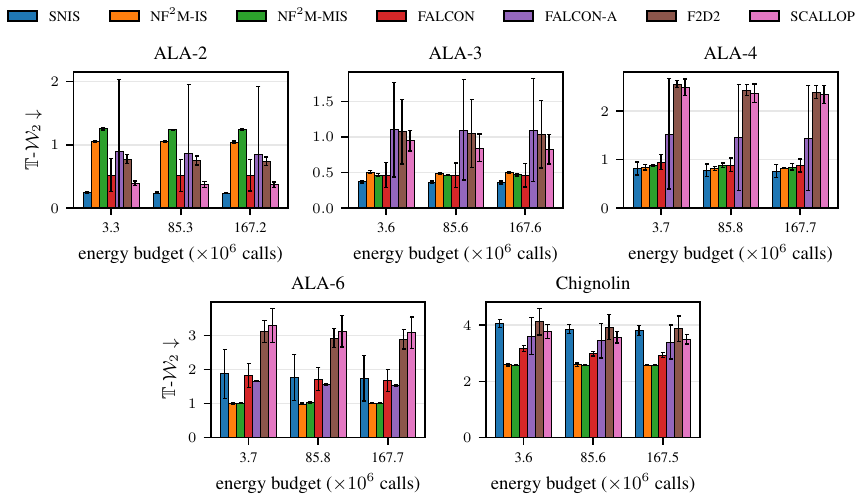}
    \caption{Torus Wasserstein distance against the matched target-energy budget, same protocol and seeds as \Cref{fig:budget_match_energy}.}
    \label{fig:budget_match_torsion}
\end{figure*}

\begin{figure*}[t]
    \centering
    \includegraphics[width=\linewidth]{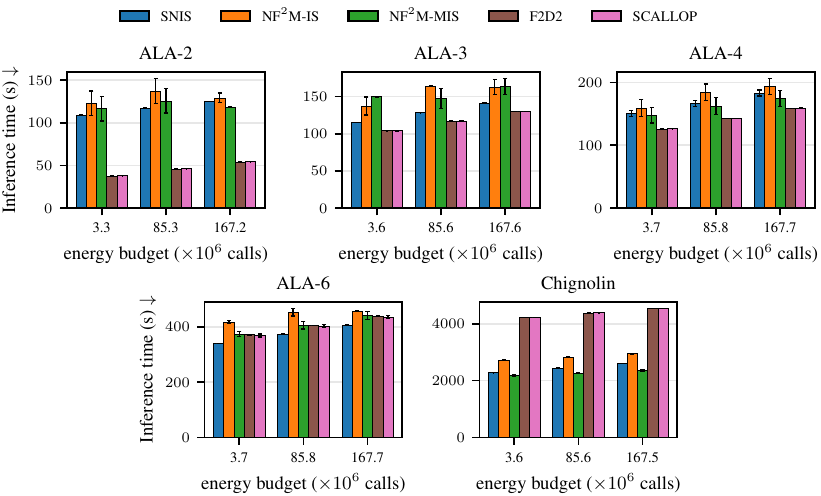}
    \caption{Inference wall-clock against the matched target-energy budget, same protocol and seeds as \Cref{fig:budget_match_energy}. FALCON and FALCON-A are omitted, their per-step Jacobian log-determinant putting them off the scale of every panel.}
    \label{fig:budget_match_time}
\end{figure*}

\begin{table}[t]
    \centering
    \caption{Number of post-hoc MALA steps run for the SNIS-based samplers at each matched budget. Each step count is chosen so that the SNIS-based total, $P$ energy calls for the importance weights plus $n\,(S+1)$ score calls on $n = 10^4$ chains with one score call counted as two energy calls, matches the budget of the corresponding \ourmethod{} setting. All SNIS-based samplers use $P = 10^6$ proposals, except FALCON and FALCON-A on chignolin, which use $P = 10^5$ and therefore take longer ladders for the same total.}
    \label{tab:posthoc_mala_steps}
    \small
    \renewcommand{\arraystretch}{1.2}
    \setlength{\tabcolsep}{6pt}
    \begin{tabular}{l l c c c}
        \cmidrule[0.8pt]{1-5}
        & & \multicolumn{3}{c}{MAExL steps} \\
        \cmidrule(lr){3-5}
        \textbf{System} & \textbf{Methods} & 0 & 512 & 1024 \\
        \cmidrule[0.5pt]{1-5}
        ALA-2 ($d=66$)  & \multirow{4}{*}{\begin{tabular}[c]{@{}l@{}}SNIS, FALCON, FALCON-A,\\ F2D2, SCALLOP\end{tabular}} & 114 & 4212 & 8308 \\
        ALA-3 ($d=99$)  & & 127 & 4231 & 8327 \\
        ALA-4 ($d=126$) & & 135 & 4239 & 8335 \\
        ALA-6 ($d=189$) & & 134 & 4238 & 8334 \\
        \cmidrule{1-5}
        \multirow{2}{*}{Chignolin ($d=414$)} & SNIS, F2D2, SCALLOP & 129 & 4228 & 8324 \\
                                             & FALCON, FALCON-A    & 174 & 4273 & 8369 \\
          \cmidrule{1-5}
          ALA-2 ($d=66$) & \multirow{4}{*}{Velocity teacher} & 163 & 4261 & 8357 \\
          ALA-3 ($d=99$) & & 176 & 4280 & 8376 \\
          ALA-4 ($d=126$) & & 184 & 4288 & 8384 \\
          ALA-6 ($d=189$) & & 183 & 4287 & 8383 \\
        \cmidrule[0.8pt]{1-5}
    \end{tabular}
\end{table}

Since MAExL and the post-hoc ladder both query the target, wall-clock alone does not settle whether \ourmethod{} simply spends more energy evaluations than the baselines. We therefore equalize the count exactly at three operating points, one per MAExL setting of \Cref{tab:maexl_ablation}, re-polishing every SNIS-based sampler from its unrefined samples with the ladder length that spends the same number of target-energy calls, counting a score as two. \Cref{tab:posthoc_mala_steps} gives those lengths, and \Cref{fig:budget_match_energy,fig:budget_match_torsion,fig:budget_match_time} trace every method across the three budgets. The leftmost point handicaps \ourmethod{}, since matching it means running with no MAExL at all, precisely the setting the ablation identifies as its weakest, while the baselines lose little from a shortened chain. From the middle budget onward \ourmethod{} recovers its \Cref{tab:wasserstein} numbers and with them the ranking of the main table, whereas the extra calls granted to the baselines buy energy accuracy only: their torsion curves are close to flat and stay well behind on the larger systems, since a local kernel cannot redistribute mass between torsional metastable states however long it runs. The one place the verdict shifts is chignolin energy at the largest budget, where several baselines become indistinguishable from \ourmethod{} in the mean, though their across-seed spread remains far wider. Matching energy calls also does not equalize wall-clock, and the gap widens with the budget, since a longer ladder is pure serial cost for the baselines while \ourmethod{} pays its share inside the sampling loop. The conclusion of the main table is therefore unchanged: the separation lies in the proposal rather than in the refinement budget.

\paragraph{Comparison with SBG.}

\begin{table*}[t]
    \centering
    \caption{Results on ALA-$N$ systems, with $N=2,3,4,6$, and on chignolin. Results are obtained from three random seeds, reported as \texttt{mean} $\pm$ \texttt{std}. Energy and torus Wasserstein distances are computed on the test split against $10^4$ reference samples. The proposal budget differs by method: $10^4$ samples for \ourmethod{} and SBG-SMC, and $10^6$ for SNIS. The SNIS column reports, per system, the better of the TarFlow and StarFlow checkpoints; the SBG-SMC columns are reported per architecture. Inference time covers sampling and the post-hoc MALA ladder; training time is the wall-clock of the corresponding model's training run. Best in \textbf{bold}, second best \underline{underlined}.}
    \scriptsize
    \label{tab:results_sbg}

    \renewcommand{\arraystretch}{1.5}
    \setlength{\tabcolsep}{1.5pt}

    \resizebox{\linewidth}{!}{%
    \begin{tabular}{c l c H H c c}
        \cmidrule[0.8pt]{1-7}
        \textbf{} & & {\small SNIS} & {\small \ourmethod{}-IS} & {\small \ourmethod{}-MIS} & {\small SBG-SMC {\tiny StarFlow}} & {\small SBG-SMC {\tiny TarFlow}} \\
        \cmidrule[0.5pt]{1-7}

        \multirow{4}{*}{\begin{tabular}[c]{@{}c@{}}\textbf{ALA-2} \\ ($d=66$)\end{tabular}}
        & $\mathcal{E}$-$\mathcal{W}_2$ $\downarrow$ & $0.752 \err{0.027}$ & $\mathbf{0.132} \err{0.013}$ & $\underline{0.242} \err{0.080}$ & $0.737 \err{0.142}$ & $0.686 \err{0.042}$ \\
        & $\mathbb{T}$-$\mathcal{W}_2$ $\downarrow$ & $\mathbf{0.245} \err{0.014}$ & $1.050 \err{0.011}$ & $1.239 \err{0.006}$ & $0.387 \err{0.075}$ & $\underline{0.385} \err{0.021}$ \\
        & Inference (s) $\downarrow$ & $128.8 \err{0.4}$ & $137.2 \err{14.5}$ & $125.8 \err{14.4}$ & $242.1 \err{9.7}$ & $263.2 \err{11.4}$ \\
        & Training (h) $\downarrow$ & $3.4$ & $4.3$ & $4.3$ & $3.6$ & $3.4$ \\
        \cmidrule{1-7}

        \multirow{4}{*}{\begin{tabular}[c]{@{}c@{}}\textbf{ALA-3} \\ ($d=99$)\end{tabular}}
        & $\mathcal{E}$-$\mathcal{W}_2$ $\downarrow$ & $0.225 \err{0.047}$ & $\underline{0.164} \err{0.011}$ & $\mathbf{0.157} \err{0.009}$ & $2.715 \err{3.857}$ & $0.524 \err{0.072}$ \\
        & $\mathbb{T}$-$\mathcal{W}_2$ $\downarrow$ & $\mathbf{0.371} \err{0.016}$ & $0.490 \err{0.016}$ & $\underline{0.465} \err{0.007}$ & $1.256 \err{1.256}$ & $0.470 \err{0.044}$ \\
        & Inference (s) $\downarrow$ & $163.1 \err{0.3}$ & $164.1 \err{0.4}$ & $147.2 \err{13.1}$ & $282.5 \err{9.6}$ & $288.4 \err{9.5}$ \\
        & Training (h) $\downarrow$ & $4.7$ & $6.0$ & $6.0$ & $5.3$ & $4.7$ \\
        \cmidrule{1-7}

        \multirow{4}{*}{\begin{tabular}[c]{@{}c@{}}\textbf{ALA-4} \\ ($d=126$)\end{tabular}}
        & $\mathcal{E}$-$\mathcal{W}_2$ $\downarrow$ & $1.313 \err{0.159}$ & $\mathbf{0.988} \err{0.027}$ & $\underline{1.107} \err{0.013}$ & $1.927 \err{0.096}$ & $1.734 \err{0.138}$ \\
        & $\mathbb{T}$-$\mathcal{W}_2$ $\downarrow$ & $\underline{0.819} \err{0.134}$ & $0.824 \err{0.041}$ & $0.882 \err{0.051}$ & $0.896 \err{0.035}$ & $\mathbf{0.789} \err{0.093}$ \\
        & Inference (s) $\downarrow$ & $206.0 \err{7.3}$ & $184.3 \err{13.5}$ & $162.3 \err{13.6}$ & $337.2 \err{0.3}$ & $357.7 \err{10.5}$ \\
        & Training (h) $\downarrow$ & $5.8$ & $5.3$ & $5.3$ & $5.8$ & $5.2$ \\
        \cmidrule{1-7}

        \multirow{4}{*}{\begin{tabular}[c]{@{}c@{}}\textbf{ALA-6} \\ ($d=189$)\end{tabular}}
        & $\mathcal{E}$-$\mathcal{W}_2$ $\downarrow$ & $0.714 \err{0.584}$ & $1.018 \err{0.063}$ & $1.301 \err{0.092}$ & $\mathbf{0.249} \err{0.116}$ & $\underline{0.395} \err{0.182}$ \\
        & $\mathbb{T}$-$\mathcal{W}_2$ $\downarrow$ & $1.874 \err{0.724}$ & $\mathbf{0.990} \err{0.026}$ & $\underline{1.027} \err{0.021}$ & $1.355 \err{0.061}$ & $1.797 \err{0.068}$ \\
        & Inference (s) $\downarrow$ & $460.9 \err{0.3}$ & $452.8 \err{13.9}$ & $405.6 \err{14.2}$ & $1014.0 \err{11.9}$ & $975.2 \err{11.2}$ \\
        & Training (h) $\downarrow$ & $11.7$ & $12.8$ & $10.2$ & $11.7$ & $10.7$ \\
        \cmidrule{1-7}

        \multirow{4}{*}{\begin{tabular}[c]{@{}c@{}}\textbf{Chignolin} \\ ($d=414$)\end{tabular}}
        & $\mathcal{E}$-$\mathcal{W}_2$ $\downarrow$ & $\underline{5.898} \err{3.986}$ & $\mathbf{4.331} \err{0.262}$ & $6.112 \err{0.129}$ & $8.635 \err{4.105}$ & $14.439 \err{6.417}$ \\
        & $\mathbb{T}$-$\mathcal{W}_2$ $\downarrow$ & $4.056 \err{0.134}$ & $\underline{2.597} \err{0.054}$ & $\mathbf{2.581} \err{0.018}$ & $3.933 \err{0.749}$ & $4.302 \err{0.392}$ \\
        & Inference (s) $\downarrow$ & $2792.2 \err{3.5}$ & $2830.6 \err{17.4}$ & $2264.5 \err{1.1}$ & $26337.6 \err{175.1}$ & $11875.9 \err{178.6}$ \\
        & Training (h) $\downarrow$ & $13.3$ & $21.7$ & $21.7$ & $17.4$ & $13.3$ \\
        \cmidrule[0.8pt]{1-7}
    \end{tabular}%
    }
\end{table*}

\Cref{tab:results_sbg} provides a full comparison against SBG-SMC on both proposal architectures. The two methods trade places depending on the regime. On the smallest peptides, the geometric annealing path is clearly beneficial in torsion space, where SBG-SMC is the strongest baseline, while \ourmethod{} already dominates in energy space. This ordering reverses as the dimension grows: on the largest systems \ourmethod{} improves on the best SBG-SMC variant on both metrics, and the gap is widest on chignolin, where the SMC ladder degrades sharply. Two further properties separate the methods. First, SBG-SMC is markedly less stable, both across seeds, with several runs whose standard deviation is of the order of their mean, whereas \ourmethod{} varies little across seeds and across its two importance-sampling variants. Second, at an equal proposal budget the annealing ladder roughly doubles inference time on the ALA systems and costs several times more on chignolin, for comparable training cost. Finally, since SNIS and SBG-SMC use the same unconditional checkpoints and differ only at inference, their comparison isolates the effect of the annealing path itself. Although the two use different proposal budgets for a comparable wall-clock, the picture matches the ablation of \citet{tan2025scalable}, the ladder helping on some systems and hurting on others rather than yielding a consistent gain.

\stopcontents[appendix]

\end{document}